\newtheorem{theorem}{Theorem}[section]
\newtheorem{lemma}[theorem]{Lemma}
\DeclarePairedDelimiter\floor{\lfloor}{\rfloor}
\newtheorem{assumption}{Assumption}
\newcommand{\vertiii}[1]{{\left\vert\kern-0.25ex\left\vert\kern-0.25ex\left\vert #1 
		\right\vert\kern-0.25ex\right\vert\kern-0.25ex\right\vert}}
\title{Active Ranking of Experts\\
	Based on their Performances in Many Tasks}
\date{}
\author[1]{El Mehdi Saad}
\author[1]{Nicolas Verzelen}
\author[2]{Alexandra Carpentier}
\affil[1]{INRAE, Mistea, Institut Agro, Univ Montpellier, Montpellier, France.}
\affil[2]{Institut für Mathematik, Universität Potsdam, Germany.}
\begin{document}
	\maketitle

	\begin{abstract}
		We consider the problem of ranking $n$ experts based on their performances on $d$ tasks. We make a monotonicity assumption stating that for each pair of experts, one outperforms the other on all tasks. We consider the sequential setting where in each round, the learner has access to noisy evaluations of actively chosen pair of expert-task, given the information available up to the actual round. Given a confidence parameter $\delta \in (0,1)$, we provide strategies allowing to recover the correct ranking of experts and develop a bound on the total number of queries made by our algorithm that hold with probability at least $1-\delta$. We show that our strategy is adaptive to the complexity of the problem (our bounds are instance dependent), and develop matching lower bounds up to a poly-logarithmic factor. Finally, we adapt our strategy to the relaxed problem of best expert identification and provide numerical simulation consistent with our theoretical results.
	\end{abstract}
	
	\section{Introduction}

	Consider the problem of ranking $n$ experts based on noisy evaluations of their performances in $d$ tasks. This problem arises in many modern applications such as recommender systems \citep{zhou2012state} and crowdsourcing \citep{snow2008cheap,raykar2010learning, karger2011iterative,lu2015recommender}, where the objective is to recommend, for instance, films, music, books, etc based on the product ratings. Sports is another field where ranking plays an important role via the task of player ranking based on their data-driven performances \citep{morgulev2018sports,pappalardo2019playerank}. In many situations, it is possible to rank experts in an active fashion by sequentially auditing the performance of a chosen expert on a specific task, based on the information collected previously. 
	
	In this paper, we consider such a sequential and active ranking setting. For a positive integer $N$ we write $\intr{N}=\{1, \dots, N\}$. We consider that the performance of each expert $i \in \intr{n}$ on task $j \in \intr{d}$ is modeled via random variable following an unknown $1$-sub-Gaussain distribution $\nu_{ij}$ 
	with mean $M_{ij}$ - and the matrix $M = (M_{i,j})_{i,j}$ encodes the mean performance of each expert on each task. Samples are collected sequentially in an active way: at each round $t$, the learner chooses a pair expert-task and receives a sample from the corresponding distribution - and we assume that the obtained samples are mutually independent conditional on the chosen distribution.
	
	Our setting is related to the framework of comparison-based ranking considered in the active learning literature \citep{jamieson2011active, heckel2019active}. This literature stream is tightly connected to the \textit{dueling bandit setting} \citep{yue2012k, ailon2014reducing}, where the learner receives binary and noisy feedback on the relative performance of any pair of experts - which is a specific case of our setting when $n = d$. An important decision for setting the ranking problem is then on defining a notion of order between the experts based on the mean performance $M$ - which is ambiguous for a general $M$. In the active ranking literature, as well as in the dueling bandit literature, it is customary to define a notion of order among any two experts, for instance, through a score as e.g.~the \textit{Borda score}, or through another mechanism as e.g.~Condorcet winners. E.g.~the widely used \textit{Borda scores} corresponds, for each expert $i$, to the average performance of expert $i$ on all questions - so that this corresponds to ranking experts based on their average performance across all tasks. 
	We discuss this literature in more detail in Section~\ref{sec:relwork}.

	In this paper, we are inspired by recent advances in the batch literature on this setting - where we receive one sample for each entry expert-task pair. Beyond the historical parametric Bradley-Luce-Terry (BLT) model~\cite{bradley1952rank}, recent papers have proposed batch ranking algorithms in more flexible settings, where no parametric assumption is made, but where a monotonicity assumption, up to some unknown permutation, is made on the matrix $M$. A minimal such assumption was made in~\cite{flammarion2019optimal} where it is assumed that the matrix $M$ is monotone up to a permutation of its lines, namely of the experts. 
	More precisely, they suppose that there is a complete ordering of the $n$ experts, and that expert $i$ is better than expert $k$ if the first outperforms the latter in all tasks:
	\begin{equation}\label{eq:order}
		``~i \quad \accentset{\bullet}{\ge} \quad k~": \quad  \forall j \in \intr{d},~ M_{ij} \ge M_{kj}.
	\end{equation}
	Such a shape constraint is arguably restrictive, yet realistic in many applications - e.g.~when the tasks are of varying difficulty yet of same nature. There have been recently very interesting advances in the batch literature on this setting, through the construction and analysis of efficient algorithm that fully leverage such a shape constraint. And importantly, these approaches vastly overperform on most problems with a naive strategy that would just rank the experts according to their Borda scores. Beyond the Borda score, this remark is true for any fixed score, as the recent approaches mentioned ultimatively resort in adapting the computed scores to the problem in order to put more emphasis on informative questions.
	See Section~\ref{sec:relwork} for an overview of this literature.
	
	
	In this paper, we start from the above remark in the batch setting - namely that in the batch setting and under a monotonicity assumption, ranking according to fixed scores as e.g.~Borda is vastly sub-optimal - and we aim at exploring whether this is also the case in the sequential and active setting. We therefore make the monotonicity assumption of Equation~\eqref{eq:order}, and aim at recovering the exact ranking in the active and online setting described above. More precisely, given a confidence parameter $\delta \in (0,1)$, our objective is to rank all the experts using as few queries as possible and consequently adapt to the unknown distributions $(\nu_{ij})_{ij}$. 
	
	In this paper, we make the following contributions: First,  In Section~\ref{sec:two_e}, we consider the problem of comparing two expert ($n=2$) based on their performances on $d$ tasks under a monotonicity assumption. We provide a sampling strategy and develop distribution dependent upper bound on the total number of queries made by our algorithm to output the correct ranking with probability at least $1-\delta$, $\delta \in (0,1)$ being prescribed confidence.
	We then consider the problem of ranking estimation for a general $n$ in Section~\ref{sec:gen_e}, we use the previous algorithm for pairwise comparison as a building block and provide theoretical guarantees on the correctness of our output and a bound on the total number of queries made that holds with high probability.
	Next, we consider the relaxed objective of identifying the best expert out of $n$, we provide a sampling strategy for this problem and a bound on the query budget that holds with high probability. 
	In Section~\ref{sec:low} we give some instance-dependent lower bounds for the three problems above, showing that all our algorithms are optimal up to a poly-logarithmic factor. In Section~\ref{sec:simu} of the appendix, we make numerical simulations on synthetic data. The proofs of the theorems are in the appendix.

	\section{Problem formulation and notation}
	
	Consider a set of $n$ experts evaluated on $d$ tasks. As noted in the introduction, the performance of each expert $i \in \intr{n}$ on task $j\in \intr{d}$, is modeled via a random variable following an unknown distribution $\nu_{ij}$ with mean $M_{ij}$ - and write $M = (M_{ij})_{i\le n, j\le d}$. We refer to $\nu = (\nu_{ij})_{i\le n, j\le d}$ as the (expert-task) performance distribution, and to $M = (M_{ij})_{i\le n, j\le d}$ as the mean (expert-task) performance. We have two aims in this work: (i) Ranking identification $\bm{(R)}$, namely identifying the permutation that ranks the expert, and (ii) Best-expert identification $\bm{(B)}$, namely identifying the best expert - we will define precisely these two objectives later.
	
	On top of assuming as explained in the introduction the existence of a total ordering of experts following \eqref{eq:order}, we also make an identifiability assumption, so that we have a unique assumption for our problems of ranking $\bm{(R)}$ and best-expert identification $\bm{(B)}$; this is summarised in the following assumption.
	\begin{assumption}\label{assump}
		Suppose that the following assumption holds:
		\begin{itemize}
			\item \textbf{Monotonicity:} there exists a permutation $\pi:\intr{n} \to \intr{n}$, such that $\forall i \in \intr{n-1}, \forall j \in \intr{d}: M_{\pi(i)j} \ge M_{\pi(i+1)j}$.
			\item \textbf{Bounded mean performance:} we assume that the mean performance $M$ takes value in $[0,1]$, namely $M_{i,j} \in [0,1]$ for all $i\in \intr{n}$ and $j\in \intr{d}$.
		\end{itemize} 
		We will then assume one of these two identifiability assumptions, depending on whether we are considering the ranking identification problem $\bm{(R)}$, or the best expert identification problem $\bm{(B)}$.
		\begin{itemize}
			\item \textbf{Identifiability for $\bm{(R)}$:} for $i, k \in \intr{n}$, if $i\neq k$ then $\exists j \in \intr{d}: M_{i,j} \neq M_{k,j}$.
			\item \textbf{Identifiability for $\bm{(B)}$:} for $i \in \{2, \dots, n\}$, $\exists j \in \intr{d}: M_{\pi(i),j} \neq M_{\pi(1),j}$. 
		\end{itemize}
	\end{assumption}
	Note that under the identifiability assumption for $\bm{(R)}$, there exists a unique ranking, and that under the identifiability assumption for $\bm{(B)}$, there exists a unique best expert (and that the identifiability assumption for $\bm{(R)}$ implies the the identifiability assumption for $\bm{(B)}$).
	
	We write $\pi:\intr{n} \to \intr{n}$ for the corresponding permutation such that: $\pi(1)~  \accentset{\bullet}{\ge} ~ \pi(2) ~ \accentset{\bullet}{\ge} ~ \dots ~ \accentset{\bullet}{\ge}~ \pi(n)$. In what follows, we write $\cM_{n,d} \subset \mathbb{R}^{n\times d}$ for the set of matrices $M$ satisfying Assumption~\ref{assump}. Moreover, we are also going to assume in what follows that the samples collected by the learner are sub-Gaussian.
	\begin{assumption}\label{assump:sg}
		For each $i \in \intr{n}, j \in \intr{d}$: $\nu_{ij}$ is $1$-sub-Gaussian.
	\end{assumption}
	It is satisfied for e.g.~random variables taking values in $[0,1]$, or for Gaussian distributions of variance bounded by $1$. 
	
	

	We consider the \textit{fixed confidence setting} presented in the sequential and active identification literature \citep{heckel2019active,kaufmann2016complexity, garivier2016optimal}. At each time $t$, the learner tasks an expert on a task - we write $(I_t, J_t)$ for this pair of expert and task -  based on previous observations, and receive an independent sample $Z_t$ following the distribution of $\nu_{I_tJ_t}$.
	Based on this information, the learner also decides whether it terminates, or continues sampling - and we write $N$ for the termination time, to which we also refer to as number of queries. Upon termination, the learner outputs an estimate, and we consider here the two problems of ranking $\bm{(R)}$, and best-expert identification $\bm{(B)}$: 
	\vspace{-3mm}
	\begin{itemize}
		\item{(i)} Ranking identification $\bm{(R)}$: the learner aims at outputting a ranking $\hat{\pi}$ that estimates $\pi$. For a given confidence parameter $\delta \in (0,1)$, we say that it is $\delta$-accurate for ranking $\bm{(R)}$ if it satisfies:
		\begin{equation*}
			\bm{(R)}: \quad \mathbb{P}( \hat{\pi} \neq \pi) \le \delta
		\end{equation*}
		\item{(ii)} Best-expert identification $\bm{(B)}$: the learner aims at outputting an expert $\hat{k} \in \intr{d}$ that estimates the best performing expert, namely $\pi(1)$. For a given confidence parameter $\delta \in (0,1)$, we say that it is $\delta$-accurate for best-expert identification $\bm{(B)}$ if it satisfies:
		\begin{equation*}
			\bm{(B)}: \quad \mathbb{P}( \hat{k} \neq \pi(1)) \le \delta.
		\end{equation*}
	\end{itemize}
	\vspace{-3mm}
	The performance of any $\delta$-accurate algorithm is then measured through the total number $N$ of queries made when the procedure terminates. The emphasis is then put on developing high probability guarantees on $N$ i.e., bounds on $N$ on the event of probability at least $1-\delta$ where the $\delta$-accurate algorithm is correct .
	
	


	\paragraph{Notation:} Let $A \in \mathbb{R}^{n\times d}$ be a rectangular matrix, for $i \in \intr{n}$, we write $A_i$ for its $i^{\text{th}}$ line. Let $\norm{.}_2$ denote the euclidean norm and $\norm{.}_1$ denotes the $l_1$ norm on $\mathbb{R}^d$. For two numbers $x,y$, denote $x \vee y = \max\{x,y\}$.
	
	\section{Related work}\label{sec:relwork}
	
	\paragraph{Best-arm identification and Top-k bandit problems:} Active ranking is related to many works in the vast literature of identification in the multi-armed bandit (MAB) model \citep{mannor2004sample, kaufmann2016complexity, garivier2016optimal}, where each arm is associated to a univariate random variable. The learner's objective is to build a sampling strategy from the arms' distributions to identify the one with the largest mean - which would resemble our objective of best expert identification. Other related works consider the more general objective of identifying the top-$k$ arms with the largest means - and in the case where the problem is solved for all $k$, it resembles the ranking problem. In this work, we consider a more general setting where instead of having a univariate distribution that characterizes the performance of an expert (akin to an arm in the aforementioned literature), we have a multivariate distribution, corresponding to the $d$ questions.\\
	Many of the previous works rely on a successive elimination approach by discarding arms that are seemingly sub-optimal. This idea is not directly applicable in our setting due to the multi-dimensional aspect of the rewards/performances of each candidate expert. Perhaps, the most natural way to cast our setting into the standard MAB framework is to associate each expert with the average of its performances in all tasks. For expert $i \in \intr{n}$, let $Y_i := \sum_{j=1}^{d} X_{ij}/d$ and  $m_i := \mathbb{E}[Y_i]$ and observe that due to the monotonicity assumption on the matrix $M$, the ranking experts with respect to the means $m_i$ leads to the correct ranking. Moreover, the learner has access to samples of $Y_i$ by sampling the performance of expert $i$ in a task chosen uniformly at random from $d$. Using the last scheme, one can exploit MAB methods to recover the correct ranking. However, we argue that such methods are sub-optimal: consider the problem of comparing two experts associated with distributions $Y_1$ and $Y_2$ defined previously. The minimum number of queries necessary to decide the better expert is of order $\cO(1/(m_1 - m_2)^2) = \cO(d^2/(\sum_{j=1}^{d} M_{1j}-M_{2j})^2)$, in contrast, the procedure presented in Section~\ref{sec:two_e}, decides the optimal expert  using at most $\cO(d/\sum_{j=1}^{d} (M_{1j} - M_{2j})^2)$ up to logarithmic factors. We can show using Jensen's inequality that our bound matches in the worst case the former. However, in general, the improvement can be up to a factor of $d$. This is due to the fact that our strategy uses a more refined choice of tasks to sample from.

	\paragraph{On comparison-based ranking algorithms and duelling bandits:} Many previous works consider the problem of ranking based on comparisons between experts in the online learning literature \citep{ailon2014reducing,chen2020combinatorial,heckel2019active,jamieson2011active,jamieson2015sparse,urvoy2013generic,yue2012k}. For instance, in \citep{heckel2019active}, the authors consider a setting where data consist of noisy binary outcomes of comparisons between pairs of experts. More precisely, the outcome of a queried pair of expert $(i,j)\in \intr{n}\times \intr{n}$ is a Bernoulli variable with mean $M_{ij}$. The experts are ranked with respect to their Borda scores defined for expert $i$ by $\tau_i := \sum_{k \in \intr{n}\setminus \{i\}} M_{ik}/(n-1)$. The authors provide a successive elimination-based procedure leading to optimal guarantees up to logarithmic factors. Their bound on the total number of queries to recover the correct ranking is of order $\tilde{\cO}(\sum_{i=1}^{n-1}/(\tau_{\pi(i)} - \tau_{\pi(i+1)})^2)$, where $\pi$ is the permutation corresponding to the correct ranking. Their setting can be harnessed into ours by considering that $d=n$ and that for each expert $i$, the task $j$ consists of outperforming expert $j$. 
	However, contrary to duelling bandit, our model underlies the additional assumption that experts' performances are monotone which is common in the batch ranking literature -- see the next paragraph. In the duelling bandit framework, our monotonicity assumption is equivalent to the strong stochastic transitivity assumption~\cite{shah2016stochastically}, in the sense that $M_{\pi(i+1),k}\leq M_{\pi(i),k}$ for all $k$. In this work, our main idea is to build upon the monotonoticity assumption to drastically reduce in some problem instances the number of queries. Existing approaches based on fixed scores cannot be optimal on all problem instances as they do not adapt to the problem -  e.g.~applying Borda-scores algorithms in our setting leads to a total number of queries of order $ d^2/(\sum_{j=1}^{d}M_{1,j} - M_{2,j})^2$, which compared to our bound is sub-optimal, with a difference up to a factor $d$.
	
	\paragraph{On batch ranking:} 
	In Batch learning, the problem of ranking has attracted a lot of attention since the seminal work of~\citealp{bradley1952rank}. In this setting, the learner either observes noisy comparisons between experts or the performances of experts in given tasks. Observing that parametric models such as Bradley-Luce-Terry are sometimes too restrictive~\citealp{shah2016stochastically} have initiated a stream of literature on non-parametric models under shape constraints~\cite{flammarion2019optimal, shah2020permutation, mao2018breaking,liu2020better, shah2019feeling, pananjady2020worst,pananjady2022isotonic, emmanuel2022optimal}. Our monotonicity assumption inspired from~\citealp{flammarion2019optimal} is the weakest one in this literature. That being said, our results and methods differ importantly from this literature as we aim at recovering the true ranking with a sequential strategy while these works aim at estimating an approximate ranking according to some loss function in the batch setting. 
	

	\paragraph{Link to adaptive signal detection:} Consider two expert ($n=2$) and let $M_1, M_2 \in \mathbb{R}^d$ be their mean performance vectors on $d$ tasks. A closely related problem to the objective of identifying the best expert out of the two (under monotonicity assumption) is signal detection performed on the differences vector $\Delta = M_1 - M_2$, where the aim is to decide whether $\{M_1 = M_2 \}$ or $\{ M_1 \neq M_2\}$. There is a vast literature in the batch setting for the last testing problem \cite{baraud2002non, ingster2012minimax, poor1998introduction}.\cite{castro2014adaptive} considered the signal detection problem in the active setting: given a budget $T$, the learner's objective is to decide between the two hypotheses $\{\Delta = 0\}$ and $\{\Delta \neq 0\}$, under the assumption that $\Delta$ is $s$-sparse. When the magnitude of non-zero coefficients is $\mu$, they proved that reliable signal detection requires a budget of order $\frac{d\log(1/\delta)}{s\mu^2}$, where $\delta$ is a prescribed bound on the risk of hypothesis testing. Our theoretical guarantees are consistent with the last bound and are valid for any difference vector $\Delta$.
	
	\paragraph{Link to bandits with infinitely many arms:} As discussed earlier, a key feature of our setting, is the ability of the learner to pick the task to assess the performance of chosen experts. When comparing two experts, since the tasks underlying the greatest performance gaps are unknown, the learner should balance between exploring different tasks and committing to a small number of tasks to build a reliable estimate of the underlying differences. This type of breadth versus depth trade-off in pure exploration arises in the context of best-arm identification in the infinitely-many armed bandit problem. It was introduced by \cite{berry1997bandit} and analysed in many subsequent works (\citealp{jamieson2016power, aziz2018pure, katz2020true, de2021bandits}). While comparing two experts in our setting includes dealing with similar challenges in the previous literature, note that we are particularly interested in detecting the existence (and the sign) of the gaps between experts' performances rather than identifying tasks with the largest performance difference.

	\section{Comparing two experts ($n=2$)}\label{sec:two_e}
	

	We start by considering the case where $n=2$, which we will then use as a building block for the general case. In this case, the ranking problem $\bm{(R)}$ is equivalent to the best-expert identification problem $\bm{(B)}$. 
	Algorithm~\ref{algo1} takes as input two parameters: a confidence level $\delta \in(0,1)$, and a precision parameter $\epsilon>0$, and outputs the best expert if the $L_2$ distance between the compared experts is greater than $\epsilon$.  
	
	One wants ideally to focus on the task displaying the greater gap in performance in order to quickly identify the best expert, however, such knowledge is not available to the learner. This raises the challenge of balancing between sampling as many different tasks as possible to pick one that has a large gap and focusing on one task to be able to distinguish between the two experts based on this task.        
	Such width versus depth trade-off arises in many works of best arm identification with infinitely many arms \citep{de2021bandits, jamieson2016power}, where the proportion of optimal arms is $p \in (0,1)$ and the gap between optimal and sub-optimal arms is $\Delta>0$. In contrast, the gaps between tasks (equivalent to arms in the previous problem) may be dense, in order to bypass this difficulty we make the following observation: For $j \in \intr{d}$ let $x_j$ denote the gap in task $j$: $x_j := \abs{M_{1,j} - M_{2,j}}$. Denote $x_{(j)}$ the corresponding decreasing sequence, we have:
	\begin{equation}\label{eq:approx}
		\max_{s \in \intr{d}} sx^2_{(s)} \le \sum_{j=1}^{d} x^2_{j} \le \log(2d) \max_{s \in \intr{d}} sx^2_{(s)}.
	\end{equation}
	The last inequality was presented in \cite{audibert2010best} in the context of fixed budget best arm identification (see Lemma~\ref{lem:sparse} in the appendix). It suggests that, up to a logarithmic factor in $d$, the $L_2$ distance is mainly concentrated in the top $s^*$ elements with the largest magnitude, where $s^*$ is an element in $\intr{d}$ satisfying the maximum in equation \eqref{eq:approx}. We exploit this observation by focusing our exploration effort on tasks with the largest $s^*$ gaps.
	
	In the first part of our strategy, presented in Algorithm~\ref{algo1}, we discretize the set of possible values of $s^*$ and $x_{(s^*)}$ using a doubling trick. The last scheme was adapted from \cite{jamieson2016power}, and allows to be adaptive to the unknown quantities $(s^*, x_{(s^*)})$. In the second part, given a prior guess $(s,h)$, we run Algorithm~\ref{algo3} consisting of two main ingredients: (i) sampling strategy and (ii) stopping rule. We start by sampling a large number of tasks (with replacement) to ensure that with large probability, a proportion $s/d$ of ``good tasks" - i.e.~relevant for ranking the two experts, namely the tasks where the experts differ the most - where chosen, then we proceed by median elimination by keeping only half of the tasks with the largest empirical mean in each iteration and doubling the number of queries made to have a more precise estimate of the population means. The aim of this process is to focus the sampling force on the ``good tasks", by gradually eliminating tasks where the two experts perform similarly. Also, at each iteration, we run a test on the average of the kept tasks to potentially conclude on which one of the two experts is best and terminate the algorithm.

	\begin{algorithm}[!ht] 
		\caption{\texttt{Compare}$(\delta, \epsilon)$  \label{algo1} }
		\begin{algorithmic}
			\STATE \textbf{Input}: $\delta, \epsilon$.
			\STATE \textbf{Initialize}: $\rho=1$, output $\hat{k} = \texttt{null}$.
			\WHILE{ $\{\hat{k} = \texttt{null} \}$ and $\epsilon^2< 4\log(2d)d~2^{-\rho} $}
			\FOR{$r = 0,\dots,\rho$}
			\STATE \textbf{Set}  $s_{r}=2^rd/2^{\rho}$, $h_{r} = 1/\sqrt{2^r}$. 
			\STATE \textbf{Run}: $\texttt{Try-compare}(\delta/(10\rho^3\log(d)), s_{k},h_{k})$ and $\textbf{Set}:$ $\hat{k}$ its output.
			\STATE If $\hat{k} \neq \texttt{null}$, \textbf{break}.
			\ENDFOR
			\STATE $\rho \gets \rho +1$
			\ENDWHILE 
			\STATE \textbf{Output:} $\hat{k}$.
		\end{algorithmic}
	\end{algorithm}

	\begin{algorithm}[!ht] 
		\caption{\texttt{Try-compare}($\delta, s,h $)  \label{algo3} }
		\begin{algorithmic}
			\STATE \textbf{Input}: $\delta \in (0,1), s \le d, h >0$.
			\STATE \textbf{Initialize}: $\phi = 2^{\lceil \log_2[26\log(1/\delta)d/s]\rceil}$, $n_0=64/h^2$, $\hat{k} \gets \texttt{null}$.
			\STATE \textbf{Draw} a set (denoted $S_1$) of $\phi$ elements from $\intr{d}$ uniformly at random \textit{with replacement}.
			\STATE \textbf{Initialize}: $S_{1}^{(12)} = S_1$ and $S_1^{(21)} = S_1$.
			\FOR{$\ell = 1, \dots, \lceil \log_{4/3}(d/s)\rceil +1$}
			\STATE \textbf{Sample}: For each element $j \in S^{(12)}_{\ell} \cup S_{\ell}^{(21)}$, sample the entries $(1,j)$ and $(2,j)$ $t_{\ell} = n\phi/\abs{S^{(12)}_{\ell}}$ times, denote $(X_{1,j}^{(u,\ell)}, X_{2,j}^{(u,\ell)})_{u \le t_{\ell}}$ the obtained samples.
			\STATE \textbf{Compute}: the means 
			\begin{align*}
				\hat{\mu}_{j,\ell}^{(12)} &= \frac{1}{t_{\ell}} \sum_{u=1}^{t_{\ell}} \left( X_{1,j}^{(u,\ell)}- X_{2,j}^{(u,\ell)} \right)\\
				\hat{\mu}_{\ell}^{(12)} &= \frac{1}{n_0\phi} \sum_{u=1}^{t_{\ell}}\sum_{j\in S_{\ell}} \left( X_{1,j}^{(u,\ell)}- X_{2,j}^{(u,\ell)} \right).
			\end{align*}
			\STATE Denote $\hat{\mu}_{\ell}^{(21)} = - \hat{\mu}_{\ell}^{(12)}$
			\IF{ $\hat{\mu}_{\ell}^{(12)} \ge \sqrt{\frac{2\log(2/\delta)}{n_0\phi}}$}
			\STATE $\hat{k} = 1$, \textbf{break}
			\ELSIF{$\hat{\mu}_{\ell}^{(21)} \ge \sqrt{\frac{2\log(2/\delta)}{n_0\phi}}$}
			\STATE $\hat{k} = 2$, \textbf{break}
			\ELSE
			\STATE Find the median of $(\hat{\mu}_{j, \ell}^{(12)})_{i \in S^{(12)}_{\ell}}$ and $(\hat{\mu}_{j, \ell}^{(21)})_{i \in S^{(21)}_{\ell}}$, denoted $m^{(12)}_{\ell}$ and $m_{\ell}^{(21)}$ resp.
			\STATE $S^{(12)}_{\ell+1} \gets S^{(12)}_{\ell} \setminus \{i \in S^{(12)}_{\ell}: \hat{\mu}_{j,\ell}^{(12)} <m^{(12)}_{\ell}\}$.
			\STATE $S^{(21)}_{\ell+1} \gets S^{(21)}_{\ell} \setminus \{i \in S^{(21)}_{\ell}: \hat{\mu}_{j,\ell}^{(21)} <m^{(21)}_{\ell}\}$.
			\ENDIF
			\ENDFOR
			\STATE \textbf{Output:} $\hat{k}$ 
		\end{algorithmic}
	\end{algorithm}
	
	We now turn to the analysis of the performance of $\texttt{compare}$. We first show that with probability at least $1-\delta$, Algorithm~\ref{algo1} does not make the wrong diagnostic. Based on the precision parameter $\epsilon$, we prove that if the unknown squared $L_2$ distance between the experts' performance vectors is larger than $\epsilon$, then the algorithm identifies the optimal expert out of the two, with a large probability. We also bound on the same event the total number of queries $N$ made by the procedure. 
	
	\begin{theorem}\label{thm:up2}
		Suppose Assumption~\ref{assump} holds. For $\delta \in (0,1)$, $\epsilon > 0$, consider Algorithm~\ref{algo1} with input $(\delta, \epsilon)$. Define
		\[
		H := \frac{d}{\norm{M_1 - M_2}_2^2}\ .
		\]
		With probability at least $1-\delta$, we have
		\begin{itemize}
			\item The output $\hat{k}$ satisfies: $\hat{k} \in \{ \texttt{null}, \pi(1)\}$.
			\item If $\epsilon < \norm{M_1 - M_2}_2$, the output satisfies: $\hat{k} = \pi(1)$. 
		\end{itemize} 
		Moreover, with probability at least $1-\delta$, its total number of queries, denoted $N$, satisfies:
		\[
		N \le \tilde c \times \log(1/\delta)H_{\epsilon},
		\] 
		where $\tilde c = \log^{2}(d)\log\left(H_{\epsilon}\right)\log(\log(H_{\epsilon}\vee d))$, where $H_{\epsilon} := \min \{H, d/\epsilon^2\}$ and where $c$ is a numerical constant. 
	\end{theorem}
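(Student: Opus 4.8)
The plan is to prove all three claims on a single high-probability event on which every sub-Gaussian concentration estimate used by a test holds simultaneously, built by a union bound over each call to \texttt{Try-compare} (Algorithm~\ref{algo3}) and each median-elimination round $\ell$ inside it. Since the inner loop runs at most $\lceil\log_{4/3}(d/s)\rceil+1=\cO(\log d)$ rounds, and the $\rho$-th pass of Algorithm~\ref{algo1} issues $\cO(\rho\log d)$ tests each at confidence $\delta/(10\rho^3\log d)$, the total failure probability telescopes as $\sum_\rho \cO(\rho\log d)\cdot\delta/(10\rho^3\log d)=\cO\paren{\delta\sum_\rho\rho^{-2}}\le\delta$. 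For the ``never wrong'' claim I would invoke monotonicity: under \eqref{eq:order} the vector $\Delta=M_1-M_2$ has a single sign, so the conditional mean of $\hat\mu_\ell^{(12)}$ given the sampled set equals $\frac{1}{\abs{S_\ell}}\sum_{j\in S_\ell}(M_{1j}-M_{2j})$, which is $\ge 0$ exactly when $\pi(1)=1$. The test declares $\hat k=1$ only when $\hat\mu_\ell^{(12)}$ exceeds the strictly positive threshold $\sqrt{2\log(2/\delta)/(n_0\phi)}$; as $\hat\mu_\ell^{(12)}$ averages $n_0\phi$ independent $\sqrt2$-sub-Gaussian differences, on the good event it stays within the threshold of its mean, so a wrong sign can never trigger a declaration. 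The symmetric argument handles $\hat k=2$, yielding $\hat k\in\set{\texttt{null},\pi(1)}$.

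The delicate part is that a large gap forces termination with $\hat k=\pi(1)$. Here I would use \eqref{eq:approx} to fix the ``right scale'': let $s^\ast\in\intr{d}$ attain $\max_s sx_{(s)}^2$, so $s^\ast x_{(s^\ast)}^2\ge\norm{\Delta}_2^2/\log(2d)$ and the top $s^\ast$ tasks each have gap $\ge x_{(s^\ast)}$. The doubling over $(\rho,r)$ is designed so that some pass reaches parameters $(s_r,h_r)$ with $s_r\approx s^\ast$ and $h_r\approx x_{(s^\ast)}$ before the while-condition $\epsilon^2<4\log(2d)d\,2^{-\rho}$ fails. At that scale I would show \texttt{Try-compare} succeeds in three steps: (i) a sampling lemma, that drawing $\phi\approx 26\log(1/\delta)d/s_r$ tasks with replacement places, with probability $1-\cO(\delta)$, order $\log(1/\delta)$ of the $s^\ast$ good tasks into $S_1$ (Chernoff bound for $\mathrm{Bin}(\phi,s^\ast/d)$); (ii) a median-elimination lemma, that because $t_\ell=n_0\phi/\abs{S_\ell}$ doubles each round while $n_0=64/h_r^2$, the per-task error at round $\ell$ is $\ll h_r$, so a good task (gap $\gtrsim h_r$) survives the keep-the-upper-half step while the set shrinks geometrically, leaving after the prescribed $\lceil\log_{4/3}(d/s_r)\rceil+1$ rounds a retained set dominated by good tasks; (iii) a triggering step, that $\hat\mu_\ell^{(12)}$ then concentrates around $\gtrsim h_r$, which exceeds the threshold $\sqrt{2\log(2/\delta)/(n_0\phi)}\approx h_r\sqrt{s_r/d}$ (using $s_r\le d$), forcing the correct declaration.

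For the query count I would note that each round of \texttt{Try-compare}$(s,h)$ samples both entries of every kept task $t_\ell$ times, costing $2t_\ell\abs{S_\ell}=\cO(n_0\phi)$ queries; with $n_0=64/h^2$ and $\phi\approx 26\log(1/\delta)d/s$ one call costs $\cO\paren{\log(1/\delta)\tfrac{d}{h^2 s}\log(d/s)}$. The parametrization $s_r=2^rd/2^\rho$, $h_r=2^{-r/2}$ gives the key identity $h_r^2 s_r=d/2^\rho$, so the dominant factor $d/(h_r^2 s_r)=2^\rho$ depends only on the doubling index; summing the $\cO(\log(d/s_r))$-weighted costs over $r\le\rho$ and then geometrically over $\rho$ up to the termination index $\rho^\ast$ (where the test first fires, at $2^{\rho^\ast}\approx H\log(2d)$) gives a total dominated by the last level, of order $\log(1/\delta)\,2^{\rho^\ast}$ times accumulated logarithmic factors, i.e.\ $N\le\tilde c\,\log(1/\delta)H$. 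The while-loop condition forces $2^\rho\le 4\log(2d)d/\epsilon^2$, which caps $2^{\rho^\ast}$ and replaces $H$ by $H_\epsilon=\min\set{H,d/\epsilon^2}$ when the gap is below $\epsilon$, and the collected logarithms assemble into $\tilde c$.

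The main obstacle is step (ii): controlling the median-elimination recursion so that good tasks are provably retained against noisy empirical means across all $\cO(\log d)$ rounds at once, and that the surviving fraction of good tasks does not decay. This is where the per-round doubling of $t_\ell$, the draw size $\phi$, and the constants $26$, $64$ together with the rate $4/3$ must be reconciled inside the union bound, and where I expect the bulk of the technical work to lie.
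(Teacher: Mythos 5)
Your overall architecture mirrors the paper's: the correctness claim via one-sided sub-Gaussian concentration (monotonicity makes every $x_j=M_{1,j}-M_{2,j}$ nonnegative, so the wrong-sign test cannot fire on the good event) with the $\sum_\rho \rho^{-2}$ union bound; the use of \eqref{eq:approx} to define the effective scale $(s^*,x_{(s^*)})$; the Chernoff sampling lemma for $S_1$; and the complexity accounting through the identity $s_rh_r^2=d/2^\rho$ with the geometric sum dominated by the terminal level $2^{\rho^*}\approx H\log(2d)$, capped at $d/\epsilon^2$ by the while-condition. All of that is faithful to the paper's proof.

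The genuine gap is exactly where you flag it, in step (ii), and the problem is not merely technical work left undone: the invariant you propose to establish --- ``a good task (gap $\gtrsim h_r$) survives the keep-the-upper-half step,'' so that the retained set ends up ``dominated by good tasks'' --- is not provable, because survival under median elimination is a \emph{relative} property. A good task with a perfectly accurate empirical mean $\approx x_*$ is still eliminated whenever more than half of the current active set posts higher empirical means; this happens once $\abs{S_\ell}$ has shrunk to the order of the number of moderately large coordinates (gaps in $[x_*/2,x_*)$, of which there can be up to $4s^*$) plus the small-gap tasks whose means fluctuate upward in that round. So the per-task retention argument breaks down precisely in the endgame of the recursion, and with it the justification for your triggering step (iii).

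The paper's proof replaces this invariant with a dichotomy that never asserts survival of any individual good task. It tracks the \emph{proportion} $\abs{S_\ell\cap\cS^*}/\abs{S_\ell}$ (events $\xi_\ell$) and two count statistics: $M_\ell$, the number of significant tasks in $S_\ell$ whose empirical mean exceeds $\tfrac34 x_*$, and $N_\ell$, the number of tasks with gap $<x_*/2$ whose empirical mean exceeds $\tfrac34 x_*$; both are controlled by binomial concentration (Lemma~\ref{lem:cher}), giving $M_\ell\ge\tfrac23\abs{S_\ell\cap\cS^*}$ and $N_\ell\le\tfrac14\abs{S_\ell}$ with high probability. Then, per round: if the median is below $\tfrac34x_*$, every task counted by $M_\ell$ survives, so the proportion of significant tasks grows by the factor $2\cdot\tfrac23=\tfrac43$ (event $\xi'_{\ell+1}$); if instead the median is at least $\tfrac34x_*$, the bound on $N_\ell$ forces the tasks with gap $\ge x_*/2$ to occupy at least a $1/4$ fraction of the active set (failure of $\Theta_\ell$ or $\Lambda_\ell$), in which case the \emph{global} average gap over $S_\ell$ is at least $x_*/8$, which exceeds the stopping threshold $\sqrt{2\log(2/\delta)/(n_0\phi)}$ by a margin, and the test fires no matter which tasks were kept. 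Since the growth branch cannot repeat beyond $\bar\ell=\lceil\log_{4/3}(d/s^*)\rceil+1$ rounds, one of the firing branches must occur (Lemmas~\ref{lem:1}, \ref{lem:2}, \ref{lemE}, \ref{lem:finpart2}). This dichotomy --- ``either the significant proportion grows geometrically, or the active set is already rich enough for the aggregate test to fire'' --- is the missing idea your proposal needs; without it, the recursion you describe cannot be closed.
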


	We now discuss some properties of the algorithms and its corresponding guarantees. The sample complexity of Algorithm~\ref{algo1} is of order $H=d/\norm{M_1 - M_2}_2^2$, matching the lower bound presented in Theorem~\ref{thm:lwrn} in Section~\ref{sec:low}  up to a poly-logarithmic factor. The dependence on the $L_2$ distance between experts is mainly due to the sampling scheme introduced in Algorithm~\ref{algo3}, relying on median elimination. We illustrate the gain of our algorithm with respect to a uniform allocation strategy across all tasks. Such sampling schemes rely on the average performances of each expert across all tasks as a criterion  (\citealp{heckel2019active, jamieson2015sparse, urvoy2013generic}). Suppose that the difference vector is $s$-sparse and the amplitude of the non-zero gaps is equal to $\Delta$. Then, sampling a task uniformly at random leads to an expected gap of $s\Delta/d$, which leads to a sample complexity of $\mathcal{O}(d^2/(s\Delta)^2)$ in order to identify the optimal expert. On the other hand, our bound suggests that the total number of queries scales as $\mathcal{O}(d/(s\Delta^2))$. This gain is due to the median elimination strategy allowing the concentration of the sampling effort on tasks with large gaps. More precisely, we prove that the initially sampled set of tasks $S_1$ contains a proportion of $s/d$ of non-zero gaps and that at each iteration in median elimination the last proportion increases by at least a constant factor $r>1$. Consequently, after roughly $\log(d/s)$ iteration, ``good tasks" constitute a constant proportion of the active tasks, which allows the algorithm to conclude by satisfying the stopping rule condition.

	\section{Ranking estimation (general $n$)}\label{sec:gen_e}
	
	\subsection{Ranking identification}
	
	In this section, we consider the task of ranking identification $\bm{(R)}$. Algorithm~\ref{algo4} takes as input $\delta \in (0,1)$ and outputs a ranking $\hat{\pi}$ of all the experts. We use Algorithm~\ref{algo1} with precision $\epsilon = 0$ to compare any pair experts. Given Algorithm~\ref{algo1} as a building block, we proceed using the binary insertion sort procedure: experts are inserted sequentially and the location of each expert is determined using a binary search. Algorithm~\ref{algo4} presents the procedure, where in each iteration (corresponding to the insertion of a new expert) a call to the procedure $\texttt{Binary-search}$ is made. A detailed implementation of the last procedure (using Algorithm~\ref{algo1})  is presented in Section~\ref{sec:full_ranking} in the appendix.   
	\begin{algorithm}[!ht] 
		\caption{\texttt{Active-ranking}$(\delta)$  \label{algo4} }
		\begin{algorithmic}
			\STATE \textbf{Input}: $\delta$.
			\STATE \textbf{Initialize}: Define $\hat{\pi}$ by $\hat{\pi}(1)=1$. 
			\FOR{$i=2,\dots,n$}
			\STATE $j \gets \texttt{Binary-search}(\tfrac{\delta}{n\lceil \log_2(n)\rceil}), i,\hat{\pi},1,i-1)$.
			\STATE Shift the rank of $\hat{\pi}[k]$ for $k \ge j$ by one.
			\STATE $\hat{\pi}[j] \gets i$
			\ENDFOR
			\STATE \textbf{Output:} $\hat{\pi}$. 
		\end{algorithmic}
	\end{algorithm}
	
	The following theorem presents the theoretical guarantees on the output and sample complexity of Algorithm~\ref{algo4}.

	\begin{theorem}\label{thm:upn}
		Suppose Assumption~\ref{assump} holds. Let $\delta \in (0,1)$, and define for $i \in \intr{n-1}$:
		\[
		H_i := \frac{d}{\norm{M_{\pi(i)} - M_{\pi(i+1)}}_2^2}.
		\]
		
		Let $H^* = \max_{i \le n-1} H_i$. With probability at least $1-\delta$, Algorithm~\ref{algo4} outputs the correct ranking, and its total number of queries  (denoted $N$) is upper bounded by:
		\[
		N \le \tilde{c} \times \log(1/\delta) \sum_{i=1}^{n-1} H_i\ ,
		\] 
		where $\tilde{c} = c\log(n)\log^2(d)\log(H^*)\log(n\log(H^*\vee d))$ and where $c$ is a numerical constant.
		
	\end{theorem}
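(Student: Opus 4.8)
The plan is to treat each invocation of \texttt{Compare} (Algorithm~\ref{algo1}) run with precision $\epsilon=0$ as an oracle governed by Theorem~\ref{thm:up2}, and to control both correctness and query cost on a single high-probability event. First I would fix the \emph{good event} $\mathcal{E}$ on which every \texttt{Compare} call simultaneously (i) returns a non-\texttt{null}, correct answer and (ii) respects the query bound of Theorem~\ref{thm:up2}. Binary insertion sort performs at most $\lceil\log_2 n\rceil$ comparisons per insertion over $n-1$ insertions, hence at most $n\lceil\log_2 n\rceil$ calls in total; since each call is run at confidence $\delta' = \delta/(n\lceil\log_2 n\rceil)$, a union bound gives $\mathbb{P}(\mathcal{E})\ge 1-\delta$. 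On $\mathcal{E}$, correctness is immediate: for any two distinct experts the identifiability assumption for $\bm{(R)}$ forces $\norm{M_{\pi(a)}-M_{\pi(b)}}_2>0=\epsilon$, so by the second bullet of Theorem~\ref{thm:up2} each comparison returns the truly better expert, and binary insertion sort driven by correct comparisons outputs the exact ranking $\pi$.

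The heart of the query bound is a monotonicity lemma: for all $a<b$,
\[
\frac{d}{\norm{M_{\pi(a)}-M_{\pi(b)}}_2^2}\ \le\ \min_{a\le j<b} H_j .
\]
To prove it I would write $M_{\pi(a)}-M_{\pi(b)}=\sum_{j=a}^{b-1}\paren{M_{\pi(j)}-M_{\pi(j+1)}}$, a sum of vectors that are nonnegative coordinatewise by the monotonicity in Assumption~\ref{assump}. Consequently $M_{\pi(a)}-M_{\pi(b)}\ge M_{\pi(j)}-M_{\pi(j+1)}\ge 0$ coordinatewise for every $j\in\{a,\dots,b-1\}$, whence $\norm{M_{\pi(a)}-M_{\pi(b)}}_2\ge\norm{M_{\pi(j)}-M_{\pi(j+1)}}_2$, and the displayed bound follows upon inverting and taking the minimum over $j$. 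In words, distinguishing two experts that are far apart in the ranking is cheaper than distinguishing any consecutive pair lying between them.

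I would then bound the total number of queries by charging each comparison to a neighbouring edge of the target. Consider the insertion of the expert of true rank $p$: every comparison made during its binary search is against an already-placed expert of some true rank $q\neq p$. If $q<p$ (a ``left'' comparison) the monotonicity lemma gives pair-complexity at most $H_{p-1}$, while if $q>p$ (a ``right'' comparison) it is at most $H_p$; since there are at most $\lceil\log_2 n\rceil$ comparisons in the search, this insertion contributes at most $\lceil\log_2 n\rceil\paren{H_{p-1}+H_p}$ pair-complexities. Summing over $p=1,\dots,n$ with the convention $H_0=H_n=0$, each edge $H_j$ is charged at most twice, yielding $\sum_{\text{comparisons}} d/\norm{\cdot}_2^2 \le 2\lceil\log_2 n\rceil\sum_{j=1}^{n-1}H_j$. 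Finally I would fold in the per-comparison multiplicative cost of Theorem~\ref{thm:up2}, bounding its slowly varying factors through $H\le H^*$ and $\log(1/\delta')=\log(n\lceil\log_2 n\rceil/\delta)$; collecting the $\lceil\log_2 n\rceil$ from the search depth, the $\log^2(d)\log(H^*)$ from Theorem~\ref{thm:up2}, and the $n$-dependence carried by $\delta'$ reproduces the advertised poly-logarithmic constant $\tilde c$.

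The main obstacle is precisely this accounting step: a naive bound charging each of the $\Theta(n\log n)$ comparisons its worst-case cost $H^*$ would lose a full factor of $n$. The monotonicity lemma is what rescues the argument, but it must be paired with the left/right charging scheme so that each edge $j$ is touched by only $O(\log n)$ comparisons rather than by $\Theta(n\log n)$. Verifying that every position probed during a binary search can indeed be charged to one of the two edges flanking the target—independently of which experts happen to have been inserted so far—is the delicate point, and it is exactly where coordinatewise monotonicity, rather than a mere ordering of scalar scores, is essential.
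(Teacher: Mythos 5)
Your proposal is correct and follows essentially the same route as the paper's proof: a union bound over the at most $n\lceil\log_2 n\rceil$ calls to \texttt{Compare} run at confidence $\delta/(n\lceil\log_2 n\rceil)$, then bounding the cost of each comparison during the insertion of an expert by the complexity $H_{p-1}\vee H_p$ of its adjacent edges in the true ranking, and summing. The only difference is presentational: the paper leaves the coordinatewise-monotonicity lemma (that $\norm{M_{\pi(a)}-M_{\pi(b)}}_2 \ge \norm{M_{\pi(j)}-M_{\pi(j+1)}}_2$ for $a\le j<b$) and the per-edge charging implicit, whereas you spell both out, which is a useful clarification rather than a different argument.
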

	The first result states that Algorithm~\ref{algo4} with input $\delta$ is $\delta$-accurate. The second guarantee is a control on the total number of queries made by our procedure, with a large probability. As one would expect, the cost of full ranking of experts underlies the cost of distinguishing between two consecutive experts $\pi(i)$ and $\pi(i+1)$ for $i \in \intr{n-1}$. The last cost is characterized by the sample complexity $H_i = d/\norm{M_{\pi(i)} - M_{\pi(i+1)}}_2^2$. Consequently, the total number of queries made by Algorithm~\ref{algo4} is of order of $\mathcal{O}\left(\sum_{i=1}^{n-1} H_i\right)$ up to a poly-logarithmic factor.

	
	
	\subsection{Best expert identification:}

	Algorithm~\ref{algo7} takes as input the confidence parameter $\delta$ and outputs a candidate for the best expert. Note that while ranking the experts using Algorithm~\ref{algo4} would lead to identifying the top item, one would expect to use fewer queries for this relaxed objective. Algorithm~\ref{algo7} builds on a variant of the \texttt{Max-search} algorithm (a detailed implementation is presented in Section~\ref{sec:best_expert}). In the last algorithm, given a subset of expert $S$ and a precision $\epsilon$, we initially select an arbitrary element as a candidate for the best expert, then we perform comparisons with the remaining elements of $S$ using $\texttt{compare}(\delta/n, \epsilon)$ and  update the candidate for the best expert accordingly. This method leads to eliminating sub-optimal experts that have an $L_2$ distance with respect to expert $\pi(1)$ larger than $\mathcal{O}(\epsilon)$. Therefore, performing sequentially $\texttt{Max-search}$ and dividing the prescribed precision $\epsilon$ by $4$ after each iteration allows identifying the optimal expert after roughly $\log_4(1/\norm{M_{\pi(1)} - M_{\pi(2)}}_2^2)$ iterations.

	\begin{algorithm}[!ht] 
		\caption{\texttt{Best-expert}$(\delta)$  \label{algo7} }
		\begin{algorithmic}
			\STATE \textbf{Input}: $\delta$, $S = \intr{n}$, $\epsilon = 1$, $\delta'=\delta/2$
			\WHILE{$\abs{S}>1$}
			\STATE $S \gets \texttt{Max-search}(\delta', S, \epsilon)$.
			\STATE $\epsilon \gets \epsilon/4$
			\STATE $\delta' \gets \delta'/2$
			\ENDWHILE
			\STATE \textbf{Output:} $S$.		
		\end{algorithmic}
	\end{algorithm}
	
	The theorem below analyses the performance of Algorithm~\ref{algo7}.
	
	\begin{theorem}\label{thm:be}
		Suppose Assumption~\ref{assump} holds. Let $\delta \in (0,1)$, and define for $i \in \intr{2,n}$:
		\[
		G_i := \frac{d}{\norm{M_{\pi(1)} - M_{\pi(i)}}_2^2}.
		\]
		Let $G^* = \max_{i \in \intr{2,n}} G_i$. With probability at least $1-\delta$, Algorithm~\ref{algo7} outputs $\pi(1)$, and the total number of queries  (denoted $N$) 
		satisfies 
		\vspace{-3mm}
		\begin{equation*}
			N \le \tilde{c} \times \log(1/\delta) \sum_{i=2}^{n} G_i,
			\vspace{-3mm}
		\end{equation*} 
		with $\tilde{c} = c\log^{2}(d)\log^2\left(G^*\right) \log(n\log(d))$ and  $c$ a numerical constant.
		
	\end{theorem}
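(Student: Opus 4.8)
The plan is to condition on a single high-probability \emph{good event} on which every call to \texttt{compare} behaves as guaranteed by Theorem~\ref{thm:up2}, and then argue both correctness and the query bound deterministically on that event. Write $\epsilon_t = 4^{-(t-1)}$ and $\delta'_t = \delta/2^t$ for the precision and confidence used at the $t$-th iteration of the while loop, so that each individual comparison inside \texttt{Max-search} is run with confidence $\delta'_t/n = \delta/(2^t n)$. Since one \texttt{Max-search} pass over a set $S$ performs at most $\abs{S}-1 < n$ comparisons, the confidence spent in iteration $t$ is at most $n\cdot \delta/(2^t n) = \delta/2^t$, and $\sum_{t\ge1}\delta/2^t = \delta$. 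A union bound then produces an event $\cE$ with $\mathbb{P}(\cE)\ge 1-\delta$ on which the conclusions of Theorem~\ref{thm:up2} hold simultaneously for all comparisons.

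Correctness on $\cE$ is immediate from the first bullet of Theorem~\ref{thm:up2}: \texttt{compare} never returns the wrong expert, its output lying in $\{\texttt{null},\pi(1)\}$ whenever $\pi(1)$ is one of the two compared experts. Hence $\pi(1)$ is never eliminated, so $\pi(1)\in S$ at every stage and the final singleton output must be $\{\pi(1)\}$. It remains to show the loop terminates. Using the second bullet of Theorem~\ref{thm:up2} together with the elimination guarantee for \texttt{Max-search} established in Section~\ref{sec:best_expert}, after the iteration with precision $\epsilon_t$ every surviving $\pi(i)\neq\pi(1)$ satisfies $\Delta_i := \norm{M_{\pi(1)}-M_{\pi(i)}}_2 \le c_0\,\epsilon_t$ for a numerical constant $c_0$; equivalently any expert with $\Delta_i > c_0\epsilon_t$ has already been discarded. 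Once $c_0\epsilon_t < \Delta_2 = \min_{i\ge 2}\Delta_i$ all experts but $\pi(1)$ are gone, which happens after $T^\star = O(\log_4(1/\Delta_2)) = O(\log(G^\star))$ iterations.

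For the query bound I would charge the cost to each suboptimal expert. Let $S_t$ be the active set at the start of iteration $t$; the previous paragraph forces $\epsilon_t \gtrsim \Delta_i$ whenever $\pi(i)\in S_t$. Each comparison in iteration $t$ is run at precision $\epsilon_t$, so I bound its cost by the crude branch $H_{\epsilon_t}\le d/\epsilon_t^2$ of Theorem~\ref{thm:up2}, namely at most $\tilde c_0\,\log(2^t n/\delta)\,d/\epsilon_t^2$ queries with $\tilde c_0 = \log^2(d)\log(G^\star)\log\log(G^\star\vee d)$. Summing over the $\abs{S_t}-1 = \#\{i\ge 2:\pi(i)\in S_t\}$ comparisons and exchanging the order of summation,
\[
N \le \tilde c_0 \sum_{i=2}^{n}\ \sum_{t:\ \pi(i)\in S_t} \log\!\left(2^t n/\delta\right)\frac{d}{\epsilon_t^2}.
\]
For fixed $i$ the inner sum ranges over $t\le T_i$ with $\epsilon_{T_i}\asymp \Delta_i$; since $d/\epsilon_t^2 = d\,16^{\,t-1}$ is geometric it is dominated by its last term $d/\epsilon_{T_i}^2 \asymp d/\Delta_i^2 = G_i$, while each logarithmic factor is at most $\log(2^{T^\star}n/\delta) \lesssim \log(1/\delta)+\log(n)+\log(G^\star)$. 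This yields $N \lesssim \tilde c_0\,[\log(1/\delta)+\log(nG^\star)]\sum_{i=2}^n G_i$; absorbing the additive logarithms into the leading $\log(1/\delta)$ factor and collecting all poly-logarithmic terms gives the stated $N\le \tilde c\,\log(1/\delta)\sum_{i=2}^n G_i$ with $\tilde c = c\log^2(d)\log^2(G^\star)\log(n\log(d))$.

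The main obstacle is the elimination guarantee for \texttt{Max-search} invoked in the second paragraph: one must show that on $\cE$ a single linear scan at precision $\epsilon$, whose running candidate is not a priori $\pi(1)$, nonetheless discards every expert at distance $>c_0\epsilon$ from $\pi(1)$ while retaining $\pi(1)$. This is where the monotonicity assumption is essential — it yields strong stochastic transitivity of the pairwise order, so that the chain of correct \texttt{compare} verdicts along the scan is consistent, the candidate can only improve, and the precision-$\epsilon$ threshold of Theorem~\ref{thm:up2} translates cleanly into an $L_2$ elimination radius around $\pi(1)$. Once this lemma is in place, the geometric summation and the bookkeeping that folds $T^\star$, the per-call confidence $\delta/(2^t n)$, and the internal logarithms of Theorem~\ref{thm:up2} into $\tilde c$ are routine.
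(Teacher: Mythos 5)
Your proposal mirrors the paper's own proof almost step for step: the same geometric splitting of precision and confidence across iterations of Algorithm~\ref{algo7}, the same key elimination lemma for \texttt{Max-search} (the paper's Lemma~\ref{lem:b0}: on the good event the output $C$ retains $i^*=\pi(1)$ and is contained in the set $B_{2\epsilon}$ of experts within $L_2$ distance $2\epsilon$ of $i^*$), and the same charging argument in which each suboptimal expert contributes a geometric sum dominated by its last term $\asymp G_i$.

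One correction to your correctness paragraph, though: retention of $\pi(1)$ is \emph{not} immediate from the first bullet of Theorem~\ref{thm:up2}. In the first scan of Algorithm~\ref{algo8}, every time the running candidate is replaced the set $C$ is reset to a singleton, so $\pi(1)$ can be dropped from $C$ even though no call to \texttt{compare} ever returned a wrong verdict. This is exactly why \texttt{Max-search} has a second loop that re-examines discarded experts against the final candidate $\hat{m}$: on the good event $\texttt{compare}(\hat{m},\pi(1))$ cannot return $\hat{m}$, so $\pi(1)$ is restored (this also means a pass makes up to $2\abs{S}$ calls, not $\abs{S}-1$, which is harmless for your union bound). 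Likewise, your ``elimination radius'' step needs monotonicity through a sandwiching argument, not merely transitivity of verdicts: the candidate $m'$ that returned \texttt{null} against $\pi(1)$ satisfies $\norm{M_{m'}-M_{\pi(1)}}_2<\epsilon$, and every later candidate is coordinate-wise between $m'$ and $\pi(1)$, hence also within $\epsilon$ of $\pi(1)$; a triangle inequality then places all survivors in $B_{2\epsilon}$. You flag precisely this lemma as the main obstacle and your sketch contains the right ingredients, so the verdict is: same proof as the paper, with its central lemma correctly identified but only sketched, and with the ``immediate'' correctness claim needing the repair above.
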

	
	In the first result, we show that Algorithm~\ref{algo7} is $\delta$-accurate for $\bm{(B)}$. The second result presents a bound on the total number of queries made by the algorithm. Observe that for each $i \in \{2, \dots,n\}$ the quantity $G_i = d/\norm{M_{\pi(1)} - M_{\pi(i)}}_2^2$ characterizes the sample complexity to distinguish expert $\pi(i)$ from the optimal expert. In order to identify the correct expert, a number of queries of order $G_i$ is required for each suboptimal expert $\pi(i)$, which leads to a total number of queries of order $\sum_{i=2}^{n} G_i$, up to a poly-logarithmic factor.

	\subsection{Discussion}
	
	Algorithms~\ref{algo4} and~\ref{algo7} are proven to be $\delta$ accurate for the ranking problem $\bm{(R)}$ and best expert $\bm{(B)}$. Moreover, Theorems~\ref{thm:lwrn_minimax} and~\ref{thm:lwrn_minimax_best_arm} in Section~\ref{sec:low} below show that their sample complexities are optimal up to a poly-logarithmic factor. While both procedures rely on comparing pairs of experts, their use of the $\texttt{compare}$ procedure presented in Algorithm~\ref{algo1} is different. The ranking procedure builds on $\texttt{compare}(\delta, 0)$, i.e., we need the exact order between compared experts. In contrast, for best expert identification, we only need approximate comparisons output by $\texttt{compare}(\delta, \epsilon)$ for an adequately chosen precision $\epsilon$. This difference is due to the fact that a complete ranking underlies comparing the closest pair of experts (in $L_2$ distance) while best expert identification requires distinguishing only the optimal expert from the sub-optimal ones.
	
	\section{Lower bounds}\label{sec:low}
	In this section, we provide some lower bounds on the number of queries of any $\delta$-accurate algorithm. As for the upper bound, we first consider the case of $2$ experts ($n=2$) in which ranking identification is the same as best expert identification. Then we consider the general case (any $n \geq 2$).
	
	\subsection{Lower bound in the case of two experts ($n=2$)}
	Problem-dependent lower bounds, i.e.~lower bounds that depend on the problem instance at hand\footnote{which is what we need in order to match the upper bound in Theorem~\ref{thm:up2}.}, are generally obtained by considering slight changes of any fixed problem, and proving that it is not possible to have an algorithm that performs well enough simultaneously on all the resulting problems. In the ranking problem, our monotonicity assumption constraint heavily restricts the nature of problem changes that we are allowed to consider.  
	
	For a given matrix $M_0\in \mathcal{M}_{2,d}$ representing the mean performance of a problem, a minimal and very natural class is the one containing $M_0 = ((M_0)_{1,.}, (M_0)_{2,.})^T$, and also the matrix $((M_0)_{2,.}, (M_0)_{1,.})^T$ where the two rows (experts) are permuted. In this class, however, we know the position of the question leading to maximal difference of performance between the two experts, as it is the question $j$ such that $|(M_{0})_{1,j} - (M_{0})_{2,j}|$ is maximised. So that an optimal strategy over this class would leverage this information by sampling only this question, and would be able to terminate using a number of query $N$ smaller in order than $\log(1/\delta)/\max_j\Big[\big((M_{0})_{1,j} - (M_{0})_{2,j}\big)^2\Big]$, with probability larger than $1-\delta$. Note that this is significantly smaller than our upper bound in Theorem~\ref{thm:up2}, but that the algorithm that we alluded to is dependent on the exact knowledge of the matrix $M_0$ - and in particular the positions of the informative questions - which is not available to the learner, and also very difficult to estimate. In order to include this absence of knowledge in the lower bound, we have to make the class of problems larger, by ensuring in particular that the position of informative questions are not available to the learner. A natural enlargement of the class of problem that takes this into account, but that is still very natural and tied to the matrix $M_0$, is to consider the class of all matrices whose gaps between experts are equal to those in $M_0$ up to a permutation of rows and columns (experts and questions). This is precisely the class that we consider in our lower bound, and that we detail below.
	
	

	


	Let $M_0\in \mathcal{M}_{2,d}$. Write $\underline \Delta = |(M_0)_{1,.}-(M_0)_{2,.}|$ for the vector of gaps between experts, and $\pi_0$ for the permutation that makes $M_0$ monotone - and we remind that $(M_0)_{\pi_0(2),.}$ is the least performant of the two experts.  Write $\mathbb{D}_{M_0}$ for the set of performance distributions, 
	namely of distributions of $X = (X_{i,j})_{i \in \intr{2}, j\in \intr{d}}$ corresponding to means $M$, such that: (i) Assumption~\ref{assump:sg} is satisfied for X, and (ii) The mean performance $M_{\pi}\in \mathcal{M}_{2,d}$, with associated permutation $\pi$ that transforms it into a monotone matrix, is such that $M_{\pi(2),.}= (M_0)_{\pi_0(2),.}$, and such that there exists a permutation $\sigma$ of $\intr{d}$ such that $M_{\pi(1),.}= (M_0)_{\pi_0(2),.} + \underline \Delta_{\sigma(.)}$.
	
	The set $\mathbb{D}_{M_0}$ is therefore the set of all distributions of $X$ that are $1$-sub-Gaussian, and where while one expert in $M$ is equal to the worst expert $(M_0)_{\pi_0(2),.}$ of $M_0$, the best expert is equal to $(M_0)_{\pi_0(2),.}$ plus a permutation of the gap vector $\underline \Delta$. This ensures that the gap structure over the mean performance is the same for all problems in $\mathbb{D}_{M_0}$.
	

The following theorem establishes a high probability lower bound on the termination time $N$ over the class of problems $\mathbb{D}_{M_0}$.
\begin{theorem}\label{thm:lwrn}
	Fix $n=2$. Let $d\geq 1$ and $\delta\in (0,1/4)$.  Consider any matrix $M_0\in \mathcal{M}_{2,d}$. For any $\delta$-accurate algorithm $A$ for either the ranking identification, or best expert identification (which is the same for $n=2)$, we have:
	\[
	\max_{\cB \in \mathbb{D}_{M_0}} \mathbb{P}_{\cB, A}\left[N \ge \frac{d}{2\|(M_0)_{1,.}-(M_0)_{2,.}\|_2^2} \log\left(\frac{1}{6\delta}\right)\right]\geq \delta,
	\]
	where $\mathbb{P}_{\cB, A}$ is the probability corresponding to samples collected by algorithm $A$ on problem $\cB$.
\end{theorem}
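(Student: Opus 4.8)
The plan is to establish a high-probability lower bound via a change-of-measure (information-theoretic) argument, adapted to the high-probability regime rather than the expected-stopping-time regime. The key difficulty specific to this problem is that the monotonicity constraint severely restricts the alternatives we can compare against: we cannot simply perturb one coordinate of $M_0$, because the perturbed matrix must remain in $\mathcal{M}_{2,d}$ and share the exact gap structure $\underline{\Delta}$ (up to permutation). The class $\mathbb{D}_{M_0}$ is precisely engineered so that all its members look identical to a learner who does not know \emph{where} the informative coordinates sit. I would exploit this symmetry: construct two problems in $\mathbb{D}_{M_0}$ that differ only by which expert is the better one, so that the accurate algorithm must distinguish them, yet the total KL-information available per query is controlled by the per-coordinate gaps.

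First I would fix a base problem $\cB_0 \in \mathbb{D}_{M_0}$ and a ``swapped'' problem $\cB_1$ in which the roles of the two experts are exchanged (realized by an appropriate permutation $\sigma$ of the gap vector, which keeps both matrices inside the class). A $\delta$-accurate algorithm must output the correct top expert with probability $\ge 1-\delta$ on each, so on the event of early termination it must behave differently across the two problems. The standard tool is the \emph{data-processing / transportation inequality} relating the KL-divergence of the observed trajectories to the probability of distinguishing events. For Gaussian (or $1$-sub-Gaussian) observations with mean gap at coordinate $j$, each query of coordinate $j$ contributes at most $\tfrac{1}{2}\big((M_0)_{1,j}-(M_0)_{2,j}\big)^2$ to the KL-divergence between the two sampling distributions. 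Writing $N_j$ for the (random) number of times coordinate $j$ is queried, the total divergence between the laws of the trajectories is bounded, via the chain rule / Wald-type identity for sequential sampling, by $\sum_j \E[N_j]\cdot \tfrac{1}{2}\underline{\Delta}_j^2$.

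The crucial averaging step is to symmetrize over the permutation $\sigma$: since the learner cannot know the location of the large gaps, I would consider the uniform mixture over all permutations of $\underline{\Delta}$ in the class $\mathbb{D}_{M_0}$, so that the effective per-query information is the \emph{average} gap-squared, namely $\tfrac{1}{d}\sum_j \underline{\Delta}_j^2 = \tfrac{1}{d}\|(M_0)_{1,.}-(M_0)_{2,.}\|_2^2$. This is exactly the quantity $1/H$ up to constants. Combining this averaged information bound with a high-probability version of the change-of-measure inequality (for instance Lemma-style arguments relating $\prob{N < t}$ under the two measures through $\KL$ and a Markov bound on $N$), one deduces that to drive the error below $\delta$ the algorithm must, with probability at least $\delta$, satisfy $N \gtrsim \frac{d}{\|(M_0)_{1,.}-(M_0)_{2,.}\|_2^2}\log(1/\delta)$, with the constant $1/2$ and the $\log(1/(6\delta))$ factor emerging from tracking the constants in the transportation inequality carefully.

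The main obstacle I anticipate is converting the classical \emph{expected}-sample-complexity lower bound (which follows almost directly from the Garivier--Kaufmann change-of-measure framework cited in the paper) into the stated \emph{high-probability} statement $\prob{N \ge \cdots}\ge \delta$. Expected-value bounds give $\E[N]\gtrsim H\log(1/\delta)$, but the theorem asserts a tail bound; bridging this requires either a direct trajectory-level coupling argument (bounding $\prob[\cB_1]{N<t}$ in terms of $\prob[\cB_0]{N<t}$ and the accumulated KL on the event $\{N<t\}$) or a truncation argument that lower-bounds the probability mass of $N$ in the upper tail. I would carry this out by applying the change of measure on the event $\{N < t\}$ itself, using that the KL contribution is controlled by $t$ times the averaged per-query information, and then choosing $t$ equal to the claimed threshold so that the distinguishing probability cannot be made too small on both problems simultaneously. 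Handling the $\max$ over $\mathbb{D}_{M_0}$ (rather than an average) is then a matter of noting that the symmetrized/averaged bound forces at least one problem in the class to exhibit the large tail, which yields the $\max$ in the statement.
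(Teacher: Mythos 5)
There is a genuine gap, and it sits in the core change-of-measure step. You compare problems \emph{inside} the class: a base problem $\cB_0$ against swapped problems $\cB_1^{\sigma}$ in which the other expert is the better one, and then symmetrize over $\sigma$. But in the sequential chain rule the per-entry KL terms are weighted by expected query counts under a fixed anchor measure, here $\cB_0$, so the constraint you extract for each $\sigma$ reads
\[
\sum_{j=1}^d \mathbb{E}_{\cB_0}[N_{1,j}]\,\frac{\underline{\Delta}_j^2}{2} \;+\; \sum_{j=1}^d \mathbb{E}_{\cB_0}[N_{2,j}]\,\frac{\underline{\Delta}_{\sigma^{-1}(j)}^2}{2}\;\gtrsim\; \log(1/\delta)\,.
\]
The first sum does not depend on $\sigma$ at all, so averaging over $\sigma$ cannot dilute it: a query profile that concentrates $O\left(\log(1/\delta)/\max_j\underline{\Delta}_j^2\right)$ queries on entry $(1,j^*)$, where $j^*$ is the largest-gap coordinate of $\cB_0$ (a location the anchor measure itself reveals, since gap magnitudes are estimable under $\cB_0$), satisfies \emph{every} one of these constraints simultaneously with a budget that can be a factor $d$ below the claimed $d\log(1/\delta)/\|\underline{\Delta}\|_2^2$. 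Hence the constraints produced by your pairing-plus-symmetrization argument are too weak to imply the theorem; the claim that ``the effective per-query information is the average gap-squared'' is only valid when the expectations in the KL decomposition are taken under a measure that is itself invariant to the permutation, which $\cB_0$ is not.

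This is exactly what the paper's proof arranges, and it is the idea missing from your proposal: all alternatives $(\pi_z,\sigma)$ are compared to the \emph{null} distribution $\mathbb{P}^0$ in which the two experts are identical. Under $\mathbb{P}^0$ the counts $\mathbb{E}^0[N_{i,j}]$ are the same for every $\sigma$, so Jensen's inequality together with the permutation-averaging identity (Lemma~\ref{lem:opt}) legitimately converts $\frac{1}{d!}\sum_{\sigma}\sum_j \mathbb{E}^0[N_{i,j}]\underline{\Delta}^2_{\sigma^{-1}(j)}$ into $\mathbb{E}^0[N]\,\|\underline{\Delta}\|_2^2/d$, which is where the $d/\|\underline{\Delta}\|_2^2$ rate comes from. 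The price of this anchor is that $\mathbb{P}^0$ is not an admissible instance (it violates identifiability), so $\delta$-accuracy gives no direct control under it; the paper compensates with two further devices that your write-up does not contain: (i) a limiting argument over instances with vanishing gap $\kappa\to 0$ showing that a $\delta$-accurate algorithm stops within $x$ queries under $\mathbb{P}^0$ with probability at most $2\delta$, where $x$ is the $\delta$-quantile of the budget over the class; and (ii) the reduction to an auxiliary test $A'$ (``did $A$ stop within $x$ queries?''), whose error probabilities under $\mathbb{P}^0$ and under the alternatives feed into the total-variation/KL bound of Lemma~\ref{lem:tvkl} and produce precisely the high-probability form of the statement. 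Your idea of applying the change of measure on the event $\{N<t\}$ is the right instinct for point (ii), but without the permutation-invariant null anchor the argument cannot reach the stated rate.
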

This theorem lower bounds the budget of any $\delta$-accurate algorithm by $\frac{d}{\|M_1-M_2\|_2^2} \log\Big(\frac{1}{6\delta}\Big)$, which matches up to logarithmic terms the upper bound in Theorem~\ref{thm:up2}. 
Interestingly, the query complexity depends therefore only on $\|(M_0)_1-(M_0)_2\|_2^2$, independently of the gap profile of $(M_0)_1-(M_0)_2$ - i.e.~of whether there are many small differences in performances across tasks, or a few large differences. Of course, a related optimal algorithm would solve differently the width versus depth trade-off on a sparse or dense problem, as discussed in Section~\ref{sec:two_e}, yet it does not show in the final bound on the query complexity thanks to the adaptivity of the sampling. A related phenomenon was already observed - albeit in a different regime and context - in~\citep{castro2014adaptive}.\\
The bound on $N$ in Theorem~\ref{thm:lwrn} is in high probability, on an event of high probability $1-\delta$ - where $1-\delta$ is also the minimal probability of being accurate for the algorithm. This matches our upper bound in Theorem~\ref{thm:up2}, where we also provide high probability upper bounds for $N$.







\subsection{Lower bound in the general case (any $n$)}

We now consider the general problem of ranking and best expert identification when $n>2$. As these two problems are not equivalent anymore, we provide two lower bounds.

In this part, we will consider classes of problems for constructing the lower bound which are wider than the one constructed for the case $n=2$, see Theorem~\ref{thm:lwrn} and the class $\mathbb D_{M_0}$. Driven by the fact that the quantity that appears there is the $L_2$ norm between experts, we will define the classes of problems by imposing constraints on the $L_2$ distance between pairs of experts.

\subsubsection{Ranking identification}
Fix any $\bm{\Delta} = (\Delta_1, \dots, \Delta_{n-1})$, such that $\Delta_i >0$ for each $i \le n-1$. Write $\mathbb{D}_{\bm{\Delta}}^{n,d}$ for the set of performance distributions, namely 
of distributions of $X = (X_{i,j})_{i \in \intr{n}, j\in \intr{d}}$ corresponding to means $M$, such that: (i) Assumption~\ref{assump:sg} is satisfied for X, and (ii) The mean performance $M\in\mathcal{M}_{n,d}$ satisfies
	\[
	\|M_{\pi(i)}-M_{\pi(i+1)}\|_2^2= \Delta^2_i,\text{ for }i=1,\ldots, n-1\ . 
	\]
The class $\mathbb{D}_{\bm{\Delta}}^{n,d}$ is such that the $L_2$ distance between the $i$-th best expert $\pi(i)$ and the $i+1$-th best expert $\pi(i+1)$ is fixed to $\Delta_i$. We however do not make further assumption on the gap structure within questions, as is done in Theorem~\ref{thm:lwrn} through the class $\mathbb D_{M_0}$.

Next, we provide a minimax lower bound for general $n$. The following theorem lower bound the expected budget when we fix the sequence of $L_2$ distances between consecutive rows. 
\begin{theorem}\label{thm:lwrn_minimax}
	Let $n, d \ge 1$, $\delta \in (0,1)$. For any $\delta$-accurate algorithm $A$ for ranking identification $\bm{(R)}$, we have:
	\[
	\max_{\cB \in \mathbb{D}^{n,d}_{\bm{\Delta}}} \mathbb{E}_{\cB, A}[N] \ge \sum_{i=1}^{n-1}\frac{d}{\Delta_i^2} \log(1/(4\delta)).
	\]
	where $\mathbb{E}_{\cB, A}$ is the probability corresponding to samples collected by algorithm $A$ on problem $\cB$. 
\end{theorem}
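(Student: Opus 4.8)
The plan is to prove the minimax lower bound via a change-of-measure (information-theoretic) argument, adapting the standard technique from the pure-exploration bandit literature \citep{kaufmann2016complexity, garivier2016optimal} to our multi-task setting, and then summing the per-comparison costs. First I would reduce the global ranking lower bound to $n-1$ local two-expert problems. The idea is that ranking identification requires, in particular, correctly ordering each consecutive pair $\pi(i), \pi(i+1)$; the fixed $L_2$ distances $\Delta_i$ make these subproblems essentially independent. For the base case of a single pair with gap $\Delta_i$, I would construct two alternative problems $\cB$ and $\cB'$ in $\mathbb{D}^{n,d}_{\bm{\Delta}}$ that are indistinguishable except for the relative order of experts $\pi(i)$ and $\pi(i+1)$: take Gaussian distributions $\cN(M_{kj}, 1)$ so that the sub-Gaussian assumption holds, and let $\cB'$ swap the two means in a way that flips their order while preserving every prescribed $L_2$ distance. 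Any $\delta$-accurate algorithm must output different rankings on $\cB$ and $\cB'$, so the two problems cannot both be solved with too few queries.

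The key quantitative step is the transportation / data-processing inequality. For any stopping time $N$ and any event $\mathcal{E}$ measurable with respect to the history, one has
\[
\sum_{i,j} \mathbb{E}_{\cB,A}[N_{ij}]\, \KL(\nu_{ij}, \nu'_{ij}) \ge \KL\big(\mathbb{P}_{\cB,A}|_{\mathcal{E}}, \mathbb{P}_{\cB',A}|_{\mathcal{E}}\big) \ge \kl(\delta, 1-\delta),
\]
where $N_{ij}$ is the number of pulls of entry $(i,j)$ and $\kl$ is the binary relative entropy. Choosing $\mathcal{E}=\{\hat\pi=\pi_{\cB}\}$ and using $\delta$-accuracy on both problems gives $\kl(\delta,1-\delta)\ge \log(1/(4\delta))$ after bounding the binary KL from below. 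Since only the two rows $\pi(i),\pi(i+1)$ differ between $\cB$ and $\cB'$, and for unit-variance Gaussians $\KL(\cN(a,1),\cN(b,1))=(a-b)^2/2$, the left-hand side collapses to $\tfrac{1}{2}\sum_j (\text{mean-gap})_j^2$ times the relevant expected counts, which I would arrange to equal $\tfrac{1}{2}\Delta_i^2$ times the total expected number of queries spent on this pair. Rearranging yields an expected budget of at least $\tfrac{d}{\Delta_i^2}\log(1/(4\delta))$ for each consecutive comparison (the factor $d$ arising because the $L_2$ gap $\Delta_i$ is spread over $d$ coordinates and our parametrization of $\mathbb{D}^{n,d}_{\bm\Delta}$ fixes $\|M_{\pi(i)}-M_{\pi(i+1)}\|_2^2=\Delta_i^2$). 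Summing over $i=1,\dots,n-1$ gives the claimed bound.

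The main obstacle I anticipate is making the $n-1$ subproblems genuinely additive rather than merely proving each bound in isolation. Two difficulties arise: first, I must exhibit a single pair of alternatives (or a family of couplings) for which the perturbation of consecutive pair $i$ does not disturb the other fixed distances $\Delta_k$, $k\neq i$; the telescoping monotone structure under \eqref{eq:order} is delicate because altering two adjacent rows can violate monotonicity against their other neighbours. I would handle this by perturbing within a single coordinate $j$ at a time, or by using a ``staircase'' construction where each $\Delta_i$ is realized on a disjoint block of tasks, so that swapping one consecutive pair leaves all constraints intact. Second, the change-of-measure argument must be applied carefully with a random stopping time $N$; I would invoke Wald-type identities valid for stopping times (as in \citealp{kaufmann2016complexity}) to justify $\mathbb{E}_{\cB,A}[N_{ij}]\KL$ as the correct information term. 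Once these constructions are pinned down, the summation and the lower bound $\kl(\delta,1-\delta)\ge\log(1/(4\delta))$ are routine, and the desired inequality follows.
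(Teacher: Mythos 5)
Your overall machinery --- Gaussian instances, flips of consecutive experts, the transportation inequality of \cite{kaufmann2016complexity} together with $\mathrm{kl}(\delta,1-\delta)\geq \log(1/(4\delta))$, and summation over the $n-1$ pairs --- is the same as in the paper's proof. The genuine gap is in the choice of the hard instance, which you leave unspecified, and the two constructions you offer to secure additivity would both destroy the factor $d$. The per-pair cost $d/\Delta_i^2$ comes precisely from spreading the gap $\Delta_i$ uniformly over \emph{all} $d$ tasks, so that each sample of a perturbed row carries only $\Delta_i^2/(2d)$ of Kullback--Leibler information. If instead you realize $\Delta_i$ on a disjoint block $T_i$ of tasks (your ``staircase'' fix), the per-sample KL on that block is $\Delta_i^2/(2\abs{T_i})$, and the change-of-measure inequality only forces roughly $2\abs{T_i}\Delta_i^{-2}\log(1/(4\delta))$ queries for pair $i$; since disjointness imposes $\sum_i \abs{T_i}\leq d$, the summed bound falls short of the theorem by up to a factor of order $n$. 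The single-coordinate perturbation is the extreme case $\abs{T_i}=1$ and is weaker still; moreover, when rows $i$ and $i+1$ differ on several coordinates, perturbing one coordinate of one row generally yields a matrix that is not monotone up to permutation, so $\delta$-accuracy tells you nothing about that alternative.

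The correct instance is the one the paper uses: constant rows, $(M_0)_{i,j}=1-\sum_{l<i}\Delta_l/\sqrt{d}$, so that every consecutive gap equals $\Delta_i/\sqrt{d}$ in every task --- all pairs share all $d$ tasks, and no disjointness of tasks is needed. On this instance your additivity worry dissolves by bookkeeping rather than by construction. Either argue as the paper does: for pair $i$, use the alternative that raises only row $i+1$ just above row $i$ (this is still a monotone instance, so accuracy applies to it even though it leaves $\mathbb{D}^{n,d}_{\bm{\Delta}}$; equivalently, invoke the per-arm bound for two-armed Gaussian best-arm identification in \cite{kaufmann2016complexity}), which gives $\mathbb{E}[N_{i+1}]\geq d\,\Delta_i^{-2}\log(1/(4\delta))$ where $N_{i+1}$ counts queries to expert $i+1$ only; these counts are disjoint across $i=1,\dots,n-1$, so they sum to at most $\mathbb{E}[N]$. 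Or keep your row-swap alternative $\cB'$ (which, contrary to your concern, never violates monotonicity: swapping two rows is a relabeling of experts, the sorted matrix is unchanged, and all constraints defining $\mathbb{D}^{n,d}_{\bm{\Delta}}$ hold automatically); then both rows contribute KL, the transportation inequality gives $\mathbb{E}[N_i]+\mathbb{E}[N_{i+1}]\geq 2d\,\Delta_i^{-2}\log(1/(4\delta))$, and since each row appears in at most two consecutive pairs, summing over $i$ loses exactly the factor $2$ that the swap gained, recovering $\mathbb{E}[N]\geq\sum_{i=1}^{n-1} d\,\Delta_i^{-2}\log(1/(4\delta))$. Without one of these two repairs, your proposal as written does not establish the stated bound.
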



\subsubsection{Best expert identification}

Fix any positive and non-decreasing sequence $\overline{\bm{\Delta}} = (\overline{\Delta}_1, \dots, \overline{\Delta}_{n-1})$, such that $\Delta_i >0$ for each $i \le n-1$. Write $\overline{\mathbb{D}}_{\overline{\bm{\Delta}}}^{n,d}$ for the set of performance distributions, namely 
of distributions of $X = (X_{i,j})_{i \in \intr{n}, j\in \intr{d}}$ corresponding to means $M$, such that:

the set of distributions of experts performances such that: (i) Assumption~\ref{assump:sg} is satisfied for X, and (ii) The mean performances matrix $M\in\mathcal{M}_{n,d}$ satisfies 
\[
\|M_{\pi(1)}-M_{\pi(i+1)}\|_2^2\geq  \overline{\Delta}^2_i,\text{ for }i=1,\ldots, n-1\ . 
\]
The class $\overline{\mathbb{D}}_{\overline{\bm{\Delta}}}^{n,d}$ is such that the $L_2$ distance between the best expert $\pi(1)$ and the $i$-th best expert $\pi(i+1)$ is fixed to $\overline{\Delta}_i$. It is related to the construction for Theorem~\ref{thm:lwrn_minimax} of the set $\mathbb{D}_{\bm{\Delta}}^{n,d}$, yet here we only consider the distance to the best expert.

\begin{theorem}\label{thm:lwrn_minimax_best_arm}
	Let $n, d \ge 1$, $\delta \in (0,1)$. 
	For any $\delta$-accurate algorithm $A$ for best expert identification $\bm{(B)}$, we have:
	\[
	\min_{A \in \bA(\delta)} \max_{\cB \in \overline{\mathbb{D}}^{n,d}_{\bm{\Delta}}} \mathbb{E}_{\cB, A}[N] \ge \sum_{i=1}^{n-1}\frac{d}{\overline{\Delta}_i^2} \log(1/(4\delta)),
	\]
	where $\mathbb{E}_{\cB, A}$ is the probability corresponding to samples collected by algorithm $A$ on bandit problem $\cB$. 
\end{theorem}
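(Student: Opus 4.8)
The plan is to prove this minimax lower bound by the standard change-of-measure argument for fixed-confidence identification, adapting the transportation lemma of \cite{kaufmann2016complexity} to our multivariate setting. I regard each expert--task pair $(i,j)$ as an ``arm'', write $N_{ij}$ for the (random) number of times $A$ queries $(i,j)$ before stopping, so that $N=\sum_{i,j}N_{ij}$, and take all observations to be unit-variance Gaussians (which are $1$-sub-Gaussian). The lemma then gives, for any two performance distributions $\cB,\cB'$ differing only through their means $M,M'$ and any event $\mathcal E$ measurable at the stopping time,
\[
\sum_{i,j}\mathbb E_{\cB,A}[N_{ij}]\,\KL(\nu_{ij}\|\nu'_{ij})\ \ge\ \mathrm{kl}\big(\mathbb P_{\cB,A}(\mathcal E),\mathbb P_{\cB',A}(\mathcal E)\big),
\]
where $\mathrm{kl}(x,y)=x\log\tfrac xy+(1-x)\log\tfrac{1-x}{1-y}$ and $\KL(\nu_{ij}\|\nu'_{ij})=\tfrac12(M_{ij}-M'_{ij})^2$. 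As the statement concerns an arbitrary $\delta$-accurate $A$, lower bounding $\max_{\cB}\mathbb E_{\cB,A}[N]$ for each such $A$ yields the claim with the outer $\min_{A\in\bA(\delta)}$ (and $\mathbb E_{\cB,A}[N]=\infty$ makes the bound trivial, so I may assume it finite).

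Next I would fix a reference instance $\cB^\star$ with Gaussian entries whose means are constant across tasks: $M_{\pi(1),j}=c$ and $M_{\pi(i+1),j}=c-\overline\Delta_i/\sqrt d$ for every $j\in\intr d$ and $i\in\intr{n-1}$, with $c$ close to $1/2$. Since $\overline{\bm{\Delta}}$ is non-decreasing, this $M$ is monotone with ordering $\pi$, lies in $\mathcal M_{n,d}$, and satisfies $\|M_{\pi(1)}-M_{\pi(i+1)}\|_2^2=\overline\Delta_i^2$, so $\cB^\star\in\overline{\mathbb D}^{n,d}_{\bm{\Delta}}$ (any ties in $\overline{\bm{\Delta}}$ are broken by an arbitrarily small within-task perturbation that preserves the $L_2$ distances and restores identifiability). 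For each $i\in\intr{n-1}$ I then introduce a local alternative $\cB^{(i)}$ obtained from $\cB^\star$ by modifying only the row of $\pi(i+1)$, raising it just above $\pi(1)$ on every task: $M^{(i)}_{\pi(i+1),j}=c+\eta$ for a small $\eta>0$. Then $\pi(i+1)$ strictly dominates every other expert on every task, so $\cB^{(i)}\in\mathcal M_{n,d}$ with unique best expert $\pi(i+1)\neq\pi(1)$, and $A$ remains $\delta$-accurate on it.

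I would then apply the displayed inequality to the pair $(\cB^\star,\cB^{(i)})$ with the event $\mathcal E=\{\hat k=\pi(1)\}$. The two instances differ only on the row of $\pi(i+1)$, where the per-entry mean shift is $\overline\Delta_i/\sqrt d+\eta$, so the left-hand side collapses to $\sum_{j}\mathbb E_{\cB^\star,A}[N_{\pi(i+1),j}]\,\tfrac12(\overline\Delta_i/\sqrt d+\eta)^2$. By $\delta$-accuracy, $\mathbb P_{\cB^\star,A}(\mathcal E)\ge 1-\delta$ and $\mathbb P_{\cB^{(i)},A}(\mathcal E)\le\delta$, so with the standard bound $\mathrm{kl}(1-\delta,\delta)\ge\log\tfrac1{4\delta}$ and letting $\eta\downarrow 0$ I obtain $\frac{\overline\Delta_i^2}{2d}\sum_{j}\mathbb E_{\cB^\star,A}[N_{\pi(i+1),j}]\ge\log\tfrac1{4\delta}$, i.e. the queries spent on expert $\pi(i+1)$ number at least $\frac{2d}{\overline\Delta_i^2}\log\tfrac1{4\delta}$. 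Because the index sets $\{(\pi(i+1),j)\}_j$ are disjoint across $i$ and all counts are taken under the single measure $\cB^\star$, summing and discarding the queries to $\pi(1)$ gives $\mathbb E_{\cB^\star,A}[N]\ge\sum_{i=1}^{n-1}\frac{2d}{\overline\Delta_i^2}\log\tfrac1{4\delta}$, which is even stronger than the claimed bound.

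The main obstacle is not the information-theoretic computation but ensuring the reference and all alternatives genuinely satisfy Assumption~\ref{assump} while keeping the $L_2$ gaps exactly equal to $\overline\Delta_i$. The bounded-mean constraint $M_{ij}\in[0,1]$ caps the admissible gaps, so I would treat indices $i$ with $\overline\Delta_i^2$ of order $d$ separately (there each summand $\frac d{\overline\Delta_i^2}\log\tfrac1{4\delta}$ is $O(\log\tfrac1{4\delta})$ and is absorbed into the constant), while for the remaining indices centering the means near $1/2$ keeps $c+\eta$ and $c-\overline\Delta_i/\sqrt d$ inside $[0,1]$. The limiting step $\eta\downarrow 0$ must be justified carefully: each $\eta>0$ yields a legitimate alternative on which $\delta$-accuracy holds, the left-hand side is continuous in $\eta$, and the right-hand side does not depend on $\eta$, so the bound passes to the limit; alternatively one may keep $\eta$ fixed and absorb the excess into a constant that still dominates the stated one.
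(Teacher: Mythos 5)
Your proof is correct and follows essentially the same route as the paper: the paper's own proof simply observes that $\overline{\mathbb{D}}^{n,d}_{\overline{\bm{\Delta}}}$ contains instances whose rows are constant across tasks, reducing the problem to standard multi-armed-bandit best-arm identification with per-query gaps $\overline{\Delta}_i/\sqrt{d}$, and cites the lower bound of \cite{kaufmann2016complexity} --- which is exactly the change-of-measure argument you carry out explicitly. Your writeup is in fact more careful than the paper's (which never discusses the $[0,1]$ mean constraint at all); the only blemish is that ``absorbing'' the indices with $\overline{\Delta}_i^2$ of order $d$ into a constant is not available since the stated bound has no slack constant, and the cleaner fix is to center the reference at $c=\max_i \overline{\Delta}_i/\sqrt{d}$, which makes every alternative admissible simultaneously except in the degenerate case $\overline{\Delta}_{n-1}^2=d$.
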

\section{Numerical simulations}\label{sec:simu}
In this section, we perform some numerical simulations on synthetic data to compare our Algorithm with a benchmark procedure from the literature. We chose \texttt{AR} algorithm from \cite{heckel2019active} since they considered the problem of ranking experts in an active setting. Their method is based on pairwise comparisons and uses Borda scores as a criterion to rank experts. In order to harness their model into ours, we proceed as follows: when querying a pair of experts $(i,j)$, we sample a task uniformly at random from $\intr{d}$, and sample the performances of experts on this task then output the result.

We focus on the specific problem of identifying the best out of two experts ($n=2$) and $d=10$ tasks. For each $s \in \intr{d}$, we consider the following scenario:  the performances of both experts in each task follow a normal distribution with unit variance. The means of performances of the sub-optimal expert $M_{\pi(2)}$ are drawn from $[0, 1/2]$ following the uniform law. We build a gaps vector $\Delta_s$ that is $s$-sparse, the non-zero tasks are drawn uniformly at random from $\intr{d}$, and the value of the $k^{\text{th}}$ non-zero gap is set to $\left(\frac{k}{3s} \right)^2$. Then we consider $M_{\pi(1)} = M_{\pi(2)}+\Delta_s$. Figure\ref{fig1} presents the sample complexity of Algorithm~\ref{algo1} with parameters $(\delta, 0)$ and $\texttt{AR}$ from \cite{heckel2019active}, as a function of the sparsity ratio $s/d$ for $s\in \intr{d}$. The results are averaged over $20$ simulations for each scenario, in all simulations both algorithms made the correct output.

Figure~\ref{fig1} presents the empirical sample complexity of Algorithm~\ref{algo1} and $\texttt{AR}$ as a function of the sparsity rate $s/d$. The results show, as suggested by theoretical guarantees, that Algorithm~\ref{algo1} with parameters $(\delta, 0)$ outperforms \texttt{AR} for small sparsity rates, mainly due to its ability to detect large gaps, as discussed previously. For large sparsity rates, the gaps vector is dense, and evaluating experts using their average performance across all tasks proves to be efficient. In the last regime, \texttt{AR} procedure outperforms Algorithm~\ref{algo1}.   
\begin{figure}\label{fig1}
	\centering
	\includegraphics[width=8cm, height=5cm]{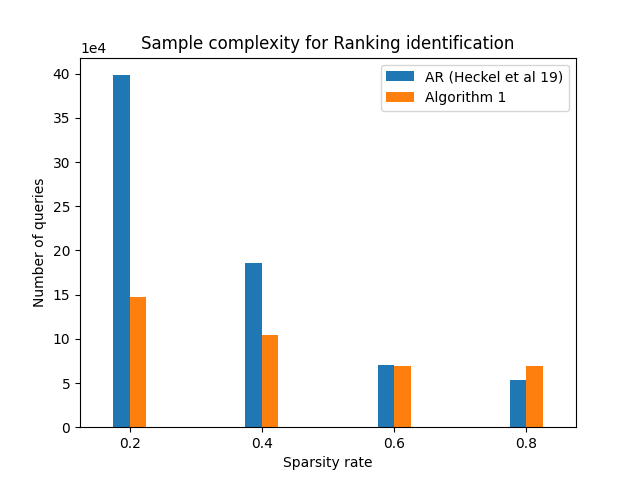}
	\caption{Empirical sample complexity of \texttt{AR} by \cite{heckel2019active} and Algorithm~\ref{algo1} in this paper. We varied the sparsity rate of the gaps vector $s/d \in [5\%, 100\%]$. The presented results are averaged over 20 simulations.}
\end{figure}

\section{Conclusion}

In this paper, we have addressed the challenge of ranking a set of $n$ experts based on sequential queries of their performance variables in $d$ tasks. By assuming the monotonicity of the mean performance matrix, we have introduced strategies that effectively determine the correct ranking of experts. These strategies optimize the allocation of queries to tasks with larger gaps between experts, resulting in a considerable improvement compared to traditional measures like Borda Scores.

Our research has unveiled several promising avenues for future exploration. One notable direction involves narrowing the poly-logarithmic gap in $n$ between our upper and lower bounds for both full ranking and best expert identification. Achieving this goal will require the development of more refined ranking strategies, which we leave for future investigation. Additionally, relaxing the monotonicity assumption considered in this study and adopting a more inclusive framework that accommodates diverse practical applications would be an intriguing area to explore. It would be worthwhile to scrutinize the assumptions made in the study conducted by \cite{bengs2021preference} as a potential direction for further research.
\vspace{-0.2cm}
\paragraph{Acknowledgments:} The work of A. Carpentier is partially supported by the Deutsche Forschungsgemeinschaft
(DFG) Emmy Noether grant MuSyAD (CA 1488/1-1), by the DFG – 314838170, GRK 2297
MathCoRe, by the FG DFG, by the DFG CRC 1294 ‘Data Assimilation’, Project A03, by the
Forschungsgruppe FOR 5381 “Mathematical Statistics in the Information Age – Statistical
Efficiency and Computational Tractability”, Project TP 02, by the Agence Nationale de la
Recherche (ANR) and the DFG on the French-German PRCI ANR ASCAI CA 1488/4-1
“Aktive und Batch-Segmentierung, Clustering und Seriation: Grundlagen der KI” and by
the UFA-DFH through the French-German Doktorandenkolleg CDFA 01-18 and by the SFI
Sachsen-Anhalt for the project RE-BCI. The work of E.M. Saad and N. Verzelen 
is supported by ANR-21-CE23-0035 (ASCAI).


\bibliography{biblio}
\bibliographystyle{plain}

\newpage
\appendix
\onecolumn

\section{Additional numerical simulations}

We add a numerical simulation on synthetically generated data. We consider the task of comapring two experts and suppose that the performance gaps vector is sparse, with a fixed sparsity rate of $1/3$. We conduct simulations for various dimension (number of tasks $d$) and plot the sample complexities of our algorithm and the benchmark algorithm AR (\cite{heckel2019active}). Figure~\ref{sim2} displays the results. 
\begin{figure}\label{sim2}
	\centering
	\includegraphics[width=8cm, height=5cm]{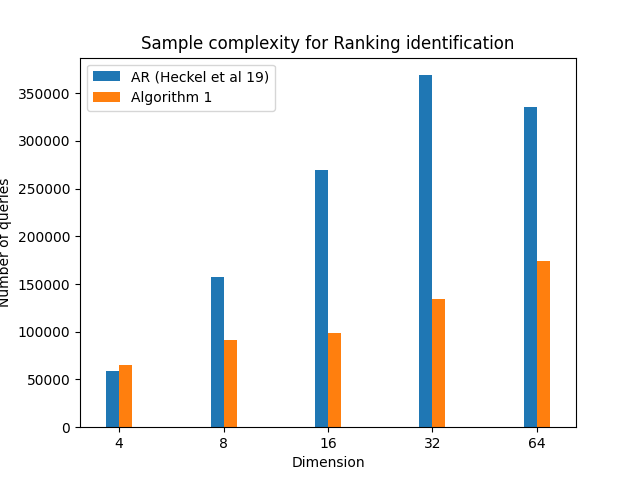}
	\caption{Empirical sample complexity of \texttt{AR} by \cite{heckel2019active} and Algorithm~\ref{algo1} in this paper. We varied the number of tasks $d \in [4, 8, 16, 32, 64]$ while keeping the sparsity rate constant $1/3$. The presented results are averaged over 20 simulations.}
\end{figure}

\section{Proof of Theorem~\ref{thm:up2}}
Suppose Assumption~\ref{assump} holds and that (w.l.o.g) expert 1 is the optimal expert, that is $\pi(1)=1$.
\paragraph{Additional notation:} Define $x_j := M_{1,j} - M_{2,j}$. Let $(x_{(j)})_{j \in \intr{d}}$ denotes the $(x_{j})_{j \in \intr{d}}$  ordered in a decreasing order. Next, we introduce the effective sparsity $s^*$ of the vector $x$. Lemma~\ref{lem:sparse} states that there exists $s^*\in \intr{d}$ such that
\[
\|x\|_2^2 \le s^*\log(2d)~x_{(s^*)}^2\ .
\]
If $x$ had been $s$-sparse and if all the non-zero components of $x$ had been equal, then we would have $s=s^*$ and the above inequality would be valid (without the $\log(2d)$). Here, the virtue of the effective sparsity $s^*$ is that it is defined for arbitrary vectors $x$. Then, we denote
\begin{equation*}
	x_* := x_{(s^*)} \ , \qquad  \Delta_*^2 := s^* x^2_*\ ,\qquad\text{ and } \quad \cS^* := \{ i \in \intr{d}: x_i \ge x_*\}\ ,
\end{equation*}
where $\Delta_*^2$ plays the role of the square norm $x$ at the scale $s^*$ and $\cS^*$ is the corresponding set of size $s^*$ of coordinates larger or equal to $x_*$.

\noindent In order to prove Theorem~\ref{thm:up2}, we will follow three steps:
\begin{itemize}
	\item Part 1: In Lemma~\ref{lem:finpart1}, we show that Algorithm~\ref{algo1} outputs $\hat{k} \in \{\texttt{null}, 1 \}$ with probability at least $1-\delta$.
	\item Part 2: In Lemma~\ref{lem:finpart2} we consider Algorithm~\ref{algo3} with input $(\delta, s, h)$. If $h < x_*$ and $s <s^*$, then, with probability at least $1-\log(d)\delta$, the output of the algorithm is $\hat{k}= 1$.
	\item Part 3: As a conclusion, we will gather previous lemmas to derive a bound on the total number of queries until the termination of Algorithm~\ref{algo1}. 
\end{itemize}

.

\paragraph{Part 1:} Let us start by introducing the following concentration result for the empirical means computed in Algorithm~\ref{algo3}. In the lemma below,  the set $S$ is considered as fixed. More generally,  $S$ can be any random set independent of the samples used to compute the means $(\hat{\mu}_{j,T})_{j,T}$

\begin{lemma}\label{lem:hoef}
	Let $T \ge 1$, fix $j \in \intr{d}$ and $S \subset \intr{d}$. We have
	\begin{equation*}
		\mathbb{P}\left( \frac{1}{\abs{S}}\sum_{j \in S} \hat{\mu}^{(21)}_{j,T} > \sqrt{\frac{2\log(1/\delta)}{\abs{S}T}} \right) \le \delta,
	\end{equation*}
	where $\hat{\mu}^{(21)}_{j,T} = \frac{1}{T} \sum_{u=1}^{T} \left(X_{2,j}^{(u)} - X_{1,j}^{(u)}\right)$.
\end{lemma}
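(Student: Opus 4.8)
The plan is to recognise the left-hand quantity as the empirical average of $N := \abs{S}\,T$ mutually independent sub-Gaussian random variables whose common signed mean is non-positive, and then to apply a standard Chernoff / sub-Gaussian tail bound. Concretely, set $W_{j,u} := X_{2,j}^{(u)} - X_{1,j}^{(u)}$ for $j \in S$ and $u \in \intr{T}$, so that $\frac{1}{\abs{S}}\sum_{j\in S}\hat\mu^{(21)}_{j,T} = \frac{1}{N}\sum_{j\in S}\sum_{u=1}^{T} W_{j,u} =: \bar W$.

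First I would record two facts. (i) By the monotonicity assumption together with the normalisation $\pi(1)=1$, we have $M_{2,j}\le M_{1,j}$ for every $j$, hence $\mathbb{E}[W_{j,u}] = M_{2,j}-M_{1,j}\le 0$, so $\mathbb{E}[\bar W]\le 0$. (ii) Since $X_{1,j}^{(u)}$ and $X_{2,j}^{(u)}$ are independent and each $1$-sub-Gaussian (Assumption~\ref{assump:sg}), the difference $W_{j,u}$ is centered-sub-Gaussian, and the samples are mutually independent across the pairs $(j,u)$ by the independence of the collected observations.

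Next I would apply the Chernoff method to the centered average $\bar W - \mathbb{E}[\bar W]$. The moment generating function of a sum of $N$ independent sub-Gaussians with per-sample variance proxy $\sigma^2$ factorises, giving $\mathbb{E}\big[e^{\lambda(\bar W-\mathbb{E}\bar W)}\big]\le e^{\lambda^2\sigma^2/(2N)}$, and optimising over $\lambda>0$ yields $\mathbb{P}(\bar W - \mathbb{E}\bar W > t)\le \exp(-Nt^2/(2\sigma^2))$. Using fact (i) to drop the non-positive mean, $\mathbb{P}(\bar W > t)\le \mathbb{P}(\bar W - \mathbb{E}\bar W > t)$, and substituting $t = \sqrt{2\log(1/\delta)/N}$ delivers the claimed $\le\delta$. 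If $S$ is not deterministic but merely independent of the samples, I would first condition on $S$: the bound holds for every realisation, hence also unconditionally by taking expectation over $S$.

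The only delicate point is the sub-Gaussian constant. The stated threshold $\sqrt{2\log(1/\delta)/(\abs{S}T)}$ is exactly what a variance proxy $\sigma^2=1$ per sample produces, whereas simply adding two independent $1$-sub-Gaussian proxies would give the weaker $\sigma^2\le 2$. To recover the sharp constant I would invoke the boundedness part of Assumption~\ref{assump} (performances in $[0,1]$, so $W_{j,u}\in[-1,1]$ is $1$-sub-Gaussian by Hoeffding's lemma), which yields $\sigma^2=1$ and hence the stated bound with equality at $t=\sqrt{2\log(1/\delta)/N}$. Everything else is the routine Chernoff computation, so I expect no real obstacle beyond tracking this constant and justifying the conditioning on $S$.
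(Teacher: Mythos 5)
Your overall route is exactly the paper's: write the left-hand side as the average of $N=\abs{S}T$ independent differences $W_{j,u}=X_{2,j}^{(u)}-X_{1,j}^{(u)}$, use monotonicity (with the normalisation $\pi(1)=1$) to see that $\mathbb{E}[W_{j,u}]=M_{2,j}-M_{1,j}\le 0$, drop this non-positive mean, and finish with a Chernoff bound; your remark about conditioning on a random $S$ independent of the samples also matches the paper's own caveat before the lemma. Up to that point there is nothing to object to.

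The genuine gap is in your resolution of the ``delicate point''. Assumption~\ref{assump} bounds the \emph{mean} performances, $M_{i,j}\in[0,1]$; it says nothing about the observations $X_{i,j}^{(u)}$ themselves, which by Assumption~\ref{assump:sg} are merely $1$-sub-Gaussian --- e.g.\ Gaussians with variance $1$, hence unbounded. So you cannot invoke Hoeffding's lemma to get $W_{j,u}\in[-1,1]$ and a variance proxy of $1$: under the stated assumptions the best available proxy for the difference of two independent $1$-sub-Gaussian variables is $2$, and this is tight (take both coordinates standard Gaussian). With proxy $2$, the threshold $\sqrt{2\log(1/\delta)/(\abs{S}T)}$ only yields a tail bound of $\sqrt{\delta}$; to get $\delta$ one must inflate the threshold to $2\sqrt{\log(1/\delta)/(\abs{S}T)}$ (or accept $\sqrt{\delta}$ on the right-hand side). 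To be fair, the paper's own proof is equally cavalier on this point: it records that $X_{2,j}-X_{1,j}$ is $\sqrt{2}$-sub-Gaussian and then asserts the displayed inequality as ``a direct consequence of Chernoff's concentration inequality'', which suffers from the same factor of two in the exponent. So your instinct that the constant needed justification was sound; what fails is your repair, which imports a boundedness hypothesis the model does not make. The honest fixes are either to restate the lemma with the proxy-$2$ constant (this only perturbs numerical constants downstream) or to add boundedness of the observations as an explicit assumption.
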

\begin{proof} The proof is a straightforward consequence of Chernoff's inequality. Recall that the variable $X_{2,j} - X_{1,j}$ is $\sqrt{2}$-sub-Gaussian for any $j \in S$. Moreover, all samples used in the sum $\sum_{j \in S} \hat{\mu}_{j,T}^{(21)}$ are independent. We have
	\begin{align*}
		\mathbb{P}\left(\frac{1}{\abs{S}} \sum_{j\in S}\hat{\mu}^{(21)}_{j,T} > \sqrt{\frac{2\log(1/\delta)}{\abs{S}T}} \right) &\le \mathbb{P}\left( \frac{1}{\abs{S}} \sum_{j\in S}\hat{\mu}^{(21)}_{j,T} + \frac{1}{\abs{S}} \sum_{j\in S}x_j > \sqrt{\frac{2\log(1/\delta)}{\abs{S}T}} \right) \le \delta,
	\end{align*}
	where the first inequality follows from the fact $x_j \ge 0$ (recall that the optimal expert is $1$), and the second is a direct consequence of Chernoff's concentration inequality.
	
\end{proof}

\begin{lemma}\label{lem:delta1}
	Consider Algorithm~\ref{algo3} with input $(\delta, s, h)$. We have
	\[
	\mathbb{P}\left( \hat{k}=2\right) \le 1.75\log\left(d\right)\delta.
	\]
	
\end{lemma}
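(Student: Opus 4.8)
The plan is to show that a wrong diagnostic $\hat k = 2$ can only be produced by a single large-deviation event at one of the median-elimination rounds, to control each such event by one application of Lemma~\ref{lem:hoef}, and to finish with a union bound over the rounds. First I would record that, since we assume the optimal expert is expert $1$, the monotonicity part of Assumption~\ref{assump} gives $x_j = M_{1,j}-M_{2,j}\ge 0$ for every $j\in\intr{d}$. Inspecting Algorithm~\ref{algo3}, the value $\hat k=2$ is returned at a round $\ell$ exactly when $\hat\mu_\ell^{(21)}=-\hat\mu_\ell^{(12)}\ge \sqrt{2\log(2/\delta)/(n_0\phi)}$; moreover, with $t_\ell=n_0\phi/|S_\ell^{(12)}|$ one has $n_0\phi=|S_\ell^{(12)}|\,t_\ell$ and
\[
\hat\mu_\ell^{(21)}=\frac{1}{|S_\ell^{(12)}|}\sum_{j\in S_\ell^{(12)}}\hat\mu_{j,\ell}^{(21)},\qquad \hat\mu_{j,\ell}^{(21)}=\frac{1}{t_\ell}\sum_{u=1}^{t_\ell}\bigl(X_{2,j}^{(u,\ell)}-X_{1,j}^{(u,\ell)}\bigr),
\]
so the triggering event is precisely the one appearing in Lemma~\ref{lem:hoef} with $S=S_\ell^{(12)}$, $T=t_\ell$ and confidence $\delta/2$ (note $\log(1/(\delta/2))=\log(2/\delta)$). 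Hence $\{\hat k=2\}\subseteq\bigcup_{\ell=1}^{L}\{\hat\mu_\ell^{(21)}\ge\sqrt{2\log(2/\delta)/(n_0\phi)}\}$, where $L=\lceil\log_{4/3}(d/s)\rceil+1$ is the number of rounds.

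The crux is that the active set $S_\ell^{(12)}$ is data-dependent, so Lemma~\ref{lem:hoef} cannot be applied directly; the feature to exploit is that Algorithm~\ref{algo3} draws \emph{fresh} samples $(X^{(u,\ell)})_u$ at each round $\ell$. I would introduce the filtration $\mathcal{F}_{\ell-1}$ generated by the initial draw $S_1$ together with all samples from rounds $1,\dots,\ell-1$, and observe that $S_\ell^{(12)}$, produced by median elimination from the past, is $\mathcal{F}_{\ell-1}$-measurable, whereas the round-$\ell$ samples are independent of $\mathcal{F}_{\ell-1}$. Conditionally on $\mathcal{F}_{\ell-1}$ the set $S_\ell^{(12)}$ is therefore a fixed set, and since $x_j\ge 0$ for all $j$ (in particular for all $j\in S_\ell^{(12)}$), Lemma~\ref{lem:hoef} applies with $\delta\leftarrow\delta/2$ and yields $\mathbb{P}(\hat\mu_\ell^{(21)}\ge\sqrt{2\log(2/\delta)/(n_0\phi)}\mid\mathcal{F}_{\ell-1})\le\delta/2$. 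Taking expectations removes the conditioning, so each round contributes a failure probability of at most $\delta/2$.

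A union bound over the $L$ rounds then gives $\mathbb{P}(\hat k=2)\le L\delta/2$, and it remains to bound the round count $L=\lceil\log_{4/3}(d/s)\rceil+1\le 3.5\log(d)$ (using $s\ge 1$ and $1/\log(4/3)\le 3.5$), which produces the claimed $1.75\log(d)\,\delta$. The main obstacle is the measurability/independence bookkeeping of the second paragraph: one must argue carefully, through the per-round fresh-sampling structure and the filtration $\mathcal{F}_{\ell-1}$, that the adaptive median-elimination selection of $S_\ell^{(12)}$ does not invalidate the conditional concentration bound of Lemma~\ref{lem:hoef}. Once this is set up, the remaining steps — the union bound and the elementary estimate of the number of rounds — are routine.
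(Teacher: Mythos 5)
Your proposal is correct and follows essentially the same route as the paper: decompose $\{\hat k = 2\}$ over the rounds, control each round by Lemma~\ref{lem:hoef} at level $\delta/2$, and finish with a union bound and a $\log_{4/3}$-to-$\log$ conversion; your filtration argument is exactly the justification the paper compresses into the remark preceding Lemma~\ref{lem:hoef}, namely that $S$ may be any random set independent of the samples used to compute the means. The only discrepancy is bookkeeping: the true round count is $\lceil\log_{4/3}(d/s)\rceil+1$, and your claim that this is at most $3.5\log(d)$ fails for small $d$ (e.g.\ $d=2$), but the paper's own proof commits the same sin by taking the round count to be $\log_{4/3}(d/s)$ without the ceiling or the $+1$, so this does not constitute a gap relative to the paper's argument.
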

\begin{proof}
	We have
	\begin{align*}
		\mathbb{P}\left(\hat{k}= 2\right) &= \mathbb{P}\left( \exists \ell \le \log_{4/3}(d/s):  \hat{\mu}_{\ell}^{(21)} > \sqrt{\frac{2\log(2/\delta)}{n_0\phi}}\right)
		\le \log_{4/3}(d/s)\frac{\delta}{2}\leq 1.75\log(d)\delta\  ,
	\end{align*}
	where we used Lemma~\ref{lem:hoef}.
\end{proof}

\begin{lemma}\label{lem:finpart1}
	Consider Algorithm~\ref{algo1} with input $(\delta, \epsilon)$. The probability of outputting the wrong result satisfies
	\[
	\mathbb{P}\left( \hat{k}= 2\right) \le 0.6\delta\ .
	\]
\end{lemma}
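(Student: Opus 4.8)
The plan is to reduce the statement to Lemma~\ref{lem:delta1} by a union bound over all the calls to \texttt{Try-compare} that Algorithm~\ref{algo1} can possibly make. Since we assume $\pi(1)=1$, the output $\hat{k}=2$ is the erroneous one, and inspecting Algorithm~\ref{algo1} shows that it returns $\hat{k}=2$ only if some internal call \texttt{Try-compare}$(\delta/(10\rho^3\log(d)), s_r, h_r)$, made during the run, itself returns $2$. Hence the event $\{\hat{k}=2\}$ is contained in the union, over all index pairs $(\rho, r)$ with $1\le\rho$ and $0\le r\le\rho$, of the event that the call with parameters $(\delta/(10\rho^3\log(d)), s_r, h_r)$ outputs $2$. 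Because we only seek an upper bound, the adaptivity of which calls are actually executed is harmless: dropping the indicator that the call is reached can only enlarge each event, so the union bound over the full family of potential calls is valid.

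Next I would quantify each term. For a fixed $\rho$, the inner loop ranges over $r=0,\dots,\rho$, producing $\rho+1$ calls, each run at confidence parameter $\delta_\rho := \delta/(10\rho^3\log(d))$. Applying Lemma~\ref{lem:delta1} to a single such call (with $\delta$ there replaced by $\delta_\rho$) gives
\[
\mathbb{P}\big(\text{this call outputs } 2\big)\ \le\ 1.75\log(d)\,\delta_\rho\ =\ 1.75\log(d)\cdot\frac{\delta}{10\rho^3\log(d)}\ =\ \frac{0.175\,\delta}{\rho^3},
\]
where the $\log(d)$ factors cancel exactly by design of the confidence schedule.

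Finally I would sum over all calls. Combining the previous two steps,
\[
\mathbb{P}\big(\hat{k}=2\big)\ \le\ \sum_{\rho\ge 1}(\rho+1)\cdot\frac{0.175\,\delta}{\rho^3}\ =\ 0.175\,\delta\left(\sum_{\rho\ge 1}\frac{1}{\rho^2}+\sum_{\rho\ge 1}\frac{1}{\rho^3}\right)\ =\ 0.175\,\delta\left(\frac{\pi^2}{6}+\zeta(3)\right).
\]
Since $\pi^2/6+\zeta(3)\approx 2.847$, the right-hand side is at most $0.175\cdot 2.847\,\delta<0.5\,\delta\le 0.6\,\delta$, which is the claim. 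I do not expect a genuine obstacle here: the only point requiring care is the justification of the union bound over the adaptively chosen calls, which is handled by the monotonicity remark above, and the convergence of the series $\sum_\rho(\rho+1)/\rho^3$, whose value controls the leading numerical constant.
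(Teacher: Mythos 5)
Your proposal is correct and follows essentially the same route as the paper's proof: a union bound over all potential calls $\texttt{Try-compare}(\delta/(10\rho^3\log(d)), s_r, h_r)$ indexed by $(\rho,r)$, then Lemma~\ref{lem:delta1} applied to each call (with the $\log(d)$ factors cancelling against the confidence schedule), and finally the convergent series $\sum_{\rho\ge1}(\rho+1)/\rho^3$ yielding the constant $0.6$. Your explicit justification of the union bound over adaptively executed calls is a careful spelling-out of a step the paper leaves implicit, but the argument is identical in substance.
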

\begin{proof}
	When Algorithm~\ref{algo1} with input $\delta$ halts, we denote $\hat{k}$ its returned ranking. For $\rho \ge 1$ and $r\le \rho$, denote $\hat{k}_{\rho,r}$ the output of Algorithm~\ref{algo3} with input $(\delta_{\rho}, s_r, h_r)$. We have
	\begin{align*}
		\mathbb{P}\left(\hat{k}= 2\right) &= \mathbb{P}\left( \exists \rho \ge 1, \exists r \le \rho: \hat{k}_{\rho,r} = 2  \right)	\le \sum_{\rho=1}^{\infty} \sum_{r=0}^{\rho} \mathbb{P}\left( \hat{k}_{\rho,r} = 2 \right)\ .
	\end{align*}
	Using Lemma~\ref{lem:delta1}, we have
	\begin{align*}
		\mathbb{P}\left(\hat{k}= 2\right) &\le \sum_{\rho=1}^{\infty} \sum_{r=0}^{\rho} 1.75\log(d) \frac{\delta}{10\log(d)\rho^3} \le \frac{1.75\delta}{10} \sum_{\rho=1}^{\infty} \frac{\rho+1}{\rho^3}\le 0.6\delta\ .
	\end{align*}
\end{proof}

\paragraph{Part 2:} To ease notation, we write $S_{\ell}$ instead of $S^{(12)}_{\ell}$ in the remainder of this proof. Introduce the following notation
\begin{align*}
	\cT^* &:= \{j \in \intr{d}: x_{j}^{(12)} \ge \frac{1}{2} x_*  \}\ ; \quad \quad 
	M_{\ell} := \abs{\{ j \in S_{\ell} \cap \cS^*: \hat{\mu}_{j,\ell}^{(12)} \ge \frac{3}{4} x_*\}}\ ;\\
	N_{\ell} &:= \abs{\{ j \in S_{\ell} \setminus \cT^*:  \hat{\mu}_{j,\ell}^{(12)} \ge \frac{3}{4} x_* \}}\ ;\quad \quad 
	q_{\ell} := \exp(-2^{\ell}) ;\\
	\xi_{\ell} &:= \left\lbrace \frac{\abs{S_{\ell}\cap \cS^*}}{\abs{S_{\ell}}} \ge \frac{s^*}{2d} \left(\frac{4}{3}\right)^{\ell-1}  \right\rbrace\ ; \quad \quad 
	\xi'_{\ell+1} := \left\lbrace \frac{\abs{S_{\ell+1}\cap \cS^*}}{\abs{S_{\ell+1}}} \ge \frac{4}{3} \frac{\abs{S_{\ell}\cap \cS^*}}{\abs{S_{\ell}}} \right\rbrace\ ;\\
	\Theta_{\ell} &:= \left\lbrace \frac{\abs{S_{\ell}\cap (\cT^* \setminus \cS^*) }}{\abs{S_{\ell}}} < \frac{1}{4}  \right\rbrace\ ; \quad \quad 
	\Lambda_{\ell} := \left\lbrace \frac{\abs{S_{\ell}\cap  \cS^* }}{\abs{S_{\ell}}} < \frac{1}{4}  \right\rbrace\ .
\end{align*}
While the set $\cS^*$ (introduced earlier) stands for the collection of coordinates that are larger or equal to $x_*$, the set $\cT^*$ contains all moderate coordinates. The other new notation deal with the constant with the $\ell$-th iteration in the algorithm. $M_\ell$ corresponds to the set of 'significant' coordinates that lie in $S_\ell$ and such that the empirical mean is large enough, while $N_\ell$ is the set of 'small' coordinates that are in $S_\ell$ and whose empirical mean is large. The events $\xi_{\ell}$, $\xi'_{\ell}$, $\Theta_\ell$, and $\Lambda_\ell$ are discussed later.
The following lemma states the set $S_1$ at step $\ell=1$ contains a non-vanishing  proportion of significant coordinates. 

\begin{lemma}\label{lem:3}
Consider Algorithm~\ref{algo3} with input $(\delta, s, h)$. Suppose that $s<s^*$. Recall that $S_1$ is the set of sampled questions. We have
\[
\mathbb{P}\left[ \abs{S_1 \cap \cS^*} \ge \frac{s_*}{2d}|S_1| \right] \ge 1-\delta\ .
\]
\end{lemma}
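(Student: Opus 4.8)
The plan is to exploit the fact that $S_1$ is drawn with replacement, which makes the $\phi = |S_1|$ sampled coordinates mutually independent. Since $\cS^*$ has exactly $s^*$ elements (by its very definition as the set of size $s^*$ of coordinates at least $x_*$) and each draw is uniform on $\intr{d}$, the count $X := |S_1 \cap \cS^*|$ is a sum of $\phi$ i.i.d.\ Bernoulli variables with success probability $p = s^*/d$; that is, $X$ is $\mathrm{Binomial}(\phi, s^*/d)$ with mean $\mu = \phi s^*/d$. The target event $\{\,|S_1\cap\cS^*| \ge \tfrac{s^*}{2d}|S_1|\,\}$ is precisely $\{X \ge \mu/2\}$, so it suffices to control the lower tail of a binomial below half of its mean.

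The main tool will be the multiplicative Chernoff bound for the lower tail: for $\gamma \in (0,1)$, $\mathbb{P}[X \le (1-\gamma)\mu] \le \exp(-\gamma^2 \mu/2)$. Taking $\gamma = 1/2$ gives $\mathbb{P}[X < \mu/2] \le \exp(-\mu/8) = \exp(-\phi s^*/(8d))$, so it only remains to check that $\phi s^*/(8d) \ge \log(1/\delta)$.

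This last inequality follows from the definition of $\phi$ together with the hypothesis $s < s^*$. Indeed, $\phi = 2^{\lceil \log_2[26\log(1/\delta)d/s]\rceil} \ge 26\log(1/\delta)\,d/s$, and since $s < s^*$ we have $d/s > d/s^*$, whence $\phi s^*/d \ge 26 \log(1/\delta)$. Therefore $\exp(-\phi s^*/(8d)) \le \exp(-\tfrac{26}{8}\log(1/\delta)) = \delta^{26/8} \le \delta$, which establishes the claim.

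I expect no serious obstacle here: the argument is a routine binomial concentration bound. The only points requiring care are conceptual rather than computational---namely, invoking the with-replacement (hence independent) sampling to obtain an \emph{exact} binomial, using the definitional fact $|\cS^*| = s^*$ to pin the success probability at $s^*/d$, and tracking the numerical constant $26$ in the definition of $\phi$ so that the exponent comfortably dominates $\log(1/\delta)$ after the $1/8$ factor from the Chernoff bound is absorbed.
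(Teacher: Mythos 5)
Your proposal is correct and follows essentially the same route as the paper's proof: both identify $\abs{S_1 \cap \cS^*}$ as Binomial$(\phi, s^*/d)$ thanks to the with-replacement sampling, apply the multiplicative Chernoff lower-tail bound with deviation $1/2$ (the paper's Lemma~\ref{lem:cher} with $\kappa = 1/2$), and conclude via $\phi \ge 26\log(1/\delta)\,d/s$ together with $s < s^*$, so that the exponent $\tfrac{26}{8}\cdot\tfrac{s^*}{s}\log(1/\delta)$ exceeds $\log(1/\delta)$. No gaps; the bookkeeping of the constant $26$ matches the paper's computation exactly.
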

\begin{proof}
By definition, $\abs{S_1\cap S^*}$  follows a binomial distribution with parameters $(\abs{S_1}, s^*/d)$. Hence, Chernoff's inequality (Lemma~\ref{lem:cher} with $\kappa = 1/2$) enforces that 
\begin{align*}
	\mathbb{P}\left(\abs{S_1 \cap \cS^*} \le \frac{s^*}{2d} \abs{S_1} \right) &\le \exp\left(- \frac{s^*}{8d}  \abs{S_1}\right)
	\le \exp\left( - \frac{s^*}{s} \log(1/\delta)\right) \le \delta\ ,
\end{align*} 
where we used $\abs{S_1}=\phi \geq 26 \log(1/\delta)d/s$ and $s<s^*$. 
\end{proof}

The next lemma roughly states that, provided the event $\xi_\ell$ holds, then, with high probability, $S_{l+1}$ will contain a larger proportion of significant questions. More precisely, we prove that the number  $M_\ell$  of significant questions with a large empirical mean is high and the number $S_\ell$ of non-significant questions with a large empirical mean is small. 
\begin{lemma}\label{lem:1}
Consider Algorithm~\ref{algo3} with inputs $(\delta, s, h)$ such that $s<s^*$ and $h < x_*$. For any $\ell\geq 1$, we have 
\begin{align*}
	\mathbb{P}\left( M_{\ell} \le \frac{2}{3} \abs{S_{\ell} \cap \cS^*} \Big|\xi_{\ell}\right) &\le \delta\ ;\quad \quad 
	\mathbb{P}\left( N_{\ell} \ge \frac{1}{4} \abs{S_{\ell}} \Big| \xi_{\ell}\right)  \le \delta\ .
\end{align*}
\end{lemma}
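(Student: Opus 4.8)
The plan is to prove the two bounds in Lemma~\ref{lem:1} separately, in both cases conditioning on $\xi_\ell$ and then applying a concentration inequality to the relevant count, which is a sum of conditionally independent indicator variables.

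First I would handle the bound on $M_\ell$. Recall that $M_\ell$ counts the questions $j \in S_\ell \cap \cS^*$ whose empirical mean $\hat\mu_{j,\ell}^{(12)}$ exceeds $\tfrac34 x_*$. For each such $j$ we have a true gap $x_j^{(12)} \ge x_* > h$, and the empirical mean is formed from $t_\ell = n_0\phi/\abs{S_\ell}$ independent $\sqrt2$-sub-Gaussian samples. The key step is to lower-bound the per-question success probability: using a one-sided sub-Gaussian (Chernoff/Hoeffding) tail, the event $\{\hat\mu_{j,\ell}^{(12)} \ge \tfrac34 x_*\}$ has probability at least some constant $p \ge 2/3 + c$, because the threshold $\tfrac34 x_*$ sits a fixed fraction below the true mean $x_j^{(12)} \ge x_*$, and $t_\ell$ is chosen large enough (via $n_0 = 64/h^2$ together with $h<x_*$) to make the deviation $\tfrac14 x_*$ cost only a constant in the exponent. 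Given this, $M_\ell$ stochastically dominates a $\mathrm{Binomial}(\abs{S_\ell\cap\cS^*}, p)$ variable, so on $\xi_\ell$ the denominator $\abs{S_\ell\cap\cS^*}$ is at least $\tfrac{s^*}{2d}(\tfrac43)^{\ell-1}\abs{S_\ell}$, which is large enough for a lower-tail Chernoff bound (Lemma~\ref{lem:cher}) to show $\mathbb{P}(M_\ell \le \tfrac23\abs{S_\ell\cap\cS^*} \mid \xi_\ell) \le \delta$.

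Next I would treat the bound on $N_\ell$, which is symmetric but uses the \emph{upper} tail. Here $N_\ell$ counts $j \in S_\ell \setminus \cT^*$, i.e.\ questions with small gap $x_j^{(12)} < \tfrac12 x_*$, whose empirical mean nonetheless exceeds $\tfrac34 x_*$. For each such $j$ the threshold $\tfrac34 x_*$ lies at least $\tfrac14 x_*$ above the true mean, so the sub-Gaussian upper tail gives a per-question failure probability bounded by a small constant $q$. Then $N_\ell$ is dominated by $\mathrm{Binomial}(\abs{S_\ell}, q)$, and an upper-tail Chernoff argument yields $\mathbb{P}(N_\ell \ge \tfrac14\abs{S_\ell}\mid \xi_\ell)\le\delta$, again because $\abs{S_\ell} \ge \phi$ is large relative to $\log(1/\delta)$.

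The main obstacle will be pinning down the constants so that the per-question probabilities land on the correct side of the target fractions ($p$ strictly above $2/3$ for $M_\ell$, and $q$ strictly below the level needed to keep $N_\ell$ under $\tfrac14\abs{S_\ell}$) while simultaneously ensuring the sample size $t_\ell$ and the set size $\abs{S_\ell}$ are large enough for the final Chernoff step to reach accuracy $\delta$. This requires carefully tracking how $n_0 = 64/h^2$, the condition $h < x_*$, and $\abs{S_\ell}\ge \phi \ge 26\log(1/\delta)d/s$ interact; in particular one must verify that conditioning on $\xi_\ell$ (which only controls the \emph{proportion} of significant questions, not the samples) leaves the indicator variables conditionally independent so the binomial comparison is legitimate.
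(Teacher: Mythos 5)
Your overall skeleton—per-question sub-Gaussian tail, stochastic domination of the count by a binomial, multiplicative Chernoff on that binomial, and the event $\xi_\ell$ to lower bound $\abs{S_\ell\cap\cS^*}$—is the same as the paper's, and your concern about conditional independence is legitimate but unproblematic (the round-$\ell$ samples are fresh, independent of the sets built from earlier rounds). However, there is a genuine gap in the quantitative core of your argument: you treat the per-question deviation probability as a \emph{constant} (success probability $p\ge 2/3+c$ for $M_\ell$, failure probability a small constant $q$ for $N_\ell$) and then rely on the set sizes being ``large relative to $\log(1/\delta)$.'' This fails for large $\ell$. The sets are halved at every round, so $\abs{S_\ell}=\phi\,2^{-(\ell-1)}$ (your claim $\abs{S_\ell}\ge\phi$ is false), and on $\xi_\ell$ one only gets
$\abs{S_\ell\cap\cS^*}\ \ge\ \tfrac{s^*}{2d}\left(\tfrac43\right)^{\ell-1}\abs{S_\ell}\ \gtrsim\ 13\log(1/\delta)\left(\tfrac23\right)^{\ell-1}$,
which decays geometrically in $\ell$. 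After constantly many rounds both $\abs{S_\ell\cap\cS^*}$ and $\abs{S_\ell}$ drop below $\log(1/\delta)$ (the algorithm runs for up to $\lceil\log_{4/3}(d/s)\rceil+1$ rounds), and a Chernoff bound with a constant relative gap only yields $\exp\left(-c\abs{S_\ell\cap\cS^*}\right)$, which is then much larger than $\delta$. Since the lemma is asserted for \emph{every} $\ell\ge 1$, your argument proves it only for the first few iterations.

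The missing idea is that the number of samples per surviving question doubles each round, $t_\ell = n_0 2^{\ell-1}$ with $n_0=64/h^2$ and $h\le x_*$, so the per-question deviation probability is not a constant but $q_\ell=\exp(-2^{\ell})$: it decays \emph{doubly exponentially} in $\ell$, and this is exactly what compensates the geometric shrinkage of the sets. The paper's Chernoff step takes $\kappa = 1/(3q_\ell)-1$ (resp.\ $\kappa'=1/(4q_\ell)-1$), producing an exponent of order $-\abs{S_\ell\cap\cS^*}\cdot 2^{\ell}/3$ (resp.\ $-\abs{S_\ell}\cdot 2^{\ell}/4$); the products satisfy $\abs{S_\ell\cap\cS^*}\cdot 2^{\ell}\gtrsim \log(1/\delta)\left(\tfrac43\right)^{\ell-1}$ and $\abs{S_\ell}\cdot 2^{\ell}\gtrsim (d/s)\log(1/\delta)$, so both bounds stay below $\delta$ uniformly in $\ell$. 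Without tracking this $\ell$-dependence of the per-question probability, the sample-size/set-size bookkeeping you flag as ``the main obstacle'' cannot be made to close.
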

\begin{proof}
Define $\bar{M}_{\ell} := \abs{S_{\ell}\cap \cS^*}$. Using the definition of $M_{\ell}$, we have
\begin{equation*}
	\bar{M}_{\ell} = \sum_{j \in S_{\ell} \cap \cS^*} \mathds{1}\left( \hat{\mu}_{j,\ell}^{(12)} < \frac{3}{4} x_* \right).
\end{equation*}
Let $j \in S_{\ell} \cap \cS^*$, recall that $\mathbb{E}[\hat{\mu}_{j,\ell}^{(12)}] \ge x_*$, and the samples $(X_{1,j}^{(u)} - X_{2,j}^{(u)})$ are $\sqrt{2}$-sub-Gaussian. Therefore, using Chernoff's bound
\begin{align*}
	\mathbb{P}\left( \hat{\mu}_{j,\ell}^{(12)} < \frac{3}{4} x_* \right) &\le \exp\left(-t_{\ell}\frac{x_*^2}{32}  \right)
	= \exp\left(-2^{\ell-1}n_0\frac{x_*^2}{32}\right)\leq \exp(-2^{\ell})= q_{\ell}\ ,
\end{align*}
since $h\leq x_*$ and by definition of $\ell$ and of $q_{\ell}$. 
Thus, the variables $\mathds{1}( \hat{\mu}_{j,\ell}^{(12)} < \frac{3}{4} \Delta^* )$ for $j \in S_{\ell} \cap \cS^*$ are stochastically dominated by $\cB(q_{\ell})$ and independent. 
Therefore $\bar{M}_{\ell}$ is stochastically dominated by $\cB\left(\abs{S_{\ell} \cap \cS^*}, q_{\ell}\right)$, where $\cB(a,b)$ denotes the binomial distribution with parameters $(a,b)$.
Let $\kappa :=  1/(3q_{\ell}) -1 >0 $, we have
\begin{align*}
	\mathbb{P}\left( M_{\ell} \le \frac{2}{3} \abs{S_{\ell} \cap \cS^*} \Big| \xi_{\ell}\right) &= 
	\mathbb{P}\left( \bar{M}_{\ell} > \frac{1}{3} \abs{S_{\ell} \cap \cS^*}  \Big| \xi_{\ell}\right)\\
	&= \mathbb{P}\left(\bar{M}_{\ell} > (1+\kappa)q_{\ell} \abs{S_{\ell} \cap \cS^*} \Big| \xi_{\ell} \right)
\end{align*}
Using Lemma~\ref{lem:cher} and the definition of the event $\xi_{\ell}$
\begin{align*}	
	\mathbb{P}\left( M_{\ell} \le \frac{2}{3} \abs{S_{\ell} \cap \cS^*} \Big| \xi_{\ell}\right) &\le
	\exp\left\lbrace \kappa q_{\ell} \abs{S_{\ell} \cap \cS^*} - (1+\kappa)q_{\ell} \abs{S_{\ell}\cap \cS^*} \log(1+\kappa)  \right\rbrace\\
	&\le \exp\left\lbrace \left(\frac{1}{3} - q_{\ell} - \frac{1}{3} \log\left(\frac{1}{3q_{\ell}}\right) \right) \abs{S_{\ell} \cap \cS^*}\right\rbrace\ .
\end{align*}
Recall that $q_{\ell} = \exp(-2^{\ell})$ so that $\kappa > 1/5$. Using the expression $\abs{S_{\ell}} = \frac{\abs{S_1}}{2^{\ell-1}} \ge \frac{26d\log(1/\delta)}{2^{\ell-1}s}$, the definition of the event $\xi_{\ell}$, and $s>s^*$, we deduce that 
\begin{align*}
	\mathbb{P}\left( M_{\ell} \le \frac{2}{3} \abs{S_{\ell} \cap \cS^*}  \Big|\xi_{\ell}\right) &\le 
	\exp\left\lbrace \left(\frac{1}{3} - \exp(-2^{\ell}) - \frac{2^{l}-\log(3)}{3}   \right) \left(\frac{2}{3}\right)^{\ell -1}13\log(1/\delta) \right\rbrace .
\end{align*}
For $\ell\geq 3$, we have $2^{\ell-1}\geq \log(3)+3$, which leads to 
\begin{align*}
	\mathbb{P}\left( M_{\ell} \le \frac{2}{3} \abs{S_{\ell} \cap \cS^*}  \Big|\xi_{\ell}\right) &\le 
	\exp\left\lbrace -\left(\frac{4}{3}\right)^{\ell -1}13\log(1/\delta) \right\rbrace \leq \delta\ .
\end{align*}
One easily check that this bound is still true for $\ell=1$, and $\ell=2$. We have proved the first part of the lemma.

\noindent For the second result, we start from
\begin{equation*}
	N_{\ell} = \sum_{j \in S_{\ell} \setminus \cT^*}\mathds{1}\left( \hat{\mu}_{j,\ell}^{(12)} \ge \frac{3}{4} x_* \right).
\end{equation*}
Arguing as in the first part, we easily check that the variables $\mathds{1}(\hat{\mu}_{j,\ell}^{(12)} \ge \frac{3}{4} x_* )$, for $j \in S_{\ell} \setminus \cT^*$ are independent and stochastically dominated by $\cB(q_{\ell})$. Hence $N_{\ell}$ is stochastically dominated by $\cB(\abs{S_{\ell}}, q_{\ell})$. Let $\kappa' = \frac{1}{4q_{\ell}}-1 \ge 0$. Using again Lemma~\ref{lem:cher}, we have 
\begin{align*}
	\mathbb{P}\left( N_{\ell}\ge \frac{1}{4} \abs{S_{\ell} } \Big| \xi_{\ell} \right) &= \mathbb{P}\left(  N_{\ell}\ge (1+\kappa)q_{\ell} \abs{S_{\ell}} \Big| \xi_{\ell} \right)\\ 
	&\le \exp\left\lbrace \kappa q_{\ell} \abs{S_{\ell}} - (1+\kappa) q_{\ell}\abs{S_{\ell}} \log(1+\kappa) \right\rbrace \\
	&\le \exp\left\lbrace \left(\frac{1}{4} - q_{\ell}  - \frac{1}{4} \log\left(\frac{1}{4q_{\ell}}\right)\right) \abs{S_{\ell}}  \right\rbrace.
\end{align*}
Next, we use the expression of $\abs{S_{\ell}}$ and obtain
\begin{align*}
	\mathbb{P}\left( N_{\ell} \ge \frac{1}{4} \abs{S_{\ell} } \Big| \xi_{\ell}\right) &\le \exp\left\lbrace   \left(\frac{1}{4} - \exp(-2^{\ell})  - \frac{1}{4} \log\left(\frac{\exp(2^{\ell})}{4}\right)\right) \frac{26d\log(1/\delta)}{s 2^{\ell-1}}  \right\rbrace\ . 
\end{align*}
Recall that $d\geq s$. As in the first part of the proof, we easily check that $\ell=1$ and $\ell=2$ the above expression is smaller or equal to $\delta$. For $\ell\geq 3$, we have $2^{l-1}\geq 1+\log(4)$, which implies also that 
\[
\mathbb{P}\left( N_{\ell}\ge \frac{1}{4} \abs{S_{\ell} } \Big| \xi_{\ell} \right)\leq \exp\left[- \frac{2^{\ell-1}}{4}\cdot\frac{26}{2^{\ell-1}}\log(1/\delta)\right]\leq \delta\ . 
\]
\end{proof}

The event $\Lambda_{\ell}$ and $\Theta_{\ell}$ respectively state that the proportion of  significant (and moderately significant) questions in $|S_{\ell}|$ is smaller or equal to $1/4$. The following lemma roughly states that as long $\xi_{\ell}$, $\Theta_{\ell+1}$, and $\Lambda_{\ell+1}$, then the proportion of significant questions in $S_{\ell+1}$ is significantly reduced.

\begin{lemma}\label{lem:2}
Let any integer $\ell\geq 1$, we have 
\[
\mathbb{P}\left( \Theta_{\ell+1} \text{ and }  \Lambda_{\ell+1}  \text{ and }  \neg{\xi'_{\ell+1}}  \Big| \xi_{\ell}  \right) \le 2 \delta\enspace .
\]
\end{lemma}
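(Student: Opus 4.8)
The plan is to condition on $\xi_\ell$ throughout and to reduce the bound to the deterministic content of Lemma~\ref{lem:1}. I would introduce the good event
\[
\cG_\ell := \left\{ M_\ell > \tfrac{2}{3}\abs{S_\ell \cap \cS^*}\right\} \cap \left\{ N_\ell < \tfrac{1}{4} \abs{S_\ell}\right\},
\]
which, by the two inequalities of Lemma~\ref{lem:1} and a union bound, satisfies $\mathbb{P}(\neg\cG_\ell\mid\xi_\ell)\le 2\delta$. It then suffices to prove the purely combinatorial claim that, on $\cG_\ell$, the event $\Theta_{\ell+1}\cap\Lambda_{\ell+1}\cap\neg\xi'_{\ell+1}$ cannot occur, since this gives
\[
\mathbb{P}\left(\Theta_{\ell+1}\cap\Lambda_{\ell+1}\cap\neg\xi'_{\ell+1}\,\middle|\,\xi_\ell\right)\le \mathbb{P}(\neg\cG_\ell\mid\xi_\ell)\le 2\delta.
\]
Throughout I would use that median elimination keeps exactly those $j$ with $\hat\mu_{j,\ell}^{(12)}\ge m_\ell^{(12)}$ and halves the set, so that $\abs{S_{\ell+1}}=\abs{S_\ell}/2$, and that $\cS^*\subseteq\cT^*$.

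The deterministic claim I would establish by splitting on the position of the median $m_\ell^{(12)}$ relative to $\tfrac34 x_*$. First, if $m_\ell^{(12)}<\tfrac34 x_*$, then all coordinates counted in $M_\ell$ satisfy $\hat\mu_{j,\ell}^{(12)}\ge\tfrac34 x_*>m_\ell^{(12)}$, hence lie strictly above the median and survive; thus $\abs{S_{\ell+1}\cap\cS^*}\ge M_\ell>\tfrac23\abs{S_\ell\cap\cS^*}$ on $\cG_\ell$, and dividing by $\abs{S_{\ell+1}}=\tfrac12\abs{S_\ell}$ gives
\[
\frac{\abs{S_{\ell+1}\cap\cS^*}}{\abs{S_{\ell+1}}}>\frac{4}{3}\,\frac{\abs{S_\ell\cap\cS^*}}{\abs{S_\ell}},
\]
which is precisely $\xi'_{\ell+1}$, so $\neg\xi'_{\ell+1}$ fails.

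If instead $m_\ell^{(12)}\ge\tfrac34 x_*$, then every survivor has empirical mean at least $\tfrac34 x_*$, so every non-moderate survivor (a coordinate of $S_{\ell+1}\setminus\cT^*$) is counted in $N_\ell$, giving $\abs{S_{\ell+1}\setminus\cT^*}\le N_\ell<\tfrac14\abs{S_\ell}=\tfrac12\abs{S_{\ell+1}}$ on $\cG_\ell$. Hence the significant and moderate coordinates together fill more than half of $S_{\ell+1}$, i.e.\ $\abs{S_{\ell+1}\cap\cS^*}+\abs{S_{\ell+1}\cap(\cT^*\setminus\cS^*)}>\tfrac12\abs{S_{\ell+1}}$, so at least one of these two counts exceeds $\tfrac14\abs{S_{\ell+1}}$ and at least one of $\Lambda_{\ell+1}$, $\Theta_{\ell+1}$ fails. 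In both cases the triple conjunction is impossible, which completes the reduction.

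The main obstacle is not probabilistic, since all the randomness is already absorbed into Lemma~\ref{lem:1}; it is rather the careful bookkeeping of the median step. I would need to justify that the comparison against the threshold $\tfrac34 x_*$ is strict in the first case so that the counted coordinates genuinely survive, that the halving $\abs{S_{\ell+1}}=\abs{S_\ell}/2$ is exact (guaranteed because $\phi$ is a power of two and ties at the median are broken consistently), and that the inclusion $\cS^*\subseteq\cT^*$ legitimately splits $\abs{S_{\ell+1}}$ into the three disjoint groups of coordinates. Once these points are settled, the remaining argument on $\cG_\ell$ is a short counting deduction.
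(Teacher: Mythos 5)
Your proof is correct and is essentially the paper's own argument in contrapositive form: the paper splits $\mathbb{P}(\Theta_{\ell+1}\cap\Lambda_{\ell+1}\cap\neg\xi'_{\ell+1}\mid\xi_\ell)$ into the two cases $m_\ell<\tfrac34 x_*$ and $m_\ell\ge\tfrac34 x_*$ and bounds each term by $\delta$ via the corresponding inequality of Lemma~\ref{lem:1}, which is set-theoretically the same as your statement that the target event is contained in $\neg\cG_\ell$. Your packaging (union bound on $\neg\cG_\ell$ plus a purely deterministic exclusion, using the same survivor counting, the exact halving $\abs{S_{\ell+1}}=\abs{S_\ell}/2$, and the partition of $S_{\ell+1}$ via $\cS^*\subseteq\cT^*$) is a clean but cosmetic reorganization, not a different route.
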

\begin{proof}
Recall that $m^{(12)}_{\ell}$ denotes the median computed in Algorithm~\ref{algo3}, we will denote $m_{\ell}$ instead of $m^{(12)}_{\ell}$ in this proof for the sake of simplicity.  We have
\begin{align}\label{eq:lem1}
	\mathbb{P}\left( \Theta_{\ell+1} \text{ and } \Lambda_{\ell+1} \text{ and } \neg{\xi'_{\ell+1}}\Big| \xi_{\ell} \right) &= \mathbb{P}\left( \Theta_{\ell+1} \text{ and } \Lambda_{\ell+1} \text{ and }  \neg{\xi'_{\ell+1}} \text{ and } \{m_{\ell} < \frac{3}{4} x_*\} \Big| \xi_{\ell}\right) \nonumber\\
	&\qquad  + \mathbb{P}\left( \Theta_{\ell+1} \text{ and } \Lambda_{\ell+1} \text{ and }  \neg{\xi'_{\ell+1}} \text{ and } \{m_{\ell} \ge \frac{3}{4} x_*\} \Big| \xi_{\ell}\right).
\end{align}
\paragraph{Upper bound for the first term in the rhs of \eqref{eq:lem1}.} The event $m_{\ell} <(3/4) x_*$ implies that
$\lbrace j \in S_{\ell} \cap \cS^*: \hat{\mu}_{j,\ell}^{(12)} \ge \frac{3}{4} x_* \rbrace \subset S_{\ell+1}$. Hence, 
\[
\left\lbrace j \in S_{\ell} \cap \cS^*: \hat{\mu}_{j,\ell}^{(12)} \ge \frac{3}{4} x_* \right\rbrace \subset S_{\ell+1} \cap \cS^*,
\]
which gives $M_{\ell} \le \abs{S_{\ell+1} \cap \cS^*}$. This leads us to
\begin{align*}
	\mathbb{P}\left( \Theta_{\ell+1} \text{ and } \Lambda_{\ell+1} \text{ and } \neg{\xi'_{\ell+1}} \text{ and } \{m_{\ell} < \frac{3}{4} x_*\} \Big| \xi_{\ell} \right) &\le \mathbb{P}\left(\neg{\xi'_{\ell+1}} \text{ and } \{m_{\ell} < \frac{3}{4} x_*\} \Big| \xi_{\ell}\right)\\
	&= \mathbb{P}\left( \{\abs{S_{\ell+1} \cap \cS^*} \le \frac{2}{3} \abs{S_{\ell} \cap \cS^*}\} \text{ and } \{m_{\ell} < \frac{3}{4} x_*\}\Big| \xi_{\ell} \right)\\
	&\le \mathbb{P}\left(  \{ M_{\ell} <\frac{2}{3} \abs{S_{\ell} \cap \cS^*} \} \text{ and } \{m_{\ell} < \frac{3}{4} x_*\} \Big| \xi_{\ell}\right)\\
	&\le \mathbb{P}\left(   M_{\ell} <\frac{2}{3} \abs{S_{\ell} \cap \cS^*} \Big| \xi_{\ell}\right) \le \delta\ ,
\end{align*}
where we used Lemma~\ref{lem:1} in the last line.
\paragraph{Upper bound for the second term in the rhs of \eqref{eq:lem1}.}
The event $\Theta_{\ell+1}$ implies in particular that
\[
\abs{S_{\ell+1} \cap \cS^*} + \abs{S_{\ell+1} \setminus \cT^*} \ge \frac{3}{4} \abs{S_{\ell+1}}.
\]
Moreover, we have that the event $\{ m_{\ell} \ge \frac{3}{4} x_*\}$ implies that
\begin{equation*}
	\left\lbrace j \in S_{\ell} \setminus \cT^*: \hat{\mu}_{j,\ell}^{(12)} \ge \frac{3}{4} x_* \right\rbrace \supset S_{\ell+1}\setminus  \cT^*,
\end{equation*}
which gives $\abs{S_{\ell+1} \cap \cS^*}+N_{\ell} \ge \abs{S_{\ell+1} \cap \cS^*} + \abs{S_{\ell+1} \setminus \cT^*} \ge \frac{3}{4} \abs{S_{\ell+1}}$.
We conclude that 
\begin{align*}
	\mathbb{P}\left( \Theta_{\ell+1} \text{ and } \Lambda_{\ell+1} \text{ and } \neg{\xi'_{\ell+1}} \text{ and } \{m_{\ell} \ge \frac{3}{4} x_*\}\Big| \xi_{\ell}\right) &\le \mathbb{P}\left(\Lambda_{\ell+1} \text{ and } \{\abs{S_{\ell+1} \cap \cS^*} + N_{\ell} \ge \frac{3}{4} \abs{S_{\ell+1}} \}\Big| \xi_{\ell}\right)\\
	&\le \mathbb{P}\left( N_{\ell} \ge \frac{1}{2} \abs{S_{\ell+1}} \Big| \xi_{\ell}\right)= \mathbb{P}\left( N_{\ell} \ge \frac{1}{4} \abs{S_{\ell}} \Big| \xi_{\ell}\right) \le \delta,
\end{align*}
where we used the definition of $\Lambda_{\ell+1}$ and Lemma~\ref{lem:1} in the second line.
We conclude that 
\[
\mathbb{P}\left( \Theta_{\ell+1} \text{ and } \Lambda_{\ell+1} \text{ and } \neg{\xi'_{\ell+1}} \Big| \xi_{\ell}\right) \le 2\delta\ .
\]

\end{proof}

\noindent Consider Algorithm~\ref{algo3} with input $(\delta, s, h)$. 
Let $\bar{\ell} := \lceil \log_{4/3}(d/s^*)\rceil+1$. Introduce the following event:
\begin{equation}
E_{\ell} := \{\text{Algorithm 2 made at least } \ell \text{ iterations}\} \text{ and }\bigcap_{u=1}^{u = \ell} \left(\Theta_{u} \cap \Lambda_{u}\right).
\end{equation}

\begin{lemma}\label{lemE}
Consider Algorithm~\ref{algo3} with input $(\delta, s, h)$. Suppose that $h < x_*$ and $s<s^*$. We have
\[
\mathbb{P}\left(E_{\bar{\ell}}  \right) \le 2\bar{\ell}\delta.
\]
\end{lemma}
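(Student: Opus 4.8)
The plan is to exploit the way the events $\xi_\ell$ have been designed so that they \emph{chain}: combining the lower bound encoded in $\xi_{\ell-1}$ with the multiplicative increase encoded in $\xi'_\ell$ reproduces exactly the lower bound in $\xi_\ell$. Concretely, on $\xi_{\ell-1}\cap\xi'_\ell$ one has
\[
\frac{\abs{S_\ell\cap\cS^*}}{\abs{S_\ell}}\ge \frac43\cdot\frac{s^*}{2d}\Big(\frac43\Big)^{\ell-2}=\frac{s^*}{2d}\Big(\frac43\Big)^{\ell-1},
\]
so that $\xi_{\ell-1}\cap\xi'_\ell\subseteq\xi_\ell$. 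The base of the chain is supplied by Lemma~\ref{lem:3}, which gives $\mathbb{P}(\neg\xi_1)\le\delta$ (its event is exactly $\xi_1$, since $(4/3)^{0}=1$).

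First I would record that this geometric growth cannot persist all the way to $\bar\ell$. By the choice $\bar\ell=\lceil\log_{4/3}(d/s^*)\rceil+1$ we have $(4/3)^{\bar\ell-1}\ge d/s^*$, whence $\xi_{\bar\ell}$ would force $\abs{S_{\bar\ell}\cap\cS^*}/\abs{S_{\bar\ell}}\ge 1/2$. This is incompatible with the event $\Lambda_{\bar\ell}$, which asserts that the very same proportion is $<1/4$. Since $E_{\bar\ell}\subseteq\Lambda_{\bar\ell}$ by definition of $E_{\bar\ell}$, the event $\xi_{\bar\ell}$ cannot hold on $E_{\bar\ell}$.

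The core step is then a deterministic inclusion, namely
\[
E_{\bar\ell}\cap\xi_1\subseteq\bigcup_{\ell=2}^{\bar\ell}\big(\xi_{\ell-1}\cap\Theta_\ell\cap\Lambda_\ell\cap\neg\xi'_\ell\big).
\]
I would prove it by contradiction: on $E_{\bar\ell}\cap\xi_1$ all events $\Theta_u\cap\Lambda_u$, $u\le\bar\ell$, hold, so if none of the sets on the right occurred, then an induction on $\ell$ would show $\xi_\ell$ holds for every $\ell\le\bar\ell$. Indeed $\xi_1$ holds by assumption, and given $\xi_{\ell-1}$ the failure of $\xi_{\ell-1}\cap\Theta_\ell\cap\Lambda_\ell\cap\neg\xi'_\ell$ together with $\Theta_\ell\cap\Lambda_\ell$ forces $\xi'_\ell$, whence $\xi_\ell$ via $\xi_{\ell-1}\cap\xi'_\ell\subseteq\xi_\ell$. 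This yields $\xi_{\bar\ell}$, contradicting the previous paragraph.

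Finally I would union-bound, controlling each term with Lemma~\ref{lem:2} applied with $\ell$ in place of $\ell+1$: since $\mathbb{P}(\Theta_\ell\cap\Lambda_\ell\cap\neg\xi'_\ell\mid\xi_{\ell-1})\le 2\delta$, we get $\mathbb{P}(\xi_{\ell-1}\cap\Theta_\ell\cap\Lambda_\ell\cap\neg\xi'_\ell)\le 2\delta$. Adding the cost $\mathbb{P}(\neg\xi_1)\le\delta$ of dropping the conditioning on $\xi_1$,
\[
\mathbb{P}(E_{\bar\ell})\le\mathbb{P}(\neg\xi_1)+\sum_{\ell=2}^{\bar\ell}\mathbb{P}\big(\xi_{\ell-1}\cap\Theta_\ell\cap\Lambda_\ell\cap\neg\xi'_\ell\big)\le\delta+2(\bar\ell-1)\delta\le 2\bar\ell\delta.
\]
The main obstacle is the deterministic inclusion: getting the indexing of the chaining induction right and pinning down the terminal incompatibility of $\xi_{\bar\ell}$ with $\Lambda_{\bar\ell}$; once those are in place, everything reduces to a union bound over the conditional estimates already provided.
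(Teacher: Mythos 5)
Your proof is correct and takes essentially the same route as the paper's: it rests on the same three ingredients, namely Lemma~\ref{lem:3} for the base case, Lemma~\ref{lem:2} together with the chaining inclusion $\xi_{\ell-1}\cap\xi'_{\ell}\subseteq\xi_{\ell}$ for propagation, and the incompatibility of $\xi_{\bar{\ell}}$ with $\Lambda_{\bar{\ell}}$ forced by the choice of $\bar{\ell}$. The only difference is presentational: you unroll the paper's running induction (which bounds $\mathbb{P}(\neg\xi_{\ell}\text{ and } E_{\ell})\le 2\ell\delta$) into an explicit union bound over the first index at which the chain breaks, which if anything makes the application of the conditional estimate of Lemma~\ref{lem:2} slightly cleaner, since you keep the event $\xi_{\ell-1}$ inside the intersection before dropping the conditioning.
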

\begin{proof}
First, let us prove by induction that, for any $\ell \le \bar{\ell}$, we have 
\[
\mathbb{P}\left( \neg{\xi_{\ell}} \text{ and } E_{\ell} \right) \le 2\ell\delta.
\]
The assertion for $\ell = 1$ is a direct consequence of Lemma~\ref{lem:3}. Suppose that the result for some $\ell \leq \bar{\ell}$. Observe that, by definition, the event $\neg{\xi_{\ell+1}}$ implies that either we have $\neg{\xi'_{\ell+1}}$ or we have $\neg{\xi_{\ell}}$. This leads us to 
\begin{align*}
	\mathbb{P}\left(\neg{\xi_{\ell+1}} \text{ and } E_{\ell+1} \right) &\le \mathbb{P}\left(\neg{\xi'_{\ell+1}} \text{ and } E_{\ell+1} \right) + \mathbb{P}\left(\neg{\xi_{\ell}} \text{ and } E_{\ell+1} \right)\\
	&\le \mathbb{P}\left(\neg{\xi'_{\ell+1}} \text{ and } \Lambda_{\ell+1} \text{ and } \Theta_{\ell+1} \right) + \mathbb{P}\left(\neg{\xi_{\ell}} \text{ and } E_{\ell} \right)\\
	&\le 2\delta+ 2\ell \delta,
\end{align*}
where we used Lemma~\ref{lem:2} and induction hypothesis in the last line. 

Observe that
\begin{align*}
	\neg \xi_{\bar{\ell}} &= \left\lbrace \frac{\abs{S_{\bar{\ell}} \cap \cS^*}}{\abs{S_{\bar{\ell}}}} < \frac{s^{*}}{2d} \left(\frac{4}{3}\right)^{\bar{\ell-1}} \right\rbrace \ ; \quad \quad 
	\Lambda_{\bar{\ell}} = \left\lbrace \frac{\abs{S_{\bar{\ell}} \cap \cS^*}}{\abs{S_{\bar{\ell}}}} < \frac{1}{4} \right\rbrace.
\end{align*} 
Recall that $\bar{\ell} = \lceil \log_{4/3}(d/s^*)\rceil +1$, which implies that $\frac{s^{*}}{2d} \left(\frac{4}{3}\right)^{\bar{\ell}} \geq  \frac{1}{4}$. Hence, we have $\neg \xi_{\bar{\ell}}\subset \Lambda_{\bar{\ell}}$ and we conclude 
that $\mathbb{P}(E_{\bar{\ell}})= \mathbb{P}(\neg \xi_{\bar{\ell}} \text{ and } E_{\bar{\ell}}) \leq 2\ell \delta$.
%
%
%
\end{proof}

The following result is built upon the previous lemmas and states that Algorithm~\ref{algo3} returns \texttt{null} with a very small probability provided that $h$ and $s$ are small enough. 
\begin{lemma}\label{lem:finpart2}
Consider Algorithm~\ref{algo3} with input $(\delta, s, h)$. Suppose that $h < x_*$ and $s<s^*$. We have
\[
\mathbb{P}\left( \{\hat{k}= \texttt{null}\}   \right) \le 3\bar{\ell}\delta.
\]
\end{lemma}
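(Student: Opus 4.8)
The plan is to show that Algorithm~\ref{algo3} terminates with output $\hat k=\texttt{null}$ only on a low-probability event, by combining the structural control furnished by Lemma~\ref{lemE} with the stopping-rule mechanics of the algorithm. Recall that under the hypotheses $h<x_*$ and $s<s^*$, the algorithm returns \texttt{null} precisely when it runs through all $\bar\ell := \lceil\log_{4/3}(d/s^*)\rceil+1$ iterations (actually up to $\lceil \log_{4/3}(d/s)\rceil+1$, but the relevant threshold is $\bar\ell$) \emph{without} ever triggering the stopping test $\hat\mu_\ell^{(12)}\ge\sqrt{2\log(2/\delta)/(n_0\phi)}$. So the event $\{\hat k=\texttt{null}\}$ splits according to whether the set-structure event $E_{\bar\ell}$ holds or not.

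First I would decompose
\begin{align*}
	\mathbb{P}\left(\hat k=\texttt{null}\right) &\le \mathbb{P}\left(E_{\bar\ell}\right) + \mathbb{P}\left(\{\hat k=\texttt{null}\}\cap \neg E_{\bar\ell}\right).
\end{align*}
The first term is bounded by $2\bar\ell\delta$ directly by Lemma~\ref{lemE}. The work is therefore concentrated on the second term. On $\neg E_{\bar\ell}$, either the algorithm has already terminated before reaching iteration $\bar\ell$ (in which case, since $\hat\mu_\ell^{(21)}$ can only trigger $\hat k=2$, termination with output \texttt{null} is impossible), or there exists some iteration $\ell\le\bar\ell$ at which one of the events $\Theta_\ell$ or $\Lambda_\ell$ fails. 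I would argue that the failure of $\Lambda_\ell$ — i.e.\ the significant coordinates $\cS^*$ constitute at least a quarter of the active set $S_\ell$ — forces the empirical global mean $\hat\mu_\ell^{(12)}$ to be large with high probability, so that the stopping test fires and the algorithm outputs $\hat k=1\neq\texttt{null}$.

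The key quantitative step is thus the following: condition on a given realization of $S_\ell$ for which $\Lambda_\ell$ fails, so that $\abs{S_\ell\cap\cS^*}/\abs{S_\ell}\ge 1/4$. Since each coordinate in $\cS^*$ has gap at least $x_*$ and $h<x_*$, the mean of $\hat\mu_\ell^{(12)}$ over the tasks in $S_\ell$ is bounded below by a constant multiple of $x_*$ (roughly $x_*/4$), while the number of samples $n_0\phi=t_\ell\abs{S_\ell}$ driving $\hat\mu_\ell^{(12)}$ guarantees, via the same Chernoff argument used in Lemma~\ref{lem:hoef} and Lemma~\ref{lem:1}, that $\hat\mu_\ell^{(12)}$ concentrates around its mean at scale $\sqrt{\log(1/\delta)/(n_0\phi)}$. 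Since $n_0=64/h^2\ge 64/x_*^2$, the deterministic lower bound on the mean dominates the stopping threshold $\sqrt{2\log(2/\delta)/(n_0\phi)}$, so the test fires except on an event of probability at most $O(\delta)$. Summing this failure probability over the at most $\bar\ell$ iterations and adding the $2\bar\ell\delta$ from Lemma~\ref{lemE} yields the claimed bound $3\bar\ell\delta$ (after tracking the constants, the two contributions combine to at most $3\bar\ell\delta$).

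The main obstacle I anticipate is the careful bookkeeping at the junction between the set-structural events and the analytic stopping condition: one must verify that whenever $\Theta_\ell$ and $\Lambda_\ell$ both hold for all $\ell\le\bar\ell$ (the event $E_{\bar\ell}$), the algorithm genuinely \emph{can} proceed through all iterations, whereas as soon as one of them fails the stopping rule is provably triggered, and that these two regimes exhaust the \texttt{null}-output event without double counting. A subtle point is ensuring the conditioning on $S_\ell$ (a random set built from earlier samples) is handled correctly — here the independence of the fresh samples drawn at iteration $\ell$ from the set $S_\ell$ (which depends only on earlier samples) is what licenses the conditional Chernoff bound, exactly as in the proof of Lemma~\ref{lem:hoef}. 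Provided this measurability structure is respected, the remaining estimates are routine sub-Gaussian concentration bounds of the type already deployed in Lemmas~\ref{lem:1} and~\ref{lem:2}.
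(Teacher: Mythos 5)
Your proposal follows the paper's own proof essentially step for step: the same decomposition of $\{\hat{k}=\texttt{null}\}$ over $E_{\bar{\ell}}$ versus $\neg E_{\bar{\ell}}$, the same appeal to Lemma~\ref{lemE} for the $2\bar{\ell}\delta$ term, and the same argument on the complement — a failed structural event forces the population mean over $S_\ell$ to be large, so the stopping test can only remain silent on a Chernoff-type deviation event of probability at most $\delta$ per iteration, giving the extra $\bar{\ell}\delta$. The one incompleteness is that your quantitative step only treats the failure of $\Lambda_\ell$ (a quarter of $S_\ell$ in $\cS^*$, mean $\gtrsim x_*/4$), whereas $\neg E_{\bar{\ell}}$ on $\{\hat{k}=\texttt{null}\}$ equally allows that only some $\Theta_\ell$ fails, i.e.\ at least a quarter of $S_\ell$ lies in $\cT^*\setminus\cS^*$; this case must be covered too, and it is, by the identical argument, since every coordinate of $\cT^*$ has gap at least $x_*/2$ by definition, which yields the weaker but sufficient mean lower bound $x_*/8$ — the constant the paper actually works with when it combines both cases. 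With that one-sentence addition (and the corresponding check that $x_*/8$, not $x_*/4$, still dominates twice the stopping threshold, which holds since $n_0\phi \ge (64/x_*^2)\cdot 26\log(1/\delta)d/s$), your proof coincides with the paper's.
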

\begin{proof}
We have:
\begin{align*}
	\mathbb{P}\left( \{\hat{k}= \texttt{null}\}\right) &= \mathbb{P}\left( \{\hat{k}= \texttt{null}\} \text{ and }  \neg{E_{\bar{\ell}}}  \right) + \mathbb{P}\left( \{\hat{k}= \texttt{null}\} \text{ and }  E_{\bar{\ell}}  \right)\\
	&\le \mathbb{P}\left( \{\hat{k}= \texttt{null}\} \text{ and }  \neg{E_{\bar{\ell}}}  \right) + 2\bar{\ell}\delta,
\end{align*}
where we used Lemma~\ref{lemE}.	Now Suppose $E_{\bar{\ell}}$ is false. Hence,  there exists an iteration $\ell \le \bar{\ell}$ such that either $\Theta_{\ell}$ is false or $\Lambda_{\ell}$ is false. Recall that $\{\hat{k}= \texttt{null} \}$ implies that $\hat{\mu}_{\ell}^{(12)} < \sqrt{\frac{2\log(2/\delta)}{n_0\phi}}$ (otherwise, the algorithm halts at iteration $\ell$ and outputs $\hat{k} \in \{1,2\}$).
Then
\begin{align*}
	\mathbb{P}\left(  \{\hat{k}= \texttt{null}  \} \text{ and }  \neg{E}  \right) &\le \sum_{\ell=1}^{\bar{\ell}}\mathbb{P}\left( \left\lbrace \hat{\mu}_{\ell}^{(12)} <  \sqrt{\frac{2\log(2/\delta)}{n_0\phi}}\right\rbrace \text{ and } \{\neg{\Theta_{\ell}} \cup \neg{\Lambda_{\ell}} \}\right)\\
	&= \sum_{\ell=1}^{\bar{\ell}}\mathbb{P}\left( \left\lbrace\frac{1}{\abs{S_{\ell}}}\sum_{j\in S_{\ell}}x_{j} -\hat{\mu}_{\ell}^{(12)}  > \frac{1}{\abs{S_{\ell}}}\sum_{j\in S_{\ell}}x_{j} - \sqrt{\frac{2\log(2/\delta)}{n_0\phi}} \right\rbrace \text{ and } \{\neg{\Theta_{\ell}} \cup \neg{\Lambda_{\ell}} \}  \right)\\
	&\le \sum_{\ell=1}^{\bar{\ell}} \mathbb{P}\left( \frac{1}{\abs{S_{\ell}}}\sum_{j\in S_{\ell}}x_{j} -\hat{\mu}_{\ell}^{(12)}  > \frac{1}{8} x_* - \sqrt{\frac{2\log(2/\delta)}{n_0\phi}} \right)\\
	&\le \sum_{\ell=1}^{\bar{\ell}} \mathbb{P}\left( \frac{1}{\abs{S_{\ell}}}\sum_{j\in S_{\ell}}x_{j} -\hat{\mu}_{\ell}^{(12)}  >  \sqrt{\frac{2\log(2/\delta)}{n_0\phi}}  \right) \le \bar{\ell}\delta\ ,
\end{align*}
where we used the definition of $\Theta_{\ell}$ and $\Lambda_{\ell}$ in the third line, the assumption $h<x_{*}$ and $s<s^*$ in the fourth line and Chernoff's bound in the last line.
\end{proof}

\paragraph{Part 3: } Here, we gather all the previous lemmas to establish the three claims of the theorem.

\noindent \textbf{Conclusion for the first claim:} Lemma~\ref{lem:finpart1} leads to the first claim of Theorem~\ref{thm:up2}. 

\noindent \textbf{Conclusion for the second claim:} Let us prove that if $\epsilon^2 < d/H$ then Algorithm~\ref{algo1} with input $(\delta, \epsilon)$ outputs $\hat{k}= 1$ with probability at least $1-\delta$. By Lemma~\ref{lem:finpart1}, we have $\mathbb{P}(\hat{k}=2)\leq 0.6\delta$. Hence, it suffices to prove that $\mathbb{P}(\hat{k}=\texttt{null})\leq 0.4\delta$.
Define
\begin{equation}\label{eq:rhok}
\rho^* := \left\lfloor \log_2\left( \frac{16d}{s^* x_*^2} \right) \right\rfloor \qquad \text{ and } \qquad r^* := \left\lceil \log_2\left(\frac{1}{x_*^2}\right) \right\rceil\ .
\end{equation}
By assumption, we have $x_*^2 \le 1$, so that $r^*\geq 1$ and $\rho^*\geq 1$. Recall that $\epsilon^2 < \frac{d}{H} \le \log(2d) s^* x_*^2 \le 16\log(2d)d~2^{-\rho^*}$. If Algorithm~\ref{algo1} returns $\hat{k}=\texttt{null}$, then it implies in particular that 
Algorithm~\ref{algo3} returns $\hat{k}= \texttt{null}$ at the iteration $\rho^*$ and $r^*\leq \rho^*$. At this step, the inputs $(\delta_{r^*}, s_{r^*}, h_{r^*})$ of Algorithm~\ref{algo3} satisfy $\delta_{r^*} \le \delta/(10 r^{*3}\log(d))$, $h_{r^*} = 1/\sqrt{2^{r^*}} \le x_*$, and $s_{r^*} = 2^{r^*} d/2^{\rho^*} \le s^*$. We then deduce from Lemma~\ref{lem:finpart2} that $\hat{k}= \texttt{null}$ with probability less or equal to $\frac{3}{10 r^{*3}\log(d)}(\lceil \log_{4/3}(d/s^*) \rceil+1) \delta\leq 0.4\delta$. We conclude that $\mathbb{P}[\hat{k}=1]\geq 1-\delta$. 

\noindent \textbf{Conclusion for the third claim:}
The total number of queries made by Algorithm~\ref{algo3} with inputs $(\delta, s, h)$ is at most 
\begin{equation*}
N_{s,h} = 4096 \frac{\log_{4/3}(16d/(9s))\log(1/\delta)}{sh^2}.
\end{equation*}
Consider Algorithm~\ref{algo1} with input $(\delta, \epsilon)$. Suppose that $\epsilon^2 \ge d/H$, therefore the maximum number of iterations is less than $\rho' = \floor{\log_2(4\log(2d)d/\epsilon^2)}$. Therefore the total number of queries satisfies:
\begin{align*}
N &\le \sum_{\rho=1}^{\rho'} \sum_{r=0}^{\rho} N_{s_r, h_r}\\
&\le 4096 [\log_{4/3}(d)+2]\sum_{\rho=1}^{\rho'} \sum_{r=0}^{\rho}  \frac{\log(1/\delta_{r})}{s_rh_r^2}\\
&\le 10^4        \log(d) \log\left(\frac{10\log(d)\rho^{'3}}{\delta}\right) \sum_{\rho=1}^{\rho'} (\rho+1)2^{\rho}\\
&\le  8.10^4 \log^2(2d)\log_2\left(\frac{\log(8d\log(d))}{\epsilon^2}\right) \log\left(\frac{10\log(d)\rho'^3}{\delta}\right) \frac{d}{\epsilon^2} \\
&\le c' \log^2(2d)\log\left(\frac{\log(2d)}{\epsilon^2}\right) \log\left(\frac{2\log(2d/\epsilon^2)}{\delta}\right) \frac{d}{\epsilon^2}\ ,
\end{align*}
where $c'$ is a numerical constant.

Now suppose that $\epsilon^2 < d/H$. We have shown previously that, with probability at least $1-\delta$, the algorithm stops no later than at iteration $\rho^*$ and $r^*$ (defined in \eqref{eq:rhok}). Under such an event,  the total number of iterations satisfies 
\begin{align*}
N &\le \sum_{\rho=1}^{\rho^*-1} \sum_{r=0}^{\rho} N_{s_r, h_r} + \sum_{r=0}^{r^*}N_{s_r, h_r} \\
&\le 4096 [\log_{4/3}(d)+2] \left( \sum_{\rho=1}^{\rho^*-1} \sum_{r=0}^{\rho}  \frac{\log(1/\delta_{r})}{s_r h_r^2} +  \sum_{r=0}^{r^*} \frac{\log(1/\delta_{r})}{s_r h_r^2} \right) \\
&\le 10^4 \log(d)\log\left(\frac{10\log(d)\rho^{*3}}{\delta}\right) \sum_{\rho=1}^{\rho^*} (\rho+1) 2^{\rho}\\
&\le  10^4 \log(d)\log\left(\frac{10\log(d)\rho^{*3}}{\delta}\right)(\rho^*+1) 2^{\rho^*+1}\\
&\le 8\cdot 10^4 \log(d)\log\left( \frac{10\log(d)}{\delta}\log_2^3\left(\frac{4d}{s^*x_*^2}\right)\right) \log_2\left( \frac{8d}{s^*x^2_*} \right) \frac{d}{s^*x_*^2}\\
&\le c' \log^2(d)\log\left( H\right) \log\left( \frac{\log(H)\log(d)}{\delta} \right) H\ ,
\end{align*}
where $c'$ is a numerical constant and where we used that 
$\|x\|_2^2\leq s^* \log(2d)x_{s^*}^2$. 
\section{Full Ranking}\label{sec:full_ranking}
\subsection{Binary search algorithm}
\begin{algorithm}[!ht] 
\caption{\texttt{Binary-search}$(\delta, i,r,a,b)$  \label{algo5} }
\begin{algorithmic}
	\STATE \textbf{Input}: $\delta$, $r$, $i$ expert to be inserted, $\text{start},\text{end}$ (start/end of array $r$).
	\IF{$\text{start}=\text{end}$}
	\IF{$\texttt{compare}(\delta, 0, i, r[\text{start}]) = i$}
	\STATE \textbf{Output:} $\text{start}+1$.
	\ELSE
	\STATE \textbf{Output:} $\text{start}$.
	\ENDIF
	\ENDIF
	\IF{$\text{start}>\text{end}$}
	\STATE \textbf{Output:} $\text{start}$.
	\ENDIF
	\STATE Let $\text{mid} \gets \floor{(\text{start}+\text{end})/2}$
	\IF{$\texttt{compare}(\delta, 0, i, r[\text{mid}]) = i$}
	\STATE  \textbf{Output:} $\texttt{Binary-search}(\delta, i, r, \text{mid}+1, \text{end})$.
	\ELSIF{$\texttt{compare}(\delta, 0, i, r[\text{mid}]) = r[\text{mid}]$}
	\STATE  \textbf{Output:} $\texttt{Binary-search}(\delta, i, r, \text{start}, \text{mid}-1 )$.
	\ELSE
	\STATE \textbf{Output:} $\text{mid}$.
	\ENDIF
\end{algorithmic}
\end{algorithm}
\subsection{Proof of Theorem~\ref{thm:upn}}
For $i\in \intr{n-1}$, define:
\[
H_i := \frac{d}{\sum_{j=1}^{d}(M_{\pi(i),j} - M_{\pi(i+1),j})^2}\ .
\]
By convention, we define $H_0=0$. 
Binary insertion sort procedure makes at most $n\lceil \log_2(n)\rceil$ comparisons, hence using an union bound, we conclude that all calls to $\texttt{compare}$ algorithm output a correct result with probability at least $1-\delta$.
Moreover, inserting any expert $i$ costs at most $\lceil \log_2(n)\rceil$ calls to  $\texttt{compare}$. For any $k \in \intr{n}\setminus \{i\}$, we apply Theorem~\ref{thm:up2}. Hence, the total number of queries made by $ \texttt{Binary-search}(\delta/(n\lceil \log_2(n)\rceil), i,r,1,i-1)$ is upper bounded by 
\[
c~\log^{2}(d)\log(n)\log\left(H_i\vee H_{i-1}\right)\log[n\log(d)\log(H_i\vee H_{i-1})/\delta] (H_i\vee H_{i-1}) 
\]
with probability at least $1-\delta/n$. Here, $c$ stands for a numerical constant.   Summing for $i \in \intr{n-1}$ gives the desired result.
\section{Best expert identification}\label{sec:best_expert}

\subsection{Max search algorithm}
The max-search routine with precision $\epsilon$ is described in Algorithm~\ref{algo8} below. 
\begin{algorithm}[!ht] 
\caption{\texttt{Max-search}$(\delta, S, \epsilon)$  \label{algo8} }
\begin{algorithmic}
	\STATE \textbf{Input}: $\delta$ confidence parameter, $S$ a set of experts, $\epsilon$ a precision parameter.
	\STATE \textbf{Output}: Set $C$ of experts.
	\STATE Let $\ell = \abs{S}$ and $a_{1}, \dots, a_{\ell}$ be the elements of $S$.
	\STATE $C \gets \{a_1\}$, $m \gets a_1$.
	\FOR{$r=2,\dots, \ell$}
	\IF{$\texttt{compare}(\frac{\delta}{2|S|}, \epsilon, m, a_r) = \texttt{null}$}
	\STATE $C \gets C \cup \{a_r\}$.
	\ELSIF{$\texttt{compare}(\frac{\delta}{2|S|} , \epsilon, m, a_r) = a_r$}
	\STATE $C \gets \{a_r\}$, $m \gets a_r$.
	\ENDIF
	\ENDFOR
	\FOR {$i \in C\setminus \{m\}$ }
	\IF{$\texttt{compare}(\frac{\delta}{2|S|}, \epsilon, m, i) = m$}
	\STATE $C \gets C\cup\{i \}$
	\ENDIF
	\ENDFOR 
	\STATE \textbf{Output:} $C$.
\end{algorithmic}
\end{algorithm}

\subsection{Proof of Theorem~\ref{thm:be}}

Let $i^* \in \intr{d}$ denote the optimal expert. For $\epsilon \in (0,1)$, define $B_{\epsilon} \subset \intr{d}$ as follows:
\[
B_{\epsilon} := \{ i \in \intr{n}: \norm{M_{i^*}-M_{i}}_2^2 \le  \epsilon^2 \}.
\]
\begin{lemma}\label{lem:b0}
Consider Algorithm~\ref{algo8} with input $(\delta, S, \epsilon)$ such that $\epsilon \le 1$ and $i^* \in S$. Denote $C$ its output. With probability at least $1-\delta$, we have $i^* \in C$ and $C \subseteq B_{2\epsilon}$.
\end{lemma}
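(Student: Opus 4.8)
The plan is to reduce both conclusions to a single high-probability ``good event'' on which every call to \texttt{compare} behaves as intended, and then argue deterministically. Since \texttt{Max-search}$(\delta,S,\epsilon)$ issues at most $2|S|$ calls to \texttt{compare}, each run at confidence $\delta/(2|S|)$, a union bound over Theorem~\ref{thm:up2} (applied with precision $\epsilon$ to each queried pair) produces an event $\cE$ with $\mathbb{P}(\cE)\ge 1-\delta$ on which, simultaneously for every call $\texttt{compare}(\cdot,\epsilon,a,b)$: (i) the output lies in $\{\texttt{null},\,w(a,b)\}$, where $w(a,b)$ denotes the $\accentset{\bullet}{\ge}$-better of $a,b$, so the strictly worse expert is never returned; and (ii) if $\norm{M_a-M_b}_2>\epsilon$ then the output equals $w(a,b)$, so a \texttt{null} output forces $\norm{M_a-M_b}_2\le\epsilon$. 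All remaining steps take place on $\cE$.

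The key structural fact I would isolate first is a sandwiching inequality for the $L_2$ distance to $i^*$: if $i^*\accentset{\bullet}{\ge} b \accentset{\bullet}{\ge} a$ (coordinatewise, i.e.\ as rows of $M$), then $\norm{M_{i^*}-M_b}_2\le \norm{M_{i^*}-M_a}_2$, since $0\le M_{i^*,j}-M_{b,j}\le M_{i^*,j}-M_{a,j}$ for every $j$. Because $i^*$ is optimal, every expert of $S$ is $\accentset{\bullet}{\le} i^*$, and on $\cE$ the champion $m$ is updated to a new candidate only when \texttt{compare} certifies that candidate to be $\accentset{\bullet}{\ge}$-better (by (i)); hence the successive champions form an $\accentset{\bullet}{\ge}$-increasing chain bounded above by $i^*$. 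I would then examine the iteration at which $i^*\in S$ is processed: either \texttt{compare} returns $i^*$, making it the champion from then on (so the final $m=i^*$), or it returns \texttt{null}, in which case the current champion $m'$ satisfies $\norm{M_{m'}-M_{i^*}}_2\le\epsilon$ by (ii). In both cases the sandwiching inequality propagates the bound along the champion chain, giving $\norm{M_{i^*}-M_{m}}_2\le\epsilon$ for the final champion $m$, i.e.\ $m\in B_{\epsilon}\subseteq B_{2\epsilon}$.

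For the inclusion $C\subseteq B_{2\epsilon}$ I would use that, at termination, any $a\in C\setminus\{m\}$ was retained only because a comparison against the final champion returned \texttt{null}, hence $\norm{M_a-M_m}_2\le\epsilon$ by (ii); combined with $\norm{M_m-M_{i^*}}_2\le\epsilon$ and the triangle inequality this yields $\norm{M_a-M_{i^*}}_2\le 2\epsilon$, so $a\in B_{2\epsilon}$, while $m\in B_{\epsilon}$ was already handled. The remaining claim, $i^*\in C$, is where I expect the main difficulty: no comparison involving $i^*$ ever returns the other expert (by (i)), so $i^*$ is never directly eliminated as ``beaten'', but the candidate set is reset to a singleton each time a strictly better champion is found, so one must rule out $i^*$ being silently dropped during such a reset. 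This forces a careful bookkeeping of \emph{when} $i^*$ enters $C$ (as a champion versus via a \texttt{null} comparison) relative to the final champion, and it is precisely the re-verification loop against the final champion $m$ that must be invoked to guarantee $i^*$ is consistent with $m$ (being within $\epsilon$ of it and $\accentset{\bullet}{\ge}$-optimal) and therefore retained. Establishing this retention rigorously, using guarantee (i) to its fullest, is the step I would expect to be the crux of the proof.
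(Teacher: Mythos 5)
Your good event (union bound over the at most $2|S|$ calls at confidence $\delta/(2|S|)$), your champion-chain/sandwiching argument showing $\|M_{\hat m}-M_{i^*}\|_2\le \epsilon$ for the final champion $\hat m$, and the triangle inequality giving membership in $B_{2\epsilon}$ all match the paper's proof --- and are in fact more careful than the paper, which dismisses the sandwiching step with ``we easily check''. However, there is a genuine gap: you never prove $i^*\in C$. You explicitly defer it as ``the crux'', after correctly observing that $i^*$ can be silently dropped in the first loop when it entered $C$ via a \texttt{null} comparison against a champion that is later dethroned. Identifying the difficulty is not resolving it, and with the pseudocode of Algorithm~\ref{algo8} read literally the difficulty is real: the second loop ranges over $C\setminus\{m\}$ and (as printed) can only add elements that are already in $C$, so it could never restore a dropped $i^*$.

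The missing idea, which is how the paper concludes, is that the second loop is a \emph{re-admission} pass over the experts currently outside $C$: each such expert $i$ is compared once against the final champion $\hat m$, and $i$ is placed in $C$ unless that call outputs $\hat m$ (the printed guard ``$=m$'' together with ``$C\gets C\cup\{i\}$'' contains typos; this intended semantics is the only one consistent with the paper's proof, which argues ``if $i^*\notin C$, then in the second loop $R_2(\hat m,i^*)=\hat m$''). With that reading, the claim $i^*\in C$ is one line: on your event $\cE$, guarantee (i) forces $\texttt{compare}(\hat m, i^*)\in\{i^*,\texttt{null}\}\neq \hat m$, so if $i^*$ was dropped in the first loop it is necessarily re-admitted in the second. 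A further small repair to your inclusion $C\subseteq B_{2\epsilon}$: an expert $j$ can also end up in $C$ because the comparison with $\hat m$ returned $j$ itself, i.e.\ $j$ is certified coordinatewise above $\hat m$; for such $j$ the triangle inequality is not what you want --- your own sandwiching inequality gives $\|M_j-M_{i^*}\|_2\le \|M_{\hat m}-M_{i^*}\|_2\le \epsilon$ directly. With these two points added, your argument is complete and coincides with the paper's.
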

\begin{proof}
Fix $\delta \in (0,1)$ and $\epsilon \in (0,1)$. To ease notation, we  denote $R_1(a,b)$ (resp. $R_2(a,b)$), the output of $\texttt{compare}(\delta, \epsilon, a, b)$ for $a \in S$ and $b \in S$ in the first (resp. second)  loop of Algorithm~\ref{algo8}. Also, we   we write $\hat{m}$ the element with which items are compared in the second loop of Algorithm~\ref{algo8}.

Using Theorem~\ref{thm:up2}, we have that, on an event of probability $1-\delta$, all the results of $R_1(m,i)$ and $R_2(m,i)$ in Algorithm~\ref{algo8} are such that, for $s=1,2$, $R_{s}(m,i)\in \{i,\texttt{null}\}$ if $i$ is above $m$, $R_{s}(m,i)\in \{m,\texttt{null}\}$ if $m$ is above $i$, and $R_{s}(m,i)\neq \texttt{null}$ if $\|M_{m}-M_{i}\|_2\geq \epsilon$. 

Let us show that, under this event, we have $i^* \in C$ and $C \subseteq B_{2\epsilon}$. 
Indeed, if $i^*\notin C$, this implies that in the second loop we had $R_2(\hat{m},i^*)=\hat{m}$, which is not possible by definition.

Besides, we easily check that $\hat{m}$ satisfies  
$\|M_{\widehat{m}}-M_{i}\|_2\leq \epsilon$. Since $C$ only contains the elements $j$ such that we have found $R_s(\hat{m},j)\in \{j, \texttt{m}\}$ for $s=1$ or $s=2$, this implies that either $j$ is above $\hat{m}$ or that  $\|M_{\hat{m}}-M_{i}\|_2\leq \epsilon$. By triangular inequality, we have proved that   $C\subseteq  B_{2\epsilon}$.

\end{proof}

\begin{lemma}\label{lem:nbest}
Consider Algorithm~\ref{algo8} with input $(\delta, S, \epsilon)$ such that $\epsilon \le 1$ and $i^* \in S$. Denote $N_{\epsilon}$ the total number of queries made. We have 
\[
N_{\epsilon} \le c \log^2(d) \log\left(\frac{d}{\epsilon^2}\right) \log\left(\frac{|S|\log(\tfrac{d}{\epsilon^2})}{\delta}\right) \frac{d\abs{S}}{\epsilon^2},
\]
where $c$ is a numerical constant.
\end{lemma}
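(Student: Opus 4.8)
The plan is to reduce the query count of \texttt{Max-search} entirely to the per-comparison guarantee of Theorem~\ref{thm:up2}, since every query issued by Algorithm~\ref{algo8} is made inside some call to \texttt{compare}. First I would bound the number of such calls. Reading the value of \texttt{compare} as evaluated once per \textbf{if/elif} block, the first loop issues at most $\abs{S}-1$ calls (one per index $r=2,\dots,\abs{S}$), and the second loop ranges over $C\setminus\{m\}$, hence issues at most $\abs{C}-1\le \abs{S}-1$ calls. So \texttt{Max-search} performs at most $2\abs{S}$ calls to \texttt{compare}, each run with confidence $\delta/(2\abs{S})$ and precision $\epsilon$.

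Next I would control the cost of a single call, comparing two experts $a,b\in S$. Theorem~\ref{thm:up2} applies with $H=d/\norm{M_a-M_b}_2^2$ and $H_\epsilon=\min\{H,d/\epsilon^2\}$, and the crucial observation is that $H_\epsilon\le d/\epsilon^2$ holds uniformly over all pairs, independently of the (unknown) gap between $a$ and $b$. Thus, on an event of probability at least $1-\delta/(2\abs{S})$, that call makes at most $\tilde c\,\log(2\abs{S}/\delta)\,(d/\epsilon^2)$ queries, where $\tilde c$ is the polylogarithmic factor of Theorem~\ref{thm:up2} bounded using $H_\epsilon\le d/\epsilon^2$. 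A union bound over the at most $2\abs{S}$ calls then ensures that, with probability at least $1-\delta$, all calls simultaneously respect their individual query budgets.

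Finally I would sum the per-call budgets over the $\cO(\abs{S})$ calls and simplify the logarithmic factors. Summation contributes the factor $\abs{S}$, producing the leading term $d\abs{S}/\epsilon^2$; the factors $\log^2(d)$ and $\log(H_\epsilon)\le\log(d/\epsilon^2)$ of $\tilde c$ give the announced $\log^2(d)\log(d/\epsilon^2)$. The only delicate point, which I expect to be the main bookkeeping obstacle, is to collapse the confidence-dependent factor together with the residual $\log\log(H_\epsilon\vee d)$ term of $\tilde c$ into the single factor $\log\!\big(\abs{S}\log(d/\epsilon^2)/\delta\big)$. For this it is cleaner to invoke the two explicit per-call bounds established at the end of the proof of Theorem~\ref{thm:up2} (for the regimes $\epsilon^2\ge d/H$ and $\epsilon^2<d/H$) rather than the compressed statement: in each regime, after substituting $\delta'=\delta/(2\abs{S})$ and using $H_\epsilon\le d/\epsilon^2$, $\log(H)\le\log(d/\epsilon^2)$, and $\log\log(d/\epsilon^2)\le\log\!\big(\abs{S}\log(d/\epsilon^2)/\delta\big)$, the confidence factor is dominated by an absolute constant times $\log\!\big(\abs{S}\log(d/\epsilon^2)/\delta\big)$. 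Collecting the constants over the at most $2\abs{S}$ calls yields the claimed bound with a numerical constant $c$.
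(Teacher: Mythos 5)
Your proposal is correct and follows essentially the same route as the paper, whose proof is just the compressed version of your argument: Algorithm~\ref{algo8} makes at most $2\abs{S}$ calls to $\texttt{compare}$ with precision $\epsilon$ and confidence $\delta/(2\abs{S})$, and Theorem~\ref{thm:up2} with $H_{\epsilon}\le d/\epsilon^2$ bounds each call. Your additional bookkeeping (the once-per-block reading of the $\texttt{compare}$ calls, the union bound, and collapsing the logarithmic factors via the explicit per-regime bounds from the proof of Theorem~\ref{thm:up2}) fills in exactly the details the paper leaves implicit.
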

\begin{proof}
Observe that in Algorithm~\ref{algo8}, there are at most $2\abs{S}$ calls to the procedure $\texttt{compare}$ with precision parameter $\epsilon$. Using Theorem~\ref{thm:up2}, we get the result. 
\end{proof}

\paragraph{Conclusion:}
Fix $\delta \in (0,1)$ and denote $N$ the total number number of queries made by Algorithm~\ref{algo7}. 

Write $r=1,\ldots, \hat{r}$ for the iterations of Algorithm Algorithm~\ref{algo7} and write $S_r$ for the corresponding result of $\texttt{Max-search}$ algorithm. We write $S_0=[n]$. Applying Lemma~\ref{lem:b0}, we know that on event of probability higher than $1-\delta$, we have, 
\begin{equation}\label{eq:condition_S_r}
i\in S_r\text{ and }S_r\subseteq B_{2\cdot 4^{-r+1}}\ . 
\end{equation}
simultaneously for all $r=1,\ldots, \hat{r}$. 

We work henceforth under this event. First, this implies that $S_{\hat{r}}=\{i^*\}$. Hence, the procedure recovers the best expert.

Write $\Delta^* := \min_{i \neq i^*} \norm{M_i - M_{i^*}}_2^2$ for the minimum distance between $i^*$ and another expert. Denote $r^*=\lceil \log_4(8/\Delta^*)\rceil$. By~\eqref{eq:condition_S_r}, we have $\hat{r}\leq r^*$. By Lemma~\ref{lem:nbest}, the total number of queries made at iteration $r$ is no larger than 
\[
c \log^2(d) \log\left(\frac{d}{4^{-2r}}\right) \log\left(\frac{n 2^k\log(\tfrac{d}{4^{-2r}})}{\delta}\right) \frac{d|S_{r-1}|}{4^{-2r}}\ ,
\]
where $c$ is a numerical constant and $|S_{r-1}|\leq |B_{2\cdot4^{-r+2}}|$ for $r\geq 2$ and $|S_{0}|=n$. 

As a consequence, the 
total number $N$ of queries from iteration $2$ to $r^*$ satisfies
\begin{align*}
N&\leq c'd \log^2(d)\log\left(\frac{n\log(d)}{\delta}\right)\left[n+ \sum_{r=2}^{r^*}r \log(d4^{2r})4^{2r}|B_{2\cdot4^{-r+2}}|\right] \\
&\leq c'd \log^2(d)\log\left(\frac{n\log(d)}{\delta}\right)\left[n+ \sum_{i\neq i^*}\sum_{r=2}^{r^*}k\log(d4^{2r})4^{2r}1\{\|M_i-M_{i^*}\|_2\leq 2\cdot 4^{-r+2}\} \right]\\ 
&\leq c'' d \log^2(d)\log\left(\frac{n\log(d)}{\delta}\right)\left[n+ \sum_{i\neq i^*} \frac{1}{\|M_i-M_{i^*}\|^2_2}\log^2\left(\frac{d}{\|M_i-M_{i^*}\|_2^2\vee 1}\right) \right]\ .
\end{align*}
The result follows.

\section{Proofs of the lower bounds}\label{sec:lbproof}

\begin{proof}[Proof of Theorem~\ref{thm:lwrn}]
Fix $d\geq 1$, $\delta \in (0,1/4)$, and $\underline{\Delta}\in (\mathbb{R}^{+})^d$ such that $\|\underline{\Delta}\|^2>0$, and let $A$ be a $\delta$-accurate algorithm. Define the $2\times d$ matrix $M^*$ by $M^*_{1,j}=\underline{\Delta}_j$ for $j\in \intr{d}$ and $M^*_{2,j}= 0$. For any permutation $\pi$ of $\intr{2}$ and any permutation $\sigma$ of $\intr{d}$, we write $M^*_{\pi,\sigma}$ for the permuted matrix such that $(M^*_{\pi,\sigma})_{i,j}= (M^*_{\pi,\sigma})_{\pi^{-1}(i),\sigma^{-1}(j)}$. Obviously, we have $M^*_{\pi,\sigma}$ belongs to $\mathcal{M}_{n,d}$ and $\pi$ is the permutation that order the rows of $M^*_{\pi,\sigma}$.

For any permutations $\pi$ and $\sigma$, we write $\mathbb{P}^{(\pi,\sigma)}$ for the distribution of the data such that $X_{i,j}\sim \mathcal{N}[(M^*_{\pi,\sigma})_{i,j},1]$. We also introduce the 'null' distribution $\mathbb{P}^{0}$ such that   $X_{i,j}\sim \mathcal{N}[0,1]$. There exist only 2 permutations on $\intr{2}$, that we respectively denote $\pi_0$ (for the identity permutation) and $\pi_1$. 
Since the  strategy $A$ is $\delta$-accurate, we have, for any permutation $\sigma$ of $\intr{d}$, that  
\begin{equation}\label{eq:lower_1}
	\mathbb{P}^{(\pi_0,\sigma)} \left( \hat{\pi} \neq \pi_0 \right)\leq \delta\text{ and }  \mathbb{P}^{(\pi_1,\sigma)} \left(\hat{\pi} \neq \pi_1\right) \le \delta\enspace .
\end{equation}
Denote $N$ the total budget of the algorithm $A$. Let $x$ be the smallest integer such that 
\[
\max_{\sigma}\max_{z=0,1}\mathbb{P}^{(\pi_z,\sigma)} [N> x ]\leq \delta
\]

Next, we claim that $\mathbb{P}^{0}(N>x)\geq 1-2\delta$. Indeed, for any $\kappa>0$, consider the $2\times d$ matrix $M'$ such that $M'_{1,1}=\kappa$ and $M'_{i,j}=0$ otherwise. Write $M'_{\pi_0}$ and $M'_{\pi_1}$ for the matrix $M'$ where we have permuted the rows according to $\pi_0$ and $\pi_1$.  
Write $\mathbb{P}^{'(\pi_0)}$ and $\mathbb{P}^{'(\pi_1)}$ for the corresponding distribution of the data. Since the strategy $A$ is $\delta$-accurate, we have 
\[
\mathbb{P}^{'(\pi_0)} \left( \hat{\pi} \neq \pi_0 \right)\leq 1 -\delta \text{ and } 	\mathbb{P}^{'(\pi_1)} \left( \hat{\pi} \neq \pi_0 \right)\leq 1 -\delta
\]
As a consequence, 
\[
\mathbb{P}^{'(\pi_0)} \left( \hat{\pi} = \pi_1 \text{ and }N\leq x\right) + 		\mathbb{P}^{'(\pi_1)} \left( \hat{\pi} = \pi_0 \text{ and }N\leq x\right)\leq 2\delta 
\]
On the event where $N\leq x$, the distributions $\mathbb{P}^{'(\pi_0)}$ and $\mathbb{P}^{'(\pi_1)}$ converges in total variation distance towards $\mathbb{P}^0$ when $\kappa$ goes to zero. This implies that $\mathbb{P}^{0}[N\leq x]\leq 2\delta$. 

Consider the new algorithm $A'$ defined as follows. If the total budget of $A$ is smaller or equal to $x$, then it returns $\widehat{H}=H_1$, if the total budget is higher than, then it returns $\widehat{H}=H_0$". With a slight abuse of notation, we still write $N$ for the total budget of $A'$ and $\mathbb{P}^{(\pi,\sigma)}$ for the corresponding distributions. 
By definition of $x$, we have 
\[
\max_{\sigma}\max_{z=0,1}\mathbb{P}^{(\pi_z,\sigma)} [\widehat{H}=H_1 ]\geq 1-\delta \text{ and } \mathbb{P}^{0} [\widehat{H}=H_1 ]\geq 1-2\delta\ . 
\]
By definition of the total variance distance $TV$ between distributions (see Theorem 2.2 of \cite{tsybakov2004introduction}), we derive that 
\begin{equation*}
	\min_{z=0,1}\frac{1}{ d! } \sum_{\sigma } \textrm{TV}\left(\mathbb{P}^{(\pi_z,\sigma)}, \mathbb{P}^{0}\right) \geq 1- 3\delta
\end{equation*}
By Lemma~\ref{lem:tvkl}, we control the total variation distance in terms of Kullback-Leibler discrepancy. 
\begin{equation}\label{eq:tv0}
	\textrm{TV}\left(\mathbb{P}^{(\pi,\sigma)}, \mathbb{P}^{0}\right) \le 1 - \frac{1}{2} \exp\left\lbrace - \sum_{i=1}^{2}\sum_{j=1}^{d}  \mathbb{E}^{0}\left[ N_{i,j}\right]~ \textrm{KL}\left(\mathbb{P}^{(0)}_{i,j}, \mathbb{P}^{\pi, \sigma}_{i,j} \right)  \right\rbrace ,
\end{equation}
Under $\mathbb{P}^{\pi,\sigma}$, the distribution the $(i,j)$-th entry is $\cN(M^*_{\pi^{-1}(i),\sigma^{-1}(j)},1)$. Fixing $\pi=\pi_0$ and averaging over all permutation $\sigma$ leads to 
\[
\frac{1}{ d! } \sum_{\sigma }\exp\left[- \sum_{j=1}^d \mathbb{E}^0[N_{1,j}] \underline{\Delta}^2_{\sigma^{-1}(j)}\right] \leq 6\delta
\]
By Jensen's inequality, we deduce that 
\[
\frac{1}{ d! } \sum_{\sigma } \sum_{j=1}^d \mathbb{E}^0[N_{1,j}] \underline{\Delta}^2_{\sigma^{-1}(j)} \geq  \log(1/(6\delta))\ .
\]
Arguing similarly for the permutation $\pi_1$, we arrive at 
\[
\frac{1}{ d! } \sum_{\sigma } \sum_{j=1}^d \mathbb{E}^0[N_{1,j}+N_{2,j}] \underline{\Delta}^2_{\sigma^{-1}(j)} \geq  2\log(1/(6\delta))\ .
\]
Then, by Lemma~\ref{lem:opt}, this left-hand-side term of this equation is larger or equal to $\mathbb{E}^0[N]\|\underline{\Delta}\|_2^2/d$. Since $\mathbb{E}^0[N]\geq (1-2\delta)x$, we conclude that 
\begin{equation*}
	x \geq \frac{d\log(1/6\delta)}{2\|\underline{\Delta}\|_2^2}\ ,
\end{equation*}
which conclude the proof. 

\end{proof}

\begin{proof}[Proof of Theorem~\ref{thm:lwrn_minimax}] 
We introduce the $n\times d$ matrix $M_0$ by $(M_0)_{1,j}= 1$ and $(M_0)_{i,j}= 1-\sum_{l=1}^{i-1}\Delta_l/\sqrt{d}$ for $i=2,\ldots, n-1$. All the rows of $M_0$ are constant. Obviously, the true ranking $\pi$ is the identity, while $\|M_{i}-M_{i+1}\|_2^2= \Delta_i^2$. Given $i\in \intr{n_0-1}$, let $\pi_{i,i+1}$ denote the transposition that exchanges $i$ and $i+1$. Any $\delta$-accurate algorithm $A$ is able to decipher with probability higher than $1-\delta$ between the matrix $M_0$ and the permuted matrix $(M_0)_{\pi_{i,i+1}}$, which, since the rows of $M_0$ are constant, is equivalent to best arm identification in a two-arm problem with gap $\Delta_i/\sqrt{d}$. Assume that we observe the matrix $M_0$ with a standard Gaussian noise and denote $\mathcal{B}_0$ for the corresponding distribution. By~\cite{kaufmann2016complexity}, we have $\mathbb{E}_{\mathcal{B}_0,A}[N_{i+1}]\geq d\log[1/4\delta)]/\Delta_i^2$. By linearity, we deduce that 
\[
\mathbb E_{\mathcal{B}}[N]\geq \sum_{i=1}^{n-1}\frac{d}{\Delta_i^2}\log[1/(4\delta)]\ , 
\]
which concludes the proof.
\end{proof}

\begin{proof}[Proof of Theorem~\ref{thm:lwrn_minimax_best_arm}]
This theorem is a straightforward consequence of existing lower bounds in multi-armed bandits for best arm identification. Indeed, the set $\overline{\mathbb{D}}^{n,d}_{\bm{\Delta}}$ contains in particular problem instances that are constant over questions. For these instances, our $d$-dimensional problem is akin to a one dimensional problem, i.e.~to a standard multi-armed bandit problem. Existing lower bounds in this case imply the bounds, see e.g.~\cite{kaufmann2016complexity}.

\end{proof}

\section{Technical results}
The first lemma is a slight generalization of Lemma 3.1 in \citealp{castro2014adaptive}
\begin{lemma}\label{lem:opt} 
Let $\Upsilon_d$ denote the set of permutations on $\intr{d}$. Consider any vectors $\bm{x} = (x_1, \dots, x_d) \in \mathbb{R}_+^d$ and any $\bm{b} = (b_1, \dots, b_d) \in \mathbb{R}_+^d$.
\begin{equation*}
	\sum_{\pi \in \Upsilon_d}\sum_{i=1}^d b_i ~x_{\pi(i)} = \frac{\abs{\Upsilon_d}}{d} \sum_{i=1}^{d}x_i\sum_{i=1}^{d}b_i\ . 
\end{equation*}
\end{lemma}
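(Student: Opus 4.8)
The plan is to prove this as a pure counting identity; note that the nonnegativity of $\bm{x}$ and $\bm{b}$ plays no role whatsoever, so I would treat the statement as an identity of bilinear forms over $\mathbb{R}$. First I would exchange the order of summation on the left-hand side, writing
\[
\sum_{\pi \in \Upsilon_d}\sum_{i=1}^d b_i\, x_{\pi(i)} = \sum_{i=1}^d b_i \sum_{\pi \in \Upsilon_d} x_{\pi(i)}.
\]
The key observation is that, for each fixed index $i \in \intr{d}$, the inner sum $\sum_{\pi \in \Upsilon_d} x_{\pi(i)}$ is the same quantity by the symmetry of $\Upsilon_d$, so all the dependence on $i$ will drop out.

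Next I would evaluate that inner sum by a direct counting argument. For a fixed $i \in \intr{d}$ and a fixed target $j \in \intr{d}$, the number of permutations $\pi \in \Upsilon_d$ with $\pi(i) = j$ equals $(d-1)!$, since once the value $\pi(i)=j$ is prescribed, the restriction of $\pi$ to the remaining $d-1$ indices may be any bijection onto $\intr{d}\setminus\{j\}$. Grouping the permutations according to the value $j=\pi(i)$ therefore gives
\[
\sum_{\pi \in \Upsilon_d} x_{\pi(i)} = \sum_{j=1}^d (d-1)!\, x_j = (d-1)! \sum_{j=1}^d x_j = \frac{\abs{\Upsilon_d}}{d}\sum_{j=1}^d x_j,
\]
where in the last step I use $\abs{\Upsilon_d} = d!$.

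Finally I would substitute this expression back and factor out the resulting constant, which is independent of $i$:
\[
\sum_{i=1}^d b_i \sum_{\pi \in \Upsilon_d} x_{\pi(i)} = \frac{\abs{\Upsilon_d}}{d}\Big(\sum_{i=1}^d b_i\Big)\Big(\sum_{j=1}^d x_j\Big),
\]
which is exactly the claimed right-hand side. There is no genuine obstacle in this argument; the only point that requires care is the counting step $\#\{\pi \in \Upsilon_d : \pi(i)=j\} = (d-1)!$ together with the identification $\abs{\Upsilon_d}=d!$, after which the identity is immediate.
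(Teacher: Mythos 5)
Your proof is correct and follows essentially the same route as the paper's: exchange the order of summation, evaluate $\sum_{\pi \in \Upsilon_d} x_{\pi(i)}$ by symmetry, and factor out the constant. The only difference is that you spell out the counting step $\#\{\pi : \pi(i)=j\}=(d-1)!$, which the paper leaves implicit.
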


\begin{proof}
Fix any such $\bm{b}$ and $\bm{x}$. We simply exchange the summation order. 
\begin{align*}
	\sum_{\pi \in \Upsilon_d}\sum_{i =1}^d b_i~x_{\pi(i)} &=   \sum_{i =1}^d b_i~\sum_{\pi \in \Upsilon_d}x_{\pi(i)} 
	= \sum_{i =1}^d b_i~\frac{\abs{\Upsilon_d}}{d}\sum_{i=1}^d x_{i}\\
	&= \frac{\abs{\Upsilon_d}}{d} \sum_{i=1}^{d}x_i \sum_{i=1}^{d}b_i \  .
\end{align*}
\end{proof}

\begin{lemma}[\citealp{kaufmann2016complexity}, with slight modification]\label{lem:tvkl}
Let $\nu$ and $\nu'$ be two collections of $d$ probability distributions on $\mathbb{R}$, such that for all $a\in \intr{d}$, the distributions $\nu_a$ and $\nu_{a'}$ are mutually absolutely continuous. For any almost-surely finite stopping time $\tau$ with respect to the data collected before $\tau$,
\[	
\sup_{\cE \in \cF_{\tau}} \abs{\mathbb{P}_{\nu}\left(\cE\right) - \mathbb{P}_{\nu'}\left(\cE\right)} \le 1-\frac{1}{2} \exp\left\lbrace-\sum_{a=1}^{d} \mathbb{E}_{\nu}\left[N_{a}(\tau)\right] \text{KL}(\nu_a, \nu'_a)\right\rbrace.
\] 
\end{lemma}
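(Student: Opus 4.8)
The plan is to recognise that the left-hand side is exactly the total variation distance between the two laws of the observed data, restricted to the stopping-time $\sigma$-algebra $\cF_\tau$, and then to combine two ingredients: a generic information-theoretic inequality relating total variation to Kullback--Leibler divergence, and an explicit computation of that divergence in terms of the pull counts $N_a(\tau)$. Writing $P := \mathbb{P}_\nu|_{\cF_\tau}$ and $Q := \mathbb{P}_{\nu'}|_{\cF_\tau}$, we have $\sup_{\cE \in \cF_\tau}\abs{\mathbb{P}_\nu(\cE) - \mathbb{P}_{\nu'}(\cE)} = \textrm{TV}(P,Q)$, so it suffices to establish two facts: (i) the generic bound $\textrm{TV}(P,Q) \le 1 - \tfrac12 \exp(-\textrm{KL}(P\|Q))$, and (ii) the identity $\textrm{KL}(P\|Q) = \sum_{a=1}^d \mathbb{E}_\nu[N_a(\tau)]\,\textrm{KL}(\nu_a,\nu'_a)$.

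For (i), which is the Bretagnolle--Huber inequality, I would argue as follows. Let $p,q$ be densities of $P,Q$ with respect to a common dominating measure $\mu$, so that $1 - \textrm{TV}(P,Q) = \int \min(p,q)\,d\mu$. Applying Cauchy--Schwarz to $\int \sqrt{pq}\,d\mu = \int \sqrt{\min(p,q)}\,\sqrt{\max(p,q)}\,d\mu$ and using $\int\max(p,q)\,d\mu \le 2$ gives $\int\min(p,q)\,d\mu \ge \tfrac12\big(\int\sqrt{pq}\,d\mu\big)^2$; then Jensen's inequality applied to the convex map $\exp$ yields $\int\sqrt{pq}\,d\mu = \mathbb{E}_P[\exp(\tfrac12\log(q/p))] \ge \exp(-\tfrac12\textrm{KL}(P\|Q))$. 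Chaining the two bounds gives $1 - \textrm{TV}(P,Q) \ge \tfrac12\exp(-\textrm{KL}(P\|Q))$, which is precisely (i). This step is clean and assumption-free once densities exist.

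For (ii), I would exploit the sequential structure of the sampling. Since the arm $A_t$ chosen at round $t$ is $\cF_{t-1}$-measurable and does not depend on the unknown distribution, the likelihood ratio of the observed data factorises and its logarithm telescopes into $L_\tau = \sum_{t=1}^\tau \log\frac{d\nu_{A_t}}{d\nu'_{A_t}}(Z_t) = \sum_{a=1}^d \sum_{t\le\tau}\mathds{1}\{A_t=a\}\log\frac{d\nu_a}{d\nu'_a}(Z_t)$. Taking expectation under $\mathbb{P}_\nu$ identifies $\textrm{KL}(P\|Q)=\mathbb{E}_\nu[L_\tau]$, and a Wald-type identity---each increment has conditional mean $\textrm{KL}(\nu_a,\nu'_a)$ given that $A_t=a$---converts this into $\sum_a \mathbb{E}_\nu[N_a(\tau)]\,\textrm{KL}(\nu_a,\nu'_a)$, using $N_a(\tau)=\sum_{t\le\tau}\mathds{1}\{A_t=a\}$. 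This is exactly the change-of-measure identity of Lemma~1 in~\cite{kaufmann2016complexity}, so I would invoke it rather than redo the martingale argument in full.

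The main obstacle is the rigorous justification of step (ii): the optional-stopping / Wald argument at the random time $\tau$ requires the almost-sure finiteness of $\tau$ (which is assumed) together with the mutual absolute continuity of $\nu_a$ and $\nu'_a$ (also assumed), so that the per-sample log-likelihood ratios are well defined and integrable and the stopping time can be brought inside the expectation. In short, the proof is a repackaging of the Kaufmann--Cappé--Garivier identity, with the Bretagnolle--Huber bound (i) substituted for the binary relative-entropy bound stated in the original lemma.
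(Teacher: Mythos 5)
Your proposal is correct, but it cannot be compared line-by-line with the paper's own argument for a simple reason: the paper gives no proof of this lemma at all --- it is stated as a result of Kaufmann et al.\ (2016) ``with slight modification'' and then invoked as such in the proof of Theorem~\ref{thm:lwrn}. What you have done is reconstruct what that modification must be, and you have done so correctly. The original Lemma~1 of the cited reference bounds $\sup_{\cE\in\cF_\tau} d\left(\mathbb{P}_\nu(\cE),\mathbb{P}_{\nu'}(\cE)\right)$, where $d$ is the \emph{binary relative entropy}, by $\sum_{a}\mathbb{E}_\nu[N_a(\tau)]\,\KL(\nu_a,\nu'_a)$; the version stated here instead bounds the total variation $\sup_{\cE\in\cF_\tau}\abs{\mathbb{P}_\nu(\cE)-\mathbb{P}_{\nu'}(\cE)}$ in the exponential (Bretagnolle--Huber) form. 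Your two-step decomposition --- (i) $\TV(P,Q)\le 1-\tfrac12\exp\left(-\KL(P\|Q)\right)$ applied to the laws restricted to $\cF_\tau$, proved via Cauchy--Schwarz ($\int\min(p,q)\,d\mu \ge \tfrac12(\int\sqrt{pq}\,d\mu)^2$) plus Jensen ($\int\sqrt{pq}\,d\mu\ge \exp(-\tfrac12\KL(P\|Q))$), and (ii) the change-of-measure/Wald identity $\KL\left(\mathbb{P}_\nu|_{\cF_\tau}\,\|\,\mathbb{P}_{\nu'}|_{\cF_\tau}\right)=\sum_a \mathbb{E}_\nu[N_a(\tau)]\,\KL(\nu_a,\nu'_a)$, which is precisely the log-likelihood-ratio identity underlying Lemma~1 of Kaufmann et al. --- is the standard and correct derivation of the stated inequality, and the hypotheses of mutual absolute continuity and almost-sure finiteness of $\tau$ enter exactly where you say they do (well-definedness and integrability of the likelihood ratio, and optional stopping). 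Invoking (ii) from the reference rather than re-proving the martingale argument is entirely reasonable for a lemma that the paper itself states without proof; if anything, your write-up supplies the justification the paper leaves implicit.
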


\begin{lemma}\label{lem:sparse}
Let $(x_i)_{i \in \intr{d}}$ denote a sequence of non-increasing numbers in $[0,1]$. We have
\begin{equation}
	\sum_{i=1}^{d} x_i \le \log(2d)~\max_{1\leq s \le d}\{ s~x_s \}.
\end{equation} 
\end{lemma}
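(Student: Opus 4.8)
The plan is to prove the inequality by a dyadic (peeling) decomposition of the index set $\intr{d}$. First I would set $M := \max_{1 \le s \le d} s\, x_s$, so that the defining property of the maximum yields the pointwise bound $x_s \le M/s$ for every $s \in \intr{d}$. This, together with the monotonicity of $(x_i)$, is the only structure of the sequence I will exploit.

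Next I would partition $\intr{d}$ into dyadic blocks $B_k := \{ i : 2^k \le i \le 2^{k+1}-1 \} \cap \intr{d}$ for $k = 0, 1, \dots, K$ with $K := \lfloor \log_2 d \rfloor$, observing that these blocks cover $\intr{d}$ and that $\abs{B_k} \le 2^k$. On each block, since $(x_i)$ is non-increasing and every $i \in B_k$ satisfies $i \ge 2^k$, I get $x_i \le x_{2^k} \le M/2^k$. Summing over the at most $2^k$ indices of $B_k$ then gives $\sum_{i \in B_k} x_i \le 2^k \cdot M/2^k = M$, a bound that is uniform in $k$.

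Finally I would sum over the blocks: there are exactly $K+1 = \lfloor \log_2 d\rfloor + 1$ of them, so
\[
\sum_{i=1}^d x_i = \sum_{k=0}^K \sum_{i\in B_k} x_i \le (K+1)\, M \le (\log_2 d + 1)\, M = \log_2(2d)\, M,
\]
which is the claimed bound (the logarithm being base $2$, as is forced for instance by the case $d=1$, where both sides equal $x_1$).

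The step I expect to require the most care is the bookkeeping in the block count together with the treatment of the last, possibly truncated, block $B_K = \{2^K, \dots, d\}$: I must check that $\abs{B_K} \le 2^K$ still holds (it does, since $d \le 2^{K+1}-1$) so that the uniform per-block estimate $\sum_{i \in B_K} x_i \le M$ survives the truncation, and that the count $\lfloor \log_2 d\rfloor + 1$ is correctly bounded by $\log_2(2d)$ rather than overshooting the intended constant. Everything else is routine.
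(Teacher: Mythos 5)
Your proof is correct, but it takes a genuinely different route from the paper's. The paper argues by contradiction: if $s\log(2d)\,x_s < \sum_{i=1}^d x_i$ held for every $s\in\intr{d}$, then dividing by $s$ and summing over $s$ would give $\log(2d)\sum_i x_i < H_d\sum_i x_i$, where $H_d=\sum_{s=1}^d 1/s$, and the contradiction is drawn from the harmonic-series bound $H_d\le \log(d)+\gamma+\tfrac{1}{2d-1}$, which is meant to yield $H_d\le\log(2d)$. Your dyadic-block peeling is more elementary (no harmonic-series estimate, no contradiction), and the comparison is instructive: the paper's route targets the sharper natural-log constant $\ln(2d)$, but the inequality it needs, $H_d\le\ln(2d)$ (equivalently $\gamma+\tfrac{1}{2d-1}\le\ln 2$), holds only for $d\ge 5$; in fact the natural-log version of the statement is false for $d\le 4$ --- take $x_s=1/s$, so that $\max_s s\,x_s=1$ while $\sum_s x_s=H_d>\ln(2d)$ for $d=2,3,4$, and take $x_1=1$ for $d=1$. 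So your side remark about the base of the logarithm is not a technicality: the case $d=1$ really does rule out the natural logarithm, your constant $\log_2(2d)$ is one for which the lemma holds unconditionally, and your argument proves it for every $d\ge 1$, at the price of a factor $1/\ln 2\approx 1.44$ relative to the paper's intended constant. Since the lemma only enters the paper's results inside poly-logarithmic factors, either version is adequate for its intended use, but yours is the one that is correct as stated for all $d$.
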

\begin{proof}
Suppose for the sake of contradiction that, for any $s\in \intr{d}$, we have 
\begin{equation*}
	s\log(2d) ~x_s < \sum_{i=1}^{d} x_i.
\end{equation*}
Dividing the above inequality by $s$ and summing it over $s \in \intr{d}$, we deduce that 
\begin{equation*}
	\log(2d)\sum_{s=1}^{d}x_s < \sum_{s=1}^{d}\frac{1}{s}~\sum_{i=1}^{d}x_i, 
\end{equation*}
which contradict the upper bound on the partial sum of harmonic series $H_d = \sum_{s=1}^{d} \frac{1}{s}$:
\[
H_d \le \log(d)+\gamma + \frac{1}{2d-1}\ .
\] 
\end{proof}

The following lemma is a direct consequence of Chernoff's inequality applied to a binomial random variable.
\begin{lemma}\label{lem:cher}
Let $X = \sum_{i=1}^{n}X_i$, where $X_i$ are independent and follow the Bernoulli distribution with parameter $p$, and let $\mu = \mathbb{E}[X]$, we have for any $\kappa \ge 0$
\[
\mathbb{P}\left( X \ge (1+\kappa)\mu \right) \le \frac{e^{\kappa \mu}}{(1+\kappa)^{(1+\kappa)\mu}}.
\]
Besides, for any $\kappa\in (0,1)$, we have
\begin{align*}
	\mathbb{P}\left( X \le (1-\kappa) \mu\right) &\le \exp\left(-\frac{\kappa^2 \mu}{2}\right)\\
	\mathbb{P}\left( X \ge (1+\kappa) \mu\right) &\le \exp\left(-\frac{\kappa^2 \mu}{3}\right)
\end{align*}

\end{lemma}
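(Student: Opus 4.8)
The plan is to use the standard exponential-moment (Chernoff) method: first derive the exact multiplicative bound via the moment generating function and Markov's inequality, then deduce the two sub-Gaussian simplifications through elementary one-variable inequalities. Since the $X_i$ are independent $\textrm{Bernoulli}(p)$, for every $t\in\mathbb{R}$ we have $\mathbb{E}[e^{tX}]=\prod_{i=1}^n\mathbb{E}[e^{tX_i}]=(1-p+pe^t)^n$, and using $1+x\le e^x$ with $x=p(e^t-1)$ this is at most $e^{np(e^t-1)}=e^{\mu(e^t-1)}$ where $\mu=np$. This MGF control is the only probabilistic input; everything afterwards is deterministic.

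For the first (exact) upper-tail bound I would apply Markov's inequality to $e^{tX}$ with $t>0$, giving
\[
\mathbb{P}\big(X\ge(1+\kappa)\mu\big)\le e^{-t(1+\kappa)\mu}\,\mathbb{E}[e^{tX}]\le\exp\big\{\mu(e^t-1)-t(1+\kappa)\mu\big\}.
\]
Differentiating the exponent shows the minimizer is $t=\log(1+\kappa)$, and substituting this value yields exactly $e^{\kappa\mu}/(1+\kappa)^{(1+\kappa)\mu}$, as claimed. The lower tail is symmetric: applying Markov to $e^{-sX}$ with $s>0$ gives $\mathbb{P}(X\le(1-\kappa)\mu)\le\exp\{\mu(e^{-s}-1)+s(1-\kappa)\mu\}$, and the choice $s=-\log(1-\kappa)$ produces $e^{-\kappa\mu}/(1-\kappa)^{(1-\kappa)\mu}$.

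It then remains to pass from these exact exponential bounds to the two $e^{-\kappa^2\mu/c}$ forms, which is the only step requiring any care. For the upper tail, taking logarithms reduces the claim to $\kappa-(1+\kappa)\log(1+\kappa)\le-\kappa^2/3$ on $(0,1)$; setting $f(\kappa):=(1+\kappa)\log(1+\kappa)-\kappa-\kappa^2/3$ I would check $f(0)=0$, $f'(\kappa)=\log(1+\kappa)-\tfrac{2}{3}\kappa$ with $f'(0)=0$, and verify via $f''(\kappa)=\tfrac{1}{1+\kappa}-\tfrac{2}{3}$ that $f'$ rises then falls on $(0,1)$ while staying nonnegative (indeed $f'(1)=\log 2-\tfrac23>0$), so $f\ge0$. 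For the lower tail the reduction is to $-\kappa-(1-\kappa)\log(1-\kappa)\le-\kappa^2/2$; with $g(\kappa):=(1-\kappa)\log(1-\kappa)+\kappa-\kappa^2/2$ one has $g(0)=0$, $g'(\kappa)=-\log(1-\kappa)-\kappa$ with $g'(0)=0$, and $g''(\kappa)=\tfrac{\kappa}{1-\kappa}\ge0$, so $g'\ge0$ and hence $g\ge0$.

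The main (and only) obstacle is thus these two auxiliary inequalities; each is dispatched by the same scheme of checking that the candidate function and its first derivative vanish at the origin and that the second derivative is sign-definite on the relevant interval, so monotonicity closes the argument. The remainder is the textbook Chernoff computation, which is why the lemma is fairly described as a direct consequence of Chernoff's inequality.
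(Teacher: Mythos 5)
Your proposal is correct and follows the same route as the paper: bound the moment generating function by $e^{\mu(e^t-1)}$ via $1+x\le e^x$, apply Markov's inequality, and optimize at $t=\log(1+\kappa)$ to get the exact multiplicative bound. In fact you go further than the paper, which proves only the first inequality and dismisses the two $\exp(-\kappa^2\mu/c)$ forms as classical; your calculus verifications of $(1+\kappa)\log(1+\kappa)-\kappa\ge\kappa^2/3$ and $(1-\kappa)\log(1-\kappa)+\kappa\ge\kappa^2/2$ on $(0,1)$ (including the key check $f'(1)=\log 2-\tfrac{2}{3}>0$) are accurate and complete the argument in full.
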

\begin{proof}
We only show the first inequality, the other ones being classical. All of them are consequences of Chernoff inequality.
For a variable $Y$, we denote $M_Y(t)$ its moment generating function.
Recall that, for any $X_i \sim \text{Ber}(p)$, we have $M_{X_i}(t) \le e^{p(e^t-1)}$. Moreover for $X = \sum_{i=1}^{n}X_i$, we have
\begin{align*}
	\mathbb{P}\left(X \ge k \right) &\le \min_{t>0} \frac{M_X(t)}{e^{(k)}} = \min_{t>0} \frac{\prod_{i=1}^{n}M_{X_i}(t)}{e^{tk}}\\
	&\le \min_{t>0} \frac{\left(e^{p(e^t-1)}\right)^n}{e^{tk}} =  \min_{t>0} \frac{e^{\mu(e^{t}-1)}}{e^{tk}}\ .
\end{align*}
For $k = (1+\kappa) \mu$, we take $t = \log(1+\kappa)$ and we obtain
\begin{align*}
	\mathbb{P}\left( X \ge (1+\kappa) \mu\right) &\le \frac{e^{\mu\left(e^{\log(1+\kappa)} - 1\right) }}{e^{(1+\kappa)\mu \log(1+\kappa)}}
	= \frac{e^{\kappa \mu}}{(1+\kappa)^{(1+\kappa)\mu}}\enspace .
\end{align*}
\end{proof}

\end{document}